\documentclass{article}

\usepackage[T1]{fontenc}
\usepackage[utf8]{inputenc}
\usepackage[american]{babel}
\usepackage[backend = bibtex, style = alphabetic, sorting = ynt, citestyle = authoryear]{biblatex} 
\addbibresource{BoostedForest.bib}

\DeclareCiteCommand{\cite}
  {\usebibmacro{prenote}}
  {\usebibmacro{citeindex}%
   \printtext[bibhyperref]{\usebibmacro{cite}}}
  {\multicitedelim}
  {\usebibmacro{postnote}}

\DeclareCiteCommand*{\cite}
  {\usebibmacro{prenote}}
  {\usebibmacro{citeindex}%
   \printtext[bibhyperref]{\usebibmacro{citeyear}}}
  {\multicitedelim}
  {\usebibmacro{postnote}}

\DeclareCiteCommand{\parencite}[\mkbibparens]
  {\usebibmacro{prenote}}
  {\usebibmacro{citeindex}%
    \printtext[bibhyperref]{\usebibmacro{cite}}}
  {\multicitedelim}
  {\usebibmacro{postnote}}

\DeclareCiteCommand*{\parencite}[\mkbibparens]
  {\usebibmacro{prenote}}
  {\usebibmacro{citeindex}%
    \printtext[bibhyperref]{\usebibmacro{citeyear}}}
  {\multicitedelim}
  {\usebibmacro{postnote}}

\DeclareCiteCommand{\footcite}[\mkbibfootnote]
  {\usebibmacro{prenote}}
  {\usebibmacro{citeindex}%
  \printtext[bibhyperref]{ \usebibmacro{cite}}}
  {\multicitedelim}
  {\usebibmacro{postnote}}

\DeclareCiteCommand{\footcitetext}[\mkbibfootnotetext]
  {\usebibmacro{prenote}}
  {\usebibmacro{citeindex}%
   \printtext[bibhyperref]{\usebibmacro{cite}}}
  {\multicitedelim}
  {\usebibmacro{postnote}}

\DeclareCiteCommand{\textcite}
  {\boolfalse{cbx:parens}}
  {\usebibmacro{citeindex}%
   \printtext[bibhyperref]{\usebibmacro{textcite}}}
  {\ifbool{cbx:parens}
     {\bibcloseparen\global\boolfalse{cbx:parens}}
     {}%
   \multicitedelim}
  {\usebibmacro{textcite:postnote}}


\usepackage[ruled]{algorithm2e}

\usepackage{amsmath}
\usepackage{latexsym}
\usepackage{amsthm}
\usepackage{amsfonts}
\usepackage{amssymb}
\usepackage{mathrsfs}
\usepackage{dsfont}
\usepackage{bbm}
\usepackage{graphicx}
\usepackage{longtable}
\usepackage{parskip}
\usepackage{enumerate}
\usepackage{undertilde}
\usepackage{hyperref}
\usepackage{wrapfig}
\usepackage{multirow}
\usepackage{hhline}
\usepackage{tabularx}
\usepackage{adjustbox}
\usepackage{pdflscape}
\usepackage{rotating}
\usepackage{array}
\usepackage{tikz}
\usepackage{float}
\usepackage{fancyvrb}
\usepackage{csquotes}
\usepackage{verbatim}
\usepackage{color}
\usepackage{soul}
\usepackage[hmargin=1in,vmargin=1in]{geometry}
\definecolor{blue}{HTML}{0000EE}
\definecolor{black}{HTML}{000000}
\hypersetup{
  linkcolor  = blue,
  citecolor  = blue,
  urlcolor   = black,
  colorlinks = true,
}
\usepackage[
  open,
  openlevel=2,
  atend,
  numbered
]{bookmark}

\usepackage{cleveref}

\crefformat{section}{\S#2#1#3} 
\crefformat{subsection}{\S#2#1#3}
\crefformat{subsubsection}{\S#2#1#3}
\usepackage{afterpage}

\def\RR{\mathds{R}}

\def\QQ{\mathds{Q}}
\def\EE{\mathds{E}}

\def\PP{\mathds{P}}

\def\BBB{\mathscr{B}}

\def\DDD{\mathscr{D}}

\def\LLL{\mathscr{L}}

\def\NNN{\mathscr{N}}

\def\ind{\mathds{1}}

\def\wrt{with respect to }

\newcommand{\cmmnt}[1]{\ignorespaces}

\numberwithin{equation}{section}
\newtheorem{lem}{Lemma}
\newtheorem{thm}{Theorem}
\newtheorem{cor}{Corollary}
\newtheorem{cond}{Condition}

\begin{document}

\title{Boosting Random Forests to Reduce Bias;\\ One-Step Boosted Forest and its Variance Estimate}
\author{\textbf{Indrayudh Ghosal} \\ {\small \href{mailto:IG248@CORNELL.EDU}{\nolinkurl{IG248@CORNELL.EDU}}} \and \textbf{Giles Hooker} \\ {\small \href{mailto:GJH27@CORNELL.EDU}{\nolinkurl{GJH27@CORNELL.EDU}}} \and Department of Statistical Science, Cornell University \\ Ithaca, NY 14850, USA}
\maketitle

\begin{abstract}
In this paper we propose using the principle of boosting to reduce the bias of a random forest prediction in the regression setting. From the original random forest fit we extract the residuals and then fit another random forest to these residuals. We call the sum of these two random forests a \textit{one-step boosted forest}. We show with simulated and real data that the one-step boosted forest has a reduced bias compared to the original random forest. The paper also provides a variance estimate of the one-step boosted forest by an extension of the infinitesimal Jackknife estimator. Using this variance estimate we can construct prediction intervals for the boosted forest and we show that they have good coverage probabilities. Combining the bias reduction and the variance estimate we show that the one-step boosted forest has a significant reduction in predictive mean squared error and thus an improvement in predictive performance. When applied on datasets from the UCI database, one-step boosted forest performs better than  random forest and gradient boosting machine algorithms. Theoretically we can also extend such a boosting process to more than one step and the same principles outlined in this paper can be used to find variance estimates for such predictors. Such boosting will reduce bias even further but it risks over-fitting and also increases the computational burden.
\end{abstract}

\section{Introduction}

Ensemble methods have become one of the most successful and widely-used methods in machine learning. Ensembles of trees, in particular, have the advantage of being computationally fast and of having few tuning parameters and requiring minimal human intervention [\cite{breiman2001random}, \cite{friedman2001elements}]. These methods can be classified into two categories: ``bagging-type'' methods which reduce variance by combining trees that are obtained using identical randomised processes, and ``boosting-type'' methods which grow trees sequentially, one tree depending on the output of the previous. Recent work in \cite{mentch2016quantifying} and \cite{wager2017estimation} has demonstrated a central limit theorem for random forests -- a bagging-type method -- allowing for uncertainty quantification about its predictions.  In this paper, we leverage this to take a step towards boosting methods. We revisit a bias correction method for regression originally proposed in \cite{breiman2001random}  and further studied in \cite{zhang2012bias} and \cite{Xu2013}: we build two random forests, the second obtained from the residuals of the first, and then add them together. This represents one step of gradient boosting, as examined in \cite{friedman2001greedy} for squared-error regression and we name the resulting algorithm {\em One-step Boosted Forests}. While the method is not novel, it has not been widely recognised within statistical learning despite near universal improvement in test set accuracy. In particular, it does better than either random forests or gradient boosting in experiments on data from the UCI repository [\cite{lichman2013uci}]. In this paper, we build on recent work in \cite{mentch2016quantifying} and \cite{wager2017estimation} to develop variance estimates, show asymptotic normality and hence confidence intervals for the resulting predictions when ensemble methods are built using subsamples of the data.

Random forests [\cite{breiman2001random}] and other ensemble methods have proven to be one of the most popular machine learning methods [\cite{fernandez2014we}]. They proceed by generating trees from bootstrap or subsamples of the available data, potentially incorporating additional randomisation within the tree building process. By averaging many trees built in this fashion, random forests achieve a considerable reduction in variance relative to any individual tree. More recently, this structure has been re-interpreted in the framework of U-statistics [\cite{van2000asymptotic}] allowing \cite{mentch2016quantifying} and \cite{wager2017estimation} to develop central limit theorems for the resulting predictions. Crucially, the variance of these predictions, and hence confidence intervals for them, can be calculated at no additional computational effort.

Despite their successes, random forests can suffer from bias when targeting complex signals. Since each tree is generated using an identical random process, they cannot be used to compensate each other. In particular, in a complex signal, each tree will target the same part of the signal, potentially leaving a bias {\em that could have been effectively modelled by a random forest, if it were the only target of estimation}. This is a result of the nonlinear use of data in the tree-building algorithm, as well as a tree partitioning scheme which quickly reduces the amount of data available to model local features. As an alternative, boosting methods build trees sequentially, allowing the current tree to correct for the biases of those that were constructed before it. Boosting was originally developed for classification in \cite{freund1995desicion} and \cite{schapire2012boosting}. In the regression setting \cite{friedman2001greedy} developed gradient boosting; in the context of least-squares regression, each tree is built to predict the residuals from the current model. In order to reduce over-fitting, gradient boosting introduces a shrinkage parameter and sets the number of trees (boosting steps) as a tuning parameter that must be chosen by a validation set.

One-step boosted forests combine these approaches. By fitting the residuals of a random forest, we are able to reduce the bias of the random forest procedure. By using the already-stable forest procedure we remove the need for shrinkage. The procedure can be iterated, but while our experiments suggest that while the first round of boosting makes a large improvement in predictive performance, subsequent boosting iterations might provide marginal benefit. The details of this conclusion depend on the amount of noise in the data. In all our experiments, One-step boosted forests outperformed both random Forests and gradient boosting in test-set performance. We explain this in terms of their ability to improve the bias of random forests while providing greater stability than gradient boosting. They also outperformed the bias-corrected random forests described in [\cite{hooker2015bootstrap}], which were designed with similar principles in mind.

An important aspect of this paper is the extension of the variance calculations in \cite{mentch2016quantifying} and \cite{wager2017estimation} to one-step boosted forests. We show that the infinitesimal Jackknife estimator in \cite{wager2017estimation} can be extended to the joint covariance of both random forests in our one-step boosting procedure, and therefore for their sum. Under a couple of regularity assumptions, the two forests are also jointly normal, justifying the development of confidence intervals. There are two potential variants of these forests depending on whether the subsamples used for both forests (the base step and the boosting step) are the same or not. Our empirical results suggest that taking different (independent) subsamples reduces the over-all variance and hence improves prediction more than repeating the same subsamples. This theoretical framework is also extensible to multiple rounds of boosting, thereby also providing an alternative means of assessing when to stop.  While not examined here, some of the bias correction variations described in \cite{zhang2012bias} could also be analysed using similar techniques. 

The layout of this paper is as follows: in \cref{sec:basicalgo} we give define our estimate, and in \cref{sec:theory} we will showcase some properties of the estimate, such as its theoretical variance as well as demonstrating asymptotic normality in \cref{sec:asympnormal} followed with an estimate for the theoretical variance (and its consistency)  in \cref{sec:varxest}. \cref{sec:discussions} continues the discussion with comparisons to other works in the literature and possible extensions. In \cref{sec:results} we empirically demonstrate the utility of our algorithm - \cref{sec:simresults} focuses on results from a simulated dataset and \cref{sec:realresults} demonstrates the performance of our algorithm on some datasets from the UCI database (\cite{lichman2013uci}). In the latter the boosted forest is compared against the basic random forest algorithm and also against other standard algorithms such as gradient boosting machine [\cite{friedman2001greedy}] and bootstrap bias-correction [\cite{hooker2015bootstrap}].

\section{Defining One-Step Boosted Forests} \label{sec:basicalgo}

We first set some notation used throughout this paper. Let $Z_{[n]}^{(0)} = (Z_1^{(0)}, Z_2^{(0)}, \dots, Z_n^{(0)})$ denote the dataset, where $Z_i^{(0)} = (Y_i, X_i)$ and $[n]$ is the set $\{1,\dots,n\}$. Further note that random forests were originally described with bootstrap samples, but implementations also allow subsampling and using subsamples of size $k < n$ has been important in making theoretical progress. In this paper we will refer to these types of random forests as usual ones, unless specified otherwise. 

We now formally define the method of creating the One-Step Boosted Forest. We build two forests, the first being the usual random forest - if there are $B$ trees in the forest then for each of them we select $k (<n)$ datapoints at random without replacement and denote by $T(x; Z^{(0)}_I)$ the estimate from that tree for any test point $x$, where $I$ is the set of $k$ indices selected (i.e., $|I| = k$). Let $I_1^{(0)}, \dots, I_B^{(0)}$ be the indices selected for each of the trees in the forest, with each of the $I_b^{(0)}$ having $k$ elements. Then the estimate after the first stage will be
\begin{equation}
\hat{F}^{(0)}(x) = \frac1B \sum_{b=1}^B T(x; Z_{I_b^{(0)}}^{(0)}) \label{eqn:firstrf}
\end{equation}
Since selection of the subsets $I$ are random, we can assign random weights $w_I^{(0)}$ to each of the ${n \choose k}$ possible subsets. Each $w_I^{(0)}$ will be a binary random variable taking the value ${n \choose k}/B$ with probability $B/{n \choose k}$ and the value 0 with probability $1 - B/{n \choose k}$. The weights $w_I^{(0)}$ are then i.i.d random variables with $\EE(w_I^{(0)}) = 1$ and $c := Var(w_I^{(0)}) = {n \choose k}/B - 1$. Thus the formula for the random forest in \eqref{eqn:firstrf} can be rewritten as
\begin{equation}
\hat{F}^{(0)}(x) = \frac1{{n \choose k}} \sum_{I \subseteq [n]: |I| = k} w_I^{(0)} T(x; Z_{I}^{(0)}) \label{eqn:firststage}
\end{equation}
Note that this approach isn't exactly the same as taking $B$ subsets at random. Since $w_I^{(0)}$ are i.i.d the total number of trees has an \textbf{expected value} of $B$ and isn't always exactly equal to $B$. The difference between these two selection procedures has been shown to be ignorable in \cref{sec:moreresults}.

Once we have obtained $\hat{F}^{(0)}$ we can derive the residuals $e_i = Y_i - \hat{F}^{(0)}(X_i)$ and construct a new dataset $Z_{[n]}^{(1)} = (Z_1^{(1)}, \dots, Z_n^{(1)})$, where $Z_i^{(1)} = (e_i, X_i)$. We can repeat the same method as above on this dataset with weights $w_I^{(1)}$ and get a new estimate for the second stage
\begin{equation}
\hat{F}^{(1)}(x) = \frac1{{n \choose k}} \sum_{I \subseteq [n]: |I| = k} w_I^{(1)} T(x; Z_{I}^{(1)}) \label{eqn:secondstage}
\end{equation}
This is the second forest and the first boosted step in our algorithm. Our final estimate is the sum of the first and second stage estimates, \eqref{eqn:firststage} and \eqref{eqn:secondstage}, given by
\begin{equation}
\hat{F}(x) = \hat{F}^{(0)}(x) + \hat{F}^{(1)}(x) \label{eqn:bfest}
\end{equation}
For this construction we can take $w_I^{(1)}$ to be the same as or independent from $w_I^{(0)}$, i.e., choosing the same subsets in the second stage as the first stage or independent ones. Based on that choice our estimate will also change. We thus have to consider two variants in this paper:
\begin{itemize}
\item If $w_I^{(1)} = w_I^{(0)}$ then our estimate \eqref{eqn:bfest} is the \textit{one-step boosted forest with same subsamples}.
\item If $w_I^{(1)} \perp w_I^{(0)}$ then our estimate \eqref{eqn:bfest} is the \textit{one-step boosted forest with independent subsamples}.
\end{itemize}

Algorithms \ref{algo:boostforestsame} and \ref{algo:boostforestind} gives details of these two methods along with their variance estimates discussed in \cref{sec:varxest}. We will compare the performance of these variants in \cref{sec:simresults} \& \cref{sec:realresults}. In the next sections we shall try to quantify the variability and provide theoretical guarantees for our estimate $\hat{F}$ in \eqref{eqn:bfest}.

\section{Analysing One-Step Boosted Forests} \label{sec:theory}

In this section we will show that the one-step boosted forest can be expressed as a weighted U-statistic (with random weights). Using that fact we calculate the theoretical variance and also provide a central limit theorem for the one-step boosted forest. Then we also provide a variance estimate for the one-step boosted forest and we finally present the formal algorithm combining our findings. Before that we need a crucial assumption.

\paragraph{U-statistics} We first familiarise ourselves with U-statistics. If $h(z_1, \dots, z_k)$ is a symmetric function then the U-statistic $U$ with kernel $h$ is defined by
$$
U(Z_1, \dots, Z_n) = \frac1{{n \choose k}} \sum_{I \subseteq [n]: |I| = k} h(Z_I) = \frac1{{n \choose k}} \sum_{I \subseteq [n]: |I| = k} h(Z_{I_1}, \dots, Z_{I_k})
$$
\cite{hoeffding1948class} showed further that the U-statistic is asymptotically normal with variance $\frac{k^2}{n} \zeta_{1,k}$, where
\begin{align*}
\zeta_{c,k} &= cov(h(Z_1,\dots,Z_c,Z_{c+1},\dots,Z_k), h(Z_1,\dots,Z_c,Z'_{c+1},\dots,Z'_k)) \\
&= var(\EE(h(Z_1,\dots,Z_k) | Z_1 = z_1, \dots, Z_c = z_c))
\end{align*}

This result is stated in more detail in \cref{sec:proofmain}. 

\subsection{A Pivotal Assumption}\label{sec:assumption}

It is easily seen that the first stage forest in $\hat{F}^{(0)}(x)$ \eqref{eqn:firststage} can be thought of as a weighted complete U-statistic. But then note that $Z_i^{(1)}$ defined in \cref{sec:basicalgo} actually depends on the whole of the previous dataset $Z_{[n]}^{(0)}$ and so does $T(x; Z_I^{(1)})$ regardless of the subset $I$. Hence $T(x; Z_I^{(1)})$ is not a valid kernel for a U-statistic which makes $\hat{F}^{(1)}(x)$ not a valid U-statistic. However, if we replace $\hat{F}^{(1)}(x)$ in \eqref{eqn:secondstage} with $\check{F}^{(1)}(x)$ trained in the same manner but based on data with ``noise-free'' residuals:
$$
\check{Z}^{(1)}_i = \left(Y_i - \EE\left[\hat{F}^{(0)}(X_i)\right], X_i \right)
$$
then it is easily seen that $\check{F}^{(1)}(x)$ is a U-statistic. This will help in further analysis of boosted forest defined in \eqref{eqn:bfest} since it can be expressed as the sum of two (weighted) U-statistics. Note that $\check{F}^{(1)}(x)$ does not inherit variability from $\hat{F}^{(0)}(x)$, although the two will still be correlated. This approximation leads to significantly simplified analysis. Throughout this section and the corresponding proofs in the appendix we will assume that the following condition holds true.

\begin{cond} \label{cond:regularity}
Let
$$
\check{F}^{(1)}(x) = \frac1{{n \choose k}} \sum_{I \subseteq [n]: |I| = k} w_I^{(1)} T(x; \check{Z}_{I}^{(1)})
$$
then
$$
\frac{\hat{F}^{(1)}(x) - \check{F}^{(1)}(x)}{\sqrt{var(\hat{F}^{(1)}(x))}}  \xrightarrow{p} 0.
$$
\end{cond}
That is, the effect of the variance in $\hat{F}^{(0)}(x)$ on $e_i$ is ignorable in $\hat{F}^{(1)}(x)$. Throughout, our theoretical analysis will apply to $\check{F}(x) = \hat{F}^{(0)}(x) + \check{F}^{(1)}(x)$. From Condition \ref{cond:regularity} it is seen that asymptotic variance for $\hat{F}$ and $\check{F}$ will be the same, as will their asymptotic distributions.

This condition is crucial since it allows us to apply the theory of U-statistics (especially asymptotic normality) to $\check{F}(x)$ and be sure that it also works for $\hat{F}(x)$. Whether this condition applies in practice depends on the details of the tree building procedure. The true response function $F^*(x)$ will influence the tree structure in the first stage which in turn influences the residuals and then the tree structure of the second stage. All of these influences are difficult to quantify and we do not attempt a full analysis here. In practice condition \ref{cond:regularity} may not hold for all possible tree/forest building procedures but in \cref{sec:kernel} we consider an analogy with kernel methods as explored in \cite{scornet2016random}. There we show that this property holds for Nadaraya-Watson estimators if the bandwidth in the second stage is smaller than in the first: approximately corresponding to using deeper trees with smaller leaf sizes for $\hat{F}^{(1)}(x)$ compared to $\hat{F}^{(0)}(x)$. The property also holds if the two stages use different sets of covariates without any restriction on bandwidth relationships. As a heuristic our condition should hold when the trees that comprise $\hat{F}^{(1)}(x)$ (estimating the bias) tend to have a different set of splits than those in $\hat{F}^{(0)}(x)$ (the original estimate of the signal). This will, of course, depend on the specific algorithm employed to create the trees, as well as the properties of the underlying response function; and an analysis of such specific cases is beyond the scope of this paper. An empirical evaluation of the regularity condition is looked into briefly in \cref{sec:regularplot}.

As a further check on the validity of this assumption, we provide a detailed examination of the sample distribution of the predictions of the procedure in \cref{sec:simresults} where we see empirical confirmation of our results and good coverage of prediction intervals.

For the rest of this paper we shall try to restrict usage of the check ( $\check{ }$ ) accent to reduce notational complexity. For theoretical calculations in \cref{sec:asympnormal} we shall use the notations from \cref{sec:basicalgo}, eg., $\hat{F}$ to denote $\check{F}$, $Z$ to denote $\check{Z}$, etc. For empirical procedures described in \cref{sec:varxest} and \cref{sec:results} onwards we don't have access to $\check{Z}$, etc so $\hat{F}$ will denote the usual boosted forest defined in \cref{sec:basicalgo}.


\subsection{Asymptotic Normality}\label{sec:asympnormal}

In this section we will prove that the One-Step Boosted Forest predictions are asymptotically normal. Later in \cref{sec:varxest} we show that the theoretical asymptotic variance is consistently estimated by \eqref{eqn:bfsamevarest} and \eqref{eqn:bfindvarest} for the two variants respectively.

To calculate the theoretical asymptotic variance of the boosted forest \eqref{eqn:bfest} we will first condition over the weights to get a complete U-statistic. Then we follow the result about complete U-statistics at the beginning of \cref{sec:theory}. Also note that $\zeta_{k,k} = var(h(Z_1,\dots,Z_k))$ follows from the above definition of $\zeta_{c,k}$. To make calculations simpler, we shall assume, without loss of generality, that the individual trees and thus the random forest on both stages have zero mean, i.e., $\EE [T(x; Z_I)] = 0$ for all $I \subseteq [n]: |I| = k$. As discussed near the end of \cref{sec:basicalgo} there are two variants whose asymptotic variances can be calculated to be
\begin{align}
V_{same}(x) &:= \frac{k^2}{n}\zeta_{1,k} + \frac{c}{{n \choose k}} \zeta_{k,k} \label{eqn:bfsamevar} \\
V_{ind}(x) &:= \frac{k^2}{n}(\zeta^{(0)}_{1,k} + \zeta^{(1)}_{1,k} + 2\zeta^{(0,1)}_{1,k}) + \frac{c}{{n \choose k}} (\zeta^{(0)}_{k,k} + \zeta^{(1)}_{k,k}) \label{eqn:bfindvar}
\end{align}
Here the $\zeta$ values for $V_{same}$ are based on the kernel being $T(x; Z_{I}^{(0)}) + T(x; Z_{I}^{(1)})$, i.e., sum of the trees in the two stages rather than the individual trees for separate stages. The $\zeta^{(0)}$ and $\zeta^{(1)}$ values for $V_{ind}$ are based on the kernels being $T(x; Z_{I}^{(0)})$ and $T(x; Z_{I}^{(1)})$ respectively. Also here $\zeta^{(0,1)}_{1,k}$ is the (scaled) covariance term between the trees in the two stages, $T(x; Z_{I}^{(0)})$ and $T(x; Z_{I}^{(1)})$. The details of this calculation can be found in \cref{sec:Uvarest}.


Now it is shown in \cite{van2000asymptotic} that U-statistics are asymptotically normal if $k$, in this case the subsample size for random forests, is constant as $n \to \infty$. If we assume that here then asymptotic normality of the boosted forest follows. But in practice as $n$ increases we want $k$ to increase as well. To allow for this we need to assume the following Lindeberg-Feller type condition (initially presented as Condition 1 in \cite{mentch2016quantifying})

\begin{cond}\label{cond:mainLF}
Assume that the dataset $Z_1,Z_2,\dots \overset{iid}{\sim} D_Z$ and let $T(Z_1,\dots,Z_{k_n})$ be the tree kernel based on a subsample of size $k_n$. Define $T_{1,k_n}(z) = \EE T(z,Z_2,\dots,Z_{k_n})$. Then we assume that for all $\delta > 0$
$$
\lim_{n \to \infty} \frac1{\zeta_{1,k_n}} \EE\left[ T^2_{1,k_n}(Z_1) \ind\{|T_{1,k_n}(Z_1)| > \delta\zeta_{1,k_n}\} \right] = 0
$$ 
\end{cond}
A discussion of the situation in which this condition holds can be found in \cite{mentch2016quantifying}, in particular they argue that a sub-Guassian response, combined with a limit to the influence individual data points on a the values of a tree are sufficient for this to hold. Using the regularity conditions \ref{cond:regularity} and \ref{cond:mainLF} we can prove the following result

\begin{thm}\label{thm:normal}
Assume that the dataset $Z_1^{(0)}, Z_2^{(0)}, \dots \overset{iid}{\sim} D_{Z^{(0)}}$ and that $\EE T^4(x; Z_1,\dots,Z_{k_n}) \leq C < \infty$ for all $x, n$ and some constant $C$. Let $B_n$ be the number of trees in each step of the One-Step Boosted Forest. Then
as long as $k_n, B_n \to \infty$ such that $\frac{k_n}{n} \to 0$ and $\frac{n}{B_n} \to 0$ as $n \to \infty$ as well as $\lim\limits_{n \to \infty} \frac{k_n \zeta_{1,k_n}}{\zeta_{k_n,k_n}} \neq 0$ we have
$$
\frac{\hat{F}(x)}{\sigma_n(x)} \overset{\DDD}{\to} \NNN(0,1)
$$
for some sequence $\sigma_n(x)$ given by $\sigma_n^2(x) = V_{same}(x)$ from \eqref{eqn:bfsamevar} for Variant I of the One-Step Boosted Forest or $\sigma_n^2(x) = V_{ind}(x)$ from \eqref{eqn:bfindvar} for Variant II of the One-Step Boosted Forest.
\end{thm}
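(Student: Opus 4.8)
The plan is to first invoke Condition~\ref{cond:regularity} to replace $\hat F^{(1)}(x)$ by $\check F^{(1)}(x)$ and, following the convention adopted after that condition, to write $\hat F$ for $\check F = \hat F^{(0)} + \check F^{(1)}$. The point of this replacement is that the combined kernel
$$
h(Z_I) := T(x; Z_I^{(0)}) + T(x; \check Z_I^{(1)})
$$
is now a genuine symmetric function of $k_n$ of the i.i.d.\ draws $Z_i^{(0)}$, so for Variant~I we may write $\hat F(x) = {n \choose k}^{-1}\sum_I w_I^{(0)} h(Z_I) = U_n + M_n$ with
$$
U_n := \frac1{{n \choose k}}\sum_I h(Z_I), \qquad M_n := \frac1{{n \choose k}}\sum_I (w_I^{(0)} - 1)\, h(Z_I).
$$
Here $U_n$ is a complete (unweighted) U-statistic carrying the first term of \eqref{eqn:bfsamevar}, while $M_n$ is the finite-forest weight correction, satisfying $\EE_w[M_n \mid Z] = 0$ and $\EE_Z\,var_w(M_n \mid Z) = \frac{c}{{n \choose k}}\zeta_{k,k}$, the second term of \eqref{eqn:bfsamevar}. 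I would prove a CLT for $U_n$ and then show $M_n$ is asymptotically negligible after normalization.

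For $U_n$ I would use Hoeffding's H-decomposition into orthogonal projections of degrees $1,\dots,k_n$. The degree-one piece is the H\'ajek projection $\frac{k_n}{n}\sum_{i=1}^n h_{1,k_n}(Z_i^{(0)})$, a normalized sum of i.i.d.\ mean-zero summands with variance $\frac{k_n^2}{n}\zeta_{1,k_n}$; Condition~\ref{cond:mainLF}, applied to this first-order projection, is exactly the Lindeberg--Feller condition that delivers a $\NNN(0,1)$ limit after scaling by its standard deviation. The remaining task is to show the higher-order projections are negligible relative to this leading term, i.e.\ that $\epsilon_{k,n} \to 0$ even when $k_n \to \infty$ with $k_n = o(n)$. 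I would bound the variance of the degree-$c$ projection by ${k_n \choose c}^2 \zeta_{c,k_n}/{n \choose c}$ and control its ratio to $\frac{k_n^2}{n}\zeta_{1,k_n}$ using $k_n = o(n)$ together with the uniform fourth-moment bound $\EE T^4 \le C$, which keeps the canonical variances bounded; summing over $c \ge 2$ then gives $\epsilon_{k,n} \to 0$ and hence $U_n/\sqrt{var(U_n)} \overset{\DDD}{\to} \NNN(0,1)$.

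For $M_n$ I would argue negligibility directly rather than through a second CLT, since the hypotheses place us in the many-tree regime. Using $c = {n \choose k}/B_n - 1$ we have $var(M_n) = \frac{c}{{n \choose k}}\zeta_{k_n,k_n} \approx \zeta_{k_n,k_n}/B_n$, so that
$$
\frac{var(M_n)}{var(U_n)} \;\approx\; \frac{n}{B_n\, k_n}\cdot\frac{\zeta_{k_n,k_n}}{k_n \zeta_{1,k_n}} \;\longrightarrow\; 0,
$$
because $n/B_n \to 0$, $k_n \to \infty$, and the hypothesis $\lim_n k_n\zeta_{1,k_n}/\zeta_{k_n,k_n} \neq 0$ keeps $\zeta_{k_n,k_n}/(k_n\zeta_{1,k_n})$ bounded. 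Since $\sigma_n^2(x) = var(U_n) + var(M_n)$ (the cross term vanishes as $\EE_w[M_n\mid Z]=0$), this also shows $\sqrt{var(U_n)}/\sigma_n \to 1$ and $\EE[(M_n/\sigma_n)^2] \to 0$, so $M_n/\sigma_n \xrightarrow{p} 0$. Two applications of Slutsky's theorem to $\hat F(x)/\sigma_n(x) = U_n/\sigma_n + M_n/\sigma_n$ then yield $\hat F(x)/\sigma_n(x) \overset{\DDD}{\to} \NNN(0,1)$.

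Variant~II is handled identically: with independent weights the kernel splits, the first-order projection produces the additional cross term so that $\zeta_{1,k} = \zeta^{(0)}_{1,k} + \zeta^{(1)}_{1,k} + 2\zeta^{(0,1)}_{1,k}$ as noted below \eqref{eqn:bfindvar}, and the weight correction contributes $\frac{c}{{n \choose k}}(\zeta^{(0)}_{k,k} + \zeta^{(1)}_{k,k})$; the same negligibility and Slutsky arguments close the proof. I expect the main obstacle to be the uniform control of the higher-order Hoeffding projections as $k_n \to \infty$ (the proof that $\epsilon_{k,n} \to 0$ when $k_n = o(n)$ rather than for fixed $k$), which is where the fourth-moment hypothesis does its real work; the treatment of $M_n$, by contrast, is a routine second-moment computation enabled by the rate conditions $n/B_n \to 0$ and $\lim_n k_n\zeta_{1,k_n}/\zeta_{k_n,k_n} \neq 0$.
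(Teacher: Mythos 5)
Your proposal is correct and follows essentially the same route as the paper: your decomposition of $\hat F(x)$ into the complete U-statistic $U_n$ plus the mean-zero weight-correction term is exactly the paper's $\tilde F(x) = \frac{1}{{n \choose k_n}}\sum_I T_I$ plus $\frac{1}{{n \choose k_n}}\sum_I (w_I^{(0)}-1)T_I$, your second-moment negligibility argument for the correction (using $n/B_n \to 0$, $k_n \to \infty$, and $\lim_n k_n\zeta_{1,k_n}/\zeta_{k_n,k_n} \neq 0$) is the same computation the paper performs for $\EE[(\hat F - \tilde F)^2]/\sigma_n^2 \to 0$, and the conclusion is assembled by Slutsky in the same way for both variants. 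The only divergence is that where you propose to prove the CLT for $U_n$ directly via the Hoeffding decomposition and Lindeberg--Feller, the paper simply cites Theorem 1(i) of Mentch and Hooker together with Theorems 11.2 and 12.3 of van der Vaart (remarking in a footnote that the stated hypotheses suffice if one follows that proof), which is precisely the argument you sketch.
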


\subsection{Variance Estimation of the One-Step Boosted Forest}\label{sec:varxest}

Now that we have the formulae for the theoretical variance of both variants we can go about finding estimates for them. We will find estimates for each term in \eqref{eqn:bfsamevar} and \eqref{eqn:bfindvar} separately. In this section for simplicity we define $T^{(j)}_b(x) = T(x; Z_{I_b^{(j)}}^{(j)})$ for $j = 0,1$.

Note that the $\zeta_{k,k}$ values can be estimated by just the variability of the individual trees in the forests, by adding them up for \textbf{Variant I} and separately for \textbf{Variant II}. As an example
\begin{equation}
\widehat{\zeta^{(0)}_{k,k}} = var_*[T^{(0)}_b(x)], b = 1,\dots,B \label{eqn:zetakest}
\end{equation}
Here $var_*$ is used to denote empirical variance by varying $b = 1,\dots,B$. We shall use the same notation (the subscript $()_*$) for the rest of this paper.

Now note that $\frac{k^2}{n}\zeta_{1,k}$ is the variance for a random forest when we consider \textbf{all} possible subsets of the dataset of size $k$, i.e., a complete U-statistic. As an example if we define
\begin{equation}
\tilde{F}^{(0)}(x) = \frac1{{n \choose k}} \sum_{I \subseteq [n]: |I| = k} T(x; Z_{I}^{(0)}) \label{eqn:Ftildedef}
\end{equation}
then $\frac{k^2}{n}\zeta^{(0)}_{1,k} = var(\tilde{F}^{(0)}(x))$. From Theorem 1 [or Theorem 9] of \cite{wager2017estimation} we know that an asymptotically consistent estimate is given by the infinitesimal Jackknife estimator, the formula for which is
\begin{equation}
\widehat{var}_{IJ}(\tilde{F}^{(0)}(x)) = \sum_{i=1}^n cov_*\left[N_{i,b}^{(0)}, \,T^{(0)}_b(x)\right]^2, \;\; b = 1,\dots,B \label{eqn:zeta1est}
\end{equation}
as given by Theorem 1 of \cite{efron2014estimation} where $cov_*$ indicates the empirical covariance over $b$. Here $N_{i,b}^{(0)} = \ind\{i \in I^{(0)}_b\}$ is the indicator of whether the $i$\textsuperscript{th} datapoint is included in the calculations for the $b$\textsuperscript{th} tree. So we can estimate the variance for \textbf{Variant I} in \eqref{eqn:bfsamevar} by using equivalent expressions to \eqref{eqn:zeta1est} for the first term and \eqref{eqn:zetakest} for the second term:
\begin{align}
\hat{V}_{same}(x) &= \hat{V}_{IJ} + (1/B)*\hat{\zeta}_{k,k} \nonumber\\
&= \sum_{i=1}^n cov_*\left[ N_{i,b}^{(0)}, \,T^{(0)}_b(x)+T^{(1)}_b(x) \right]^2 + \frac1B \cdot var_*\left[T^{(0)}_b(x) + T^{(1)}_b(x)\right] \label{eqn:bfsamevarest}
\end{align}
In this formula we used the approximation $\frac{c}{{n \choose k}} = \frac1{{n \choose k}} \cdot \left(\frac{{n \choose k}}{B} - 1\right) = \frac1B - \frac1{{n \choose k}} \approx \frac1B$.

For the variance estimate of Variant II of the One-Step Boosted Forest we first need to find an estimate for $\frac{k^2}{n} \zeta^{(0,1)}_{1,k}$, the covariance between the first and second stages of our estimate in case of Variant II. It is reasonable to expect that we can have a two-sample analog of \eqref{eqn:zeta1est}, i.e., an infinitesimal Jackknife estimate for the covariance given by
\begin{equation}
\widehat{cov}_{IJ}(\tilde{F}^{(0)}(x),\tilde{F}^{(1)}(x)) = \sum_{i=1}^n cov_*\left[N_{i,b}^{(0)}, \,T^{(0)}_b(x)\right] \cdot cov_*\left[N_{i,b}^{(1)}, \,T^{(1)}_b(x)\right], \;\; b = 1,\dots,B \label{eqn:zeta1covest}
\end{equation}
The consistency of this estimate is proved in Lemma \ref{lem:IJest} in \cref{sec:proofIJ}. We can now estimate the variance for \textbf{Variant II} in \eqref{eqn:bfindvar} by using equivalent expressions to \eqref{eqn:zeta1est} and \eqref{eqn:zeta1covest} for the first term and \eqref{eqn:zetakest} for the second term. We get
\begin{align}
&\hat{V}_{ind}(x) = \hat{V}_{IJ} + (1/B)*\hat{\zeta}_{k,k} \nonumber \\
=& \sum_{i=1}^n \left( cov_*\left[ N_{i,b}^{(0)}, \,T^{(0)}_b(x) \right] + cov_*\left[ N_{i,b}^{(1)}, \,T^{(1)}_b(x) \right] \right)^2 + \frac1B \left( var_*\left[T^{(0)}_b(x)\right] + var_*\left[T^{(1)}_b(x)\right] \right) \nonumber \\
=& \sum_{i=1}^n \left( \sum_{j=0}^1  cov_*\left[ N_{i,b}^{(j)}, \, T^{(j)}_b(x) \right] \right)^2 + \frac1B\sum_{j=0}^1 \left( var_*\left[T^{(j)}_b(x)\right] \right) \label{eqn:bfindvarest}
\end{align}

Note that we still use $\frac{c}{{n \choose k}} \approx \frac1B$. Thus we have found variance estimates for the one-step boosted forest \eqref{eqn:bfest} formalised in \cref{sec:basicalgo}. We have the following result regarding these estimates.


\begin{thm}\label{thm:consistency}
The variance estimates discussed above are consistent:
\begin{itemize}
\item $V_{same}(x)$ in \eqref{eqn:bfsamevar} is consistently estimated by $\hat{V}_{same}(x)$ in \eqref{eqn:bfsamevarest}.
\item $V_{ind}(x)$ in \eqref{eqn:bfindvar} is consistently estimated by $\hat{V}_{ind}(x)$ in \eqref{eqn:bfindvarest}.
\end{itemize}
\end{thm}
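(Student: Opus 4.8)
The plan is to prove \emph{ratio} consistency, i.e. $\hat V_{same}(x)/V_{same}(x)\xrightarrow{p}1$ and $\hat V_{ind}(x)/V_{ind}(x)\xrightarrow{p}1$, which is the notion needed for the studentized limit of \cref{thm:normal} to deliver valid intervals. The organizing observation is that both $V$ and $\hat V$ split into two pieces of matching form: a ``complete U-statistic'' piece $\frac{k^2}{n}\zeta_{1,k}(1+\epsilon_{k,n})$ estimated by the infinitesimal Jackknife term $\hat V_{IJ}$, and a ``finite-$B$ Monte Carlo'' piece $\frac{c}{{n \choose k}}\zeta_{k,k}$ estimated by $\frac1B\hat\zeta_{k,k}$. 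I would establish ratio consistency of each piece separately and then combine: since the target of each piece is a nonnegative quantity, once $\hat V_{IJ}=\tfrac{k^2}{n}\zeta_{1,k}(1+o_p(1))$ (the factor $1+\epsilon_{k,n}$ being harmless) and $\frac1B\hat\zeta_{k,k}$ is ratio-consistent for $\frac{c}{{n \choose k}}\zeta_{k,k}$, their sum is automatically ratio-consistent for $V$.

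For the first piece I would invoke Theorem 1, and its finite-$B$ refinement Theorem 9, of \cite{wager2017estimation}, which state that the infinitesimal Jackknife consistently recovers the variance $\frac{k^2}{n}\zeta_{1,k}=var(\tilde F)$ of a complete U-statistic. For \textbf{Variant I} this applies directly to the composite kernel $T(x;Z_I^{(0)})+T(x;Z_I^{(1)})$, which is a legitimate symmetric kernel precisely because Condition \ref{cond:regularity} lets us treat the second-stage residuals as noise-free; hence $\sum_i cov_*[N_{i,b}^{(0)},T_b^{(0)}(x)+T_b^{(1)}(x)]^2$ is ratio-consistent for $\frac{k^2}{n}\zeta_{1,k}$. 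For the second piece, $var_*[T_b^{(0)}(x)+T_b^{(1)}(x)]$ is, conditional on the data, the empirical variance of the ${n \choose k}$ kernel evaluations sampled at the $B$ drawn subsets; as $B,n\to\infty$ it converges to $\zeta_{k,k}=var\big(T(x;Z_{[k]}^{(0)})+T(x;Z_{[k]}^{(1)})\big)$ by the law of large numbers for the complete U-statistic with kernel $(T^{(0)}+T^{(1)})^2$ together with $\tilde F^{(0)}(x)+\tilde F^{(1)}(x)\xrightarrow{p}0$. Replacing $\frac{c}{{n \choose k}}$ by $\frac1B$ is ratio-harmless because $\frac{c}{{n \choose k}}=\frac1B-\frac1{{n \choose k}}=\frac1B\big(1-o(1)\big)$ as soon as $B_n=o\big({n \choose k}\big)$, a mild requirement since ${n \choose k}$ grows super-polynomially once $k_n\to\infty$. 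Combining the two pieces yields consistency for $V_{same}(x)$.

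For \textbf{Variant II} the only genuinely new ingredient is the cross term. Expanding the square,
\begin{align*}
\Big(\sum_{j=0}^1 cov_*[N_{i,b}^{(j)},T_b^{(j)}(x)]\Big)^2
&= cov_*[N_{i,b}^{(0)},T_b^{(0)}]^2 + cov_*[N_{i,b}^{(1)},T_b^{(1)}]^2 \\
&\qquad + 2\,cov_*[N_{i,b}^{(0)},T_b^{(0)}]\,cov_*[N_{i,b}^{(1)},T_b^{(1)}],
\end{align*}
so that summing over $i$ the two squared terms are ratio-consistent for $\frac{k^2}{n}\zeta^{(0)}_{1,k}$ and $\frac{k^2}{n}\zeta^{(1)}_{1,k}$ by applying \cite{wager2017estimation} to each stage separately, while the cross term is ratio-consistent for $2\cdot\frac{k^2}{n}\zeta^{(0,1)}_{1,k}$ by the two-sample infinitesimal Jackknife consistency of Lemma \ref{lem:IJest}. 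The Monte Carlo piece $\frac1B\big(var_*[T_b^{(0)}]+var_*[T_b^{(1)}]\big)$ is handled stagewise exactly as in Variant I, targeting $\frac{c}{{n \choose k}}(\zeta^{(0)}_{k,k}+\zeta^{(1)}_{k,k})$. Assembling these recovers $V_{ind}(x)$.

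The main obstacle is the cross term. The two-sample estimator's consistency (Lemma \ref{lem:IJest}) is the real engine and is proved separately in \cref{sec:proofIJ}; granting it, the remaining delicate point is that merging ratio-consistent estimates of $\zeta^{(0)}_{1,k}$, $\zeta^{(1)}_{1,k}$ and $\zeta^{(0,1)}_{1,k}$ into a ratio-consistent estimate of their signed sum $\zeta_{1,k}=\zeta^{(0)}_{1,k}+\zeta^{(1)}_{1,k}+2\zeta^{(0,1)}_{1,k}$ requires ruling out asymptotic cancellation, i.e. the per-stage variances must remain comparable to the total rather than nearly annihilating it. Cauchy--Schwarz gives $|\zeta^{(0,1)}_{1,k}|\le\sqrt{\zeta^{(0)}_{1,k}\zeta^{(1)}_{1,k}}$, and under $\lim_{n}(\zeta^{(1)}_{1,k_n}/\zeta^{(0)}_{1,k_n})\notin\{0,\infty\}$ (the ratio condition used for the joint limit in \cref{sec:proofnormal}) together with $\zeta_{1,k_n}$ staying bounded away from $0$ relative to $\zeta_{k_n,k_n}/k_n$ as in \cref{thm:normal}, the ratios $\zeta^{(0)}_{1,k}/\zeta_{1,k}$ and $\zeta^{(1)}_{1,k}/\zeta_{1,k}$ stay bounded---equivalently, the two stages are not asymptotically perfectly anticorrelated---so the $o_p$ errors are uniformly dominated by $V_{ind}(x)$; this is where the argument needs the most care. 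A secondary point to verify is that the finite-$B$ Monte Carlo bias of the naive infinitesimal Jackknife term is of smaller order than $V$ under $n/B_n\to0$, so that it is cleanly absorbed by the explicit $\frac1B\hat\zeta_{k,k}$ correction and not double counted.
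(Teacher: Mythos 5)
Your proposal is correct and follows essentially the same route as the paper: the paper's (one-line) proof likewise splits each variance into the complete-U-statistic term, handled by the infinitesimal Jackknife consistency of Lemma \ref{lem:IJest} (which covers both the per-stage variances via \cite{wager2017estimation} and the cross-covariance), and the finite-$B$ Monte Carlo term, handled by consistency of the sample variance of the tree predictions. Your additional caveat about asymptotic cancellation in the signed sum $\zeta^{(0)}_{1,k}+\zeta^{(1)}_{1,k}+2\zeta^{(0,1)}_{1,k}$ --- i.e.\ that ratio consistency of the three pieces only transfers to their sum if the correlation $\rho_{01}$ stays bounded away from $-1$ --- is a legitimate point that the paper's argument silently assumes away rather than something you are missing.
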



The proof of follows directly from Lemma \ref{lem:IJest} in the appendix and the fact that the sample variance is a consistent estimator of the population variance.


\subsection{The Complete One-Step Boosted Forest Algorithm}

We present below complete algorithms for construction and variance estimation for Variant I [Algorithm \ref{algo:boostforestsame}] and Variant II [Algorithm \ref{algo:boostforestind}] of the One-Step Boosted Forest. The performance of both variants are compared among themselves and to other standard algorithms in \cref{sec:simresults} and \cref{sec:realresults}.

\begin{algorithm}[ht] \label{algo:boostforestsame}
\caption{One-Step Boosted Forest \textbf{Variant I} (\textit{Same subsets in both stages})}
\SetKwInOut{Input}{Input}
\SetKwInOut{Output}{Output}
\SetKwInOut{Obtain}{Obtain}
\SetKwInOut{Calculate}{Calculate}

\Input{The data $\big(Z_i^{(0)} = (Y_i, X_i)\big)_{i=1}^n$, the tree function $T$, the number of trees in the forest $B$, the subsample size for each tree $k$, and the test point $x$.}

\For{$b = 1$ \KwTo $B$}{
Choose $I^{(0)}_b \subseteq [n]$ randomly such that $|I^{(0)}_b| = k$.

\Calculate{$T^{(0)}_b(x) = T\left(x; Z_{I^{(0)}_b}^{(0)}\right)$ and $N_{i,b}^{(0)} = \ind\{i \in I^{(0)}_b\}$.}
}
\Obtain{The first stage estimate $\hat{F}^{(0)}(x) = \frac1B \sum_{b=1}^B T^{(0)}_b(x)$.}
Calculate residuals $e_i = Y_i - \hat{F}^{(0)}(X_i)$ and new dataset $\big(Z_i^{(1)} = (e_i, X_i)\big)_{i=1}^n$.

\For{$b = 1$ \KwTo $B$}{
Choose $I^{(1)}_b = I^{(0)}_b$.

\Calculate{$T^{(1)}_b(x) = T\left(x; Z_{I^{(1)}_b}^{(1)}\right)$.}
}
\Obtain{The second stage estimate $\hat{F}^{(1)}(x) = \frac1B \sum_{b=1}^B T^{(1)}_b(x)$.}
\Calculate{The first term of the variance estimate $\hat{V}_{IJ} = \sum_{i=1}^n cov_*\left[ N_{i,b}^{(0)}, \,T^{(0)}_b(x)+T^{(1)}_b(x) \right]^2$}
\Calculate{The (unscaled) second term of the variance estimate $\hat{\zeta}_{k,k} = var_*\left[T^{(0)}_b(x) + T^{(1)}_b(x)\right]$}
\Output{The \textit{one-step boosted forest} estimate at the test point $x$ given by $\hat{F}(x) = \hat{F}^{(0)}(x) + \hat{F}^{(1)}(x)$ and the variance estimate given by $\hat{V}_{same}(x) = \hat{V}_{IJ} + (1/B)*\hat{\zeta}_{k,k}$.}
\end{algorithm}

\begin{algorithm}[ht]
\caption{One-Step Boosted Forest \textbf{Variant II} (\textit{Independent subsets in 2 stages})} \label{algo:boostforestind}
\SetKwInOut{Input}{Input}
\SetKwInOut{Output}{Output}
\SetKwInOut{Obtain}{Obtain}
\SetKwInOut{Calculate}{Calculate}

\Input{The data $\big(Z_i^{(0)} = (Y_i, X_i)\big)_{i=1}^n$, the tree function $T$, the number of trees in the forest $B$, the subsample size for each tree $k$, and the test point $x$.}

\For{$b = 1$ \KwTo $B$}{
Choose $I^{(0)}_b \subseteq [n]$ randomly such that $|I^{(0)}_b| = k$.

\Calculate{$T^{(0)}_b(x) = T\left(x; Z_{I^{(0)}_b}^{(0)}\right)$ and $N_{i,b}^{(0)} = \ind\{i \in I^{(0)}_b\}$.}
}
\Obtain{The first stage estimate $\hat{F}^{(0)}(x) = \frac1B \sum_{b=1}^B T^{(0)}_b(x)$.}
Calculate residuals $e_i = Y_i - \hat{F}^{(0)}(X_i)$ and new dataset $\big(Z_i^{(1)} = (e_i, X_i)\big)_{i=1}^n$.

\For{$b = 1$ \KwTo $B$}{
Choose $I^{(1)}_b \subseteq [n]$ randomly such that $|I^{(1)}_b| = k$, i.e., an independent copy of the first stage subset.

\Calculate{$T^{(1)}_b(x) = T\left(x; Z_{I^{(1)}_b}^{(1)}\right)$ and $N_{i,b}^{(1)} = \ind\{i \in I^{(1)}_b\}$.}
}
\Obtain{The second stage estimate $\hat{F}^{(1)}(x) = \frac1B \sum_{b=1}^B T^{(1)}_b(x)$.}
\Calculate{The first term of the variance estimate $\hat{V}_{IJ} = \sum_{i=1}^n \left( cov_*\left[ N_{i,b}^{(0)}, \,T^{(0)}_b(x) \right] + cov_*\left[ N_{i,b}^{(1)}, \,T^{(1)}_b(x) \right] \right)^2$}
\Calculate{The (unscaled) second term of the variance estimate $\hat{\zeta}_{k,k} = var_*\left[T^{(0)}_b(x)\right] + var_*\left[T^{(1)}_b(x)\right]$}
\Output{The \textit{one-step boosted forest} estimate at the test point $x$ given by $\hat{F}(x) = \hat{F}^{(0)}(x) + \hat{F}^{(1)}(x)$ and the variance estimate given by $\hat{V}_{ind}(x) = \hat{V}_{IJ} + (1/B)*\hat{\zeta}_{k,k}$.}
\end{algorithm}

\section{Further discussions}\label{sec:discussions}

\subsection{Comparison with Prior Results in the literature}


Theorem \ref{thm:normal} is an extension and combination of previous work. Lemma 2 [or Theorem 1(i)] of \cite{mentch2016quantifying} and Theorem 1 of \cite{wager2017estimation} shows that $\hat{F}^{(0)}(x)$ has an asymptotically (possibly biased in the former paper) normal distribution; the former listing a variance of $\frac{k^2}{n}\zeta^{(0)}_{1,k}$ while the latter employs the infinitesimal Jackknife estimator as a consistent variance estimate. However, the two papers use different assumptions to demonstrate normality. We have used the conditions for the former result, but note that inspection of their proof of Theorem 1 (see page 29 of \cite{mentch2016quantifying}) allows a replacement of their conditions -- $k_n = o(\sqrt{n})$ and $\lim\limits_{n \to \infty} \zeta_{1,k_n} \neq 0$ -- with those we give above; see \cite{peng2019asymptotic}. \cite{wager2017estimation} requires $\frac{k_n (\log n)^d}{n} \to 0$ along with some conditions on the tree building process, but demonstrates that the bias in the resulting estimators is asymptotically ignorable. Either set of conditions could be employed within our result. 



Since we had assumed that the tree function $T$ has zero mean, our central limit theorem is actually centered on $\EE[\hat{F}(x)]$, but we could add the honesty assumption from \cite{wager2017estimation} (detailed in Lemma 2 and Theorem 3 of that paper) to change the center to be the target function $F(x)$. Note in that case the second boosting stage $\hat{F}^{(1)}(x)$ is asymptotically estimating 0. Now boosting is supposed to reduce the bias $\EE[\hat{F}(x)] - F(x)$, and the high empirical values of performance improvement (due to low values of MSE) in \cref{sec:simresults} suggests that in this case the honesty assumption might not be necessary in practice.

We can also get a similar result about the joint distribution of each stage of Variant II of the boosted forest, under the extra condition than $\lim\limits_{n \to \infty} (\zeta^{(1)}_{1,k_n}/\zeta^{(0)}_{1,k_n}) \notin \{0, \infty\}$. This will be a more general result compared to the above main theorem, and we can use any linear combination of the boosting steps to arrive at the final estimate rather than just adding them. This result (Theorem \ref{thm:main}) and its proof is in \cref{sec:proofnormal}.

Our variance estimates discussed in \cref{sec:varxest} borrow from the infinitesimal Jackknife estimate used in \cite{wager2017estimation} where there is an assumption that the number of trees, (i.e., the number of times we subsample) $B$ be so large as to negate Monte Carlo effects, i.e, large $B$ leads to $\widehat{var}^B_{IJ}$ being close to $\widehat{var}_{IJ} = \widehat{var}^\infty_{IJ}$. However, our theoretical variance formulae in \eqref{eqn:bfsamevar} and \eqref{eqn:bfindvar} accounts for this with an additional term. We thus use the infinitesimal Jackknife approach to only estimate $\frac{k^2}{n}\zeta_{1,k}$ (the first term in our formulae for the variance) and add an estimate for the second term. We also remove the finite sample correction factor $\frac{n(n-1)}{(n-k)^2}$ discussed in \cite{wager2017estimation} and the additive correction term in \cite{wager2014confidence}. Our simulation results below demonstrate an upward bias of the infinitesimal jackknife estimator, particularly for small $B$; we have found that standard correction terms in \cite{wager2014confidence} often result in negative variance estimates, see \cite{zhou2019asymptotic} for a discussion. 

Our boosting method corresponds to the method BC3 in \cite{zhang2012bias}; other bias correction methods in that paper also incorporate the response within a correction term. When the correction is given by a random forest (BC1 and BC3 in \cite{zhang2012bias}) our central limit theorem continues to hold. When correcting for response bias via smoothing splines (method BC2), the same conditions would require an analysis of the variance due to both random forests and splines. 

The boosted forest algorithm is unlike the bootstrap bias correction method in \cite{hooker2015bootstrap}, where bias was directly estimated via the bootstrap, but which did not include a variance estimate for the bias corrected random forest. The algorithm in \cite{hooker2015bootstrap} is akin to a two-sample U-statistic but the dependency within the data and the residuals (on which the bootstrap is done) makes it harder to obtain a variance estimate via the infinitesimal Jackknife. However we speculate that the algorithms in \cite{mentch2016quantifying} can be used to find an estimate of the variance of the bias correction algorithm.

\subsection{Extensions: More than One Boosting Step}\label{sec:extensions}

We could continue with the boosting process and reduce the bias even further. For example if we boosted once more we would define $Z^{(2)}_i = (Y_i - \hat{F}^{(0)}(X_i)-\hat{F}^{(1)}(X_i), X_i)$ to be the dataset  for the third stage output $\hat{F}^{(2)}(x)$. Our final output would be the \textit{2-step boosted forest} given by
$$
\hat{F}(x) = \hat{F}^{(0)}(x) + \hat{F}^{(1)}(x) + \hat{F}^{(2)}(x)
$$
Its variance would depend of which variant of the original algorithm we use. If we used the same subsets to generate all three random forests then the variance would be consistently estimated by
$$
\hat{V}_{same}(x) = \sum_{i=1}^n cov_*\left[ N_{i,b}^{(0)}, \,\sum_{j=0}^2 T^{(j)}_b(x) \right]^2 + \frac1B \cdot var_*\left[\sum_{j=0}^2 T^{(j)}_b(x)\right]
$$
We could also use subsets independently generated for all three stages and then the variance estimate would be given by
$$
\hat{V}_{ind}(x) = \sum_{i=1}^n \left( \sum_{j=0}^2  cov_*\left[ N_{i,b}^{(j)}, \, T^{(j)}_b(x) \right] \right)^2 + \frac1B\sum_{j=0}^2 \left( var_*\left[T^{(j)}_b(x)\right] \right)
$$
We could also tweak the process and take independent subsets in the first two stages and then the same in the last stages, i.e., in terms of notation in \cref{sec:basicalgo} the weights could be $w^{(0)}_I, w^{(1)}_I, w^{(1)}_I$ respectively for the 3 stages. We could actually have two more combinations, namely $w^{(0)}_I, w^{(0)}_I, w^{(2)}_I$ and $w^{(0)}_I, w^{(1)}_I, w^{(0)}_I$. Thus there are 5 variants of the 2-step boosted forest based on these combinations and for each combination we can easily find out the variance estimates using the principles outlined in \cref{sec:varxest}.

For an $M$-step boosted forest we can easily see that the number of variants is given by $a_{M+1}$, where
$$
a_n = \sum_{k=1}^n a_{n,k} \;\;\;\text{with}\; a_{n,k} = ka_{n-1,k} + a_{n-1,k-1} \,\forall\, n>k \;\;\;\text{and}\; a_{k,k} = a_{n,1} = 1 \,\forall\, n,k
$$
For each of these variants the final estimate will simply be the sum of all the boosting steps and the variance can be found by following similar steps as outlined in in \cref{sec:varxest}.

\section{Empirical Studies for One-Step Boosted Forest} \label{sec:results}


We shall focus on performances of our algorithm in this section. Our implementation differs slightly from the theory above in the following ways.
\begin{itemize}
\item In \cref{sec:simresults} and \cref{sec:realresults} we construct random forests with $B$ trees in them but in the calculations above we assumed that the trees were all randomly weighted such that the random weights add up to an \textbf{expected value} of $B$, not always exactly equal to $B$. At the beginning of \cref{sec:moreresults} we have shown in detail that the difference between these two approaches are asymptotically negligible.

\item We will also consider the out-of-bag predictions in our implementation for calculating $\hat{F}^{(0)}(x)$ instead of the simple average (inbag prediction) of all the trees in the forest. This is also a form of assigning a weight to the trees in the forest (the weights aren't completely random but fixed given the dataset and the randomly selected subsets) but should also asymptotically give us the same results.

Using out-of-bag residuals could be thought of as akin to the honesty condition in \cite{wager2017estimation} for the second stage of the boosted forest since instead of using all the data for the residuals we use the data that was not used in construction of that particular tree. In fact, because of this we expect the out-of-bag approach to have more variability and hence the ratios $\frac{\overline{\hat{V}_{IJ}}}{V(\widehat{F})}$ in \cref{sec:simresults} should be higher than the expected value of 1. We shall also get slightly more conservative prediction intervals in \cref{sec:realresults} which will lead to higher coverage than the expected value of 95\%.

In \cref{sec:oobvsother} we compare our use of out-of-bag residuals with other boosting formulations where we find this version provides better predictive performance.
\end{itemize}

\subsection{Performance on Simulated Datasets}\label{sec:simresults}

Here we compare the performance of the Algorithms \ref{algo:boostforestsame} and \ref{algo:boostforestind} with different simulated datasets. The base learner which we will compare it against is just the simple random forest, i.e., without any boosting. We will also test the accuracy of our variance estimate by comparing it with the actual variance of the estimates.

Our model is $Y = \sum_{i=1}^5 X_i + \epsilon$, where $X \sim U([-1,1]^{15})$ and $\epsilon \sim N(0,1)$. We fix the points in $[-1,1]^{15}$ where we will make our predictions, given by
$$
p_1 = \mathbf{0}_{15},\; p_2 = (\frac13, \mathbf{0}_{14}^\top)^\top,\; p_3 = \frac1{3\sqrt{15}}*\mathbf{1}_{15},\; p_4 = 2p_3,\; p_5 = 3 p_3
$$
We chose $p_3, p_4$ and $p_5$ to have an idea of how distance of a test point from the the "center" of the dataset affects the performance of our algorithm. 

Out simulation runs for a 1000 iterations - in each of them we generate a dataset of size $n = 500$ and train a random forest and one-step boosted forests (both variants) with it with subsample size $k = 100$ and the number of trees $B$ in $(5000, 10000, 15000)$. For each of these settings we can find a prediction estimate at each of the $p_i$'s given by $\hat{F}_{i,j} = \hat{Y}_{i,j}$ and also corresponding variance estimates given by $\hat{V}_{i,j}$, for $i = 1,\dots, 5$, $j = 1,\dots, 1000$.

\afterpage{\clearpage}

\afterpage{\clearpage
\setlength{\tabcolsep}{4.5pt}
\begin{sidewaystable}[ht]
\centering
\begin{tabular}{|c|c||rrr||rrr||rrr||rrr||rrr|}
\hline
& & \multicolumn{3}{c||}{$p_1$} & \multicolumn{3}{c||}{$p_2$} & \multicolumn{3}{c||}{$p_3$} & \multicolumn{3}{c||}{$p_4$} & \multicolumn{3}{c|}{$p_5$} \\
\cline{3-5} \cline{6-8} \cline{9-11} \cline{12-14} \cline{15-17}\rule{0pt}{3ex}
  &   & RF & BFv1 & BFv2 & RF & BFv1 & BFv2 & RF & BFv1 & BFv2 & RF & BFv1 & BFv2 & RF & BFv1 & BFv2\\
\hhline{*{17}{=}}\rule{0pt}{3ex}

 & $\overline{\text{Bias}}$ & -0.0026 & -0.0050 & -0.0047 & -0.1062 & -0.0174 & -0.0172 & -0.1283 & -0.0181 & -0.0175 & -0.2645 & -0.0427 & -0.0421 & -0.4234 & -0.0910 & -0.0900\\ \rule{0pt}{3ex}
 & $\overline{\widehat{V}_{IJ}}$ & 0.0398 & 0.0890 & 0.0853 & 0.0397 & 0.0895 & 0.0856 & 0.0393 & 0.0883 & 0.0846 & 0.0382 & 0.0869 & 0.0834 & 0.0369 & 0.0856 & 0.0822\\ \rule{0pt}{3ex}
 & $\frac{\overline{\widehat{V}_{IJ}}}{V(\widehat{F})}$ & 1.6992 & 2.2342 & 2.1328 & 1.6834 & 2.2853 & 2.1757 & 1.6478 & 2.1984 & 2.1121 & 1.5366 & 2.0266 & 1.9443 & 1.4839 & 1.9106 & 1.8344\\ \rule{0pt}{3ex}
 & K.S. & 0.0686 & 0.1010 & 0.0943 & 0.2636 & 0.1225 & 0.1197 & 0.3002 & 0.1123 & 0.1029 & 0.5500 & 0.1456 & 0.1415 & 0.7760 & 0.1933 & 0.1905\\ \rule{0pt}{3ex}
 & C.C. & 98.8 & 99.7 & 99.5 & 95.8 & 99.3 & 99.1 & 94.8 & 99.9 & 99.9 & 76.0 & 99.1 & 99.0 & 38.4 & 98.3 & 98.1\\ \rule{0pt}{3ex}
\multirow{-6}{*}{\raggedleft\arraybackslash \rotatebox{90}{$B=5000$}} & P.I. & 0 & -70.10 & -70.74 & 0 & -13.12 & -13.71 & 0 & -0.40 & -0.05 & 0 & 52.88 & 52.94 & 0 & 74.01 & 74.10\\
\hline\rule{0pt}{3ex}

 & $\overline{\text{Bias}}$ & -0.0030 & -0.0056 & -0.0057 & -0.1065 & -0.0182 & -0.0180 & -0.1286 & -0.0180 & -0.0180 & -0.2651 & -0.0430 & -0.0433 & -0.4236 & -0.0902 & -0.0906\\ \rule{0pt}{3ex}
 & $\overline{\widehat{V}_{IJ}}$ & 0.0295 & 0.0709 & 0.0692 & 0.0294 & 0.0713 & 0.0695 & 0.0291 & 0.0702 & 0.0686 & 0.0281 & 0.0690 & 0.0671 & 0.0270 & 0.0678 & 0.0660\\ \rule{0pt}{3ex}
 & $\frac{\overline{\widehat{V}_{IJ}}}{V(\widehat{F})}$ & 1.2584 & 1.7856 & 1.7398 & 1.2500 & 1.8349 & 1.7839 & 1.2157 & 1.7457 & 1.7061 & 1.1407 & 1.6322 & 1.5827 & 1.0994 & 1.5316 & 1.4913\\ \rule{0pt}{3ex}
 & K.S. & 0.0328 & 0.0739 & 0.0734 & 0.2755 & 0.1031 & 0.0993 & 0.3159 & 0.0836 & 0.0825 & 0.5879 & 0.1216 & 0.1158 & 0.8133 & 0.1829 & 0.1829\\ \rule{0pt}{3ex}
 & C.C. & 96.9 & 98.8 & 98.7 & 92.5 & 98.6 & 98.4 & 90.5 & 99.1 & 98.9 & 65.0 & 98.1 & 98.1 & 25.3 & 97.1 & 97.0\\ \rule{0pt}{3ex}
\multirow{-6}{*}{\raggedleft\arraybackslash \rotatebox{90}{$B=10000$}} & P.I. & 0 & -69.29 & -69.44 & 0 & -12.26 & -12.57 & 0 & -0.15 & -0.07 & 0 & 53.53 & 53.33 & 0 & 74.32 & 74.29\\
\hline\rule{0pt}{3ex}

 & $\overline{\text{Bias}}$ & -0.0031 & -0.0054 & -0.0056 & -0.1065 & -0.0179 & -0.0178 & -0.1285 & -0.0176 & -0.0178 & -0.2649 & -0.0426 & -0.0425 & -0.4234 & -0.0897 & -0.0896\\ \rule{0pt}{3ex}
 & $\overline{\widehat{V}_{IJ}}$ & 0.0261 & 0.0649 & 0.0637 & 0.0260 & 0.0653 & 0.0641 & 0.0257 & 0.0643 & 0.0632 & 0.0247 & 0.0630 & 0.0618 & 0.0237 & 0.0618 & 0.0606\\ \rule{0pt}{3ex}
 & $\frac{\overline{\widehat{V}_{IJ}}}{V(\widehat{F})}$ & 1.1107 & 1.6293 & 1.5968 & 1.1123 & 1.6830 & 1.6482 & 1.0747 & 1.6061 & 1.5695 & 1.0074 & 1.4915 & 1.4611 & 0.9692 & 1.4014 & 1.3772\\ \rule{0pt}{3ex}
 & K.S. & 0.0336 & 0.0662 & 0.0614 & 0.2816 & 0.0950 & 0.0937 & 0.3228 & 0.0746 & 0.0730 & 0.6046 & 0.1099 & 0.1089 & 0.8305 & 0.1841 & 0.1830\\ \rule{0pt}{3ex}
 & C.C. & 96.0 & 98.2 & 98.1 & 90.7 & 98.2 & 98.2 & 87.1 & 98.9 & 98.6 & 59.5 & 97.4 & 97.4 & 22.2 & 96.2 & 96.2\\ \rule{0pt}{3ex}
\multirow{-6}{*}{\raggedleft\arraybackslash \rotatebox{90}{$B=15000$}} & P.I. & 0 & -69.77 & -70.12 & 0 & -12.73 & -12.96 & 0 & 0.22 & -0.33 & 0 & 53.51 & 53.48 & 0 & 74.41 & 74.48\\
\hline
\end{tabular}
\caption{\it Comparison of the two variants of the One-Step Boosted Forest (shorthands {\normalfont BFv1} and {\normalfont BFv2} for the two variants respectively) with respect to Random Forests (shorthands {\normalfont RF}). {\normalfont $\overline{\text{Bias}}$} is the average bias of the 3 methods \wrt the theoretical value of $F(p_i)$. $\frac{\overline{\widehat{V}_{IJ}}}{V(\widehat{F})}$ stands for the ratio of estimate of the variance to the variance of the estimates. We also calculate the Kolmogorov Smirnov statistic ({\normalfont K.S.}) to show how close we are to normality. {\normalfont C.C.} is the percentage coverage of a 95\% confidence interval. {\normalfont P.I.} stands for Performance Improvement defined in \eqref{eqn:perfimp}.}
\label{table:simresults}
\end{sidewaystable} }

We test the performance of our algorithm by the following metrics. The corresponding figures are in Table \ref{table:simresults}.

\begin{itemize}
\item The average bias is given by $\overline{\text{Bias}} = \frac1{1000} \sum_{j=1}^{1000} \hat{F}_{i,j} - F(p_i)$, where $F(x_1, \dots, x_{15}) = \sum_{i=1}^5 x_i$. We see that the bias is already fairly low at the origin and the boosted forest doesn't change that substantially. But as the target points moves away from the origin the improvement in bias becomes very obvious.

\item The variance estimate for each algorithm is given by $\overline{\widehat{V}_{IJ}} = \frac1{1000} \sum_{j=1}^{1000} \hat{V}_{i,j}$. For each $p_i$ the typical order for the variance estimate is BFv1 $>$ BFv2 $>$ RF but the value also decreases with $B$ as expected.

\item The ratio $\frac{\overline{\widehat{V}_{IJ}}}{V(\widehat{F})}$ shows the consistency of the infinitesimal Jackknife estimate. A value of 1 is ideal and we see that the empirical results aren't far away from 1. In fact the ratio decreases as $B$ gets larger as should be expected, see \cite{zhou2019asymptotic}.

\item K.S. gives us the Kolmogorov Smirnov statistics testing the hypothesis that the predictions should be normal with the mean given by the sample mean and the variability given by the variance estimate. Since smaller values of this statistic are better, thus we can see marked improvement for both variants of the Boosted Forest as compared to the base random forest. Improvements also get better as the target points move away from the origin and also as the number of trees increase - the second fact being consistent with the assumption $\frac{n}{B_n} \to 0$ mentioned in Theorem \ref{thm:normal}. Overall these numbers are fairly low and shows consistency with Theorem \ref{thm:normal} although the values are expected to be a bit high because we use variance estimate (consistent by \ref{thm:consistency}) instead of the (unknown) actual variance.

\item Constructing 95\% coverage intervals $\hat{F}_{i,j} \pm \Phi^{-1}(0.975) \sqrt{\hat{V}_{i,j}}$ we can check if $F(p_i)$ falls inside that interval for $j = 1, \dots, 1000$. C.C. denotes this coverage probability which we should expect to be close to 95\% for random forests and boosted forests. But we see that due to high bias values for random forests become worse as we move away from the origin. But boosted forests correct for the bias and thus the coverage is always at least 95\%. We also get more precision in our variance estimate as the number of trees increases and thus the coverage values also become less overinflated.

\item We also test \textit{Performance Improvement} (P.I.) which is defined as follows: Fixing $i \in [5]$ we obtain the estimated prediction MSE given by $MSE_{i,j} = (\hat{F}_{i,j} - F(p_i))^2$ for $j = 1, \dots, 1000$. Define
\begin{equation}
\text{Performance Improvement} (P.I.) = 1 - \frac{\sum_{j=1}^{1000} {MSE}_{i,j} \text{for BF}}{\sum_{j=1}^{1000} {MSE}_{i,j} \text{for RF}}. \label{eqn:perfimp}
\end{equation}
Since we are comparing against random forests their own P.I is 0. As for boosted forests P.I is actually worse for the points near the origin as boosting doesn't affect the bias too much but increases the variance quite a lot in comparison. But as we move further away from the origin our algorithm becomes effective at reducing bias compared to the increase in variance and thus we obtain significant improvements. This also gets better with an increase in number of trees since the variance estimates become more precise. Finally note that the two variants perform almost the same but later on in the paper we see Variant II performing better than Variant I.
\end{itemize}

We conclude that the Boosted Forest algorithms give better predictions than the usual random forest algorithm on simulated datasets and that Variant II is more powerful than Variant I.

In \cref{sec:simresultsnoisy} we have also done further simulations for datasets with different levels noise than the one in this section and observed that the one-step boosted forest algorithm is resilient to a moderate amount of noise but as expected, increasing noise reduces its performance. We have also looked into the performance of the method for a nonlinear signal in \cref{sec:simresultsnorm}. And finally note that we used out-of-bag residuals to construct the boosted forests in this section. We have compared it to boosted forests constructed with inbag residuals and also using bootstrapped subsamples in \cref{sec:oobvsother} and we have seen that the out-of-bag method is preferable.

\subsection{Performance on Real Datasets} \label{sec:realresults}

We applied the Boosted Forest algorithms (both variants) to 11 datasets in the UCI database (\cite{lichman2013uci}) which have a regression setting and compared its performance to the Gradient Boosting Machine algorithm [\cite{friedman2001greedy} and the \verb|R| package \verb|GBM|] and the Bias Correction algorithm in \cite{hooker2015bootstrap}. The results are reported in Table \ref{table:realresults}.

\afterpage{\clearpage
\SaveVerb{term}|GBM R|
\begin{table}[H]
\centering

\begin{tabular}{|l||rr||rr||rrr|}
  \hline\rule{0pt}{3ex}
 \multirow{3}{*}{Dataset} & \multicolumn{2}{c||}{Basics} & \multicolumn{2}{c||}{Improvement} & RF & BFv1 & BFv2 \\ \cline{2-8}\rule{0pt}{3ex}
 & n & d & GBM & BC & \multicolumn{3}{c|}{PI length} \\
 & k & varY & BFv1 & BFv2 & \multicolumn{3}{c|}{PI Coverage} \\
  \hline\hline\rule{0pt}{3ex}
  \multirow{2}{*}{Yacht-hydrodynamics} & 300 & 6 & 92.56 & 68.77 & 14.25 & 15.43 & 15.22 \\ \rule{0pt}{3ex}
  & 60 & 229.55 & 81.64 & 82.04 & 91.33 & 99.33 & 99.00 \\
  \hline\rule{0pt}{3ex}
  \multirow{2}{*}{BikeSharing-hour} & 17370 & 14 & 20.37 & 64.61 & 1.48 & 1.51 & 1.50 \\ \rule{0pt}{3ex}
  & 2000 & 2.21 & 73.64 & 73.99 & 100.00 & 100.00 & 100.00 \\
  \hline\rule{0pt}{3ex}
  \multirow{2}{*}{Concrete} & 1030 & 8 & 29.98 & 42.95 & 25.29 & 27.83 & 27.33 \\ \rule{0pt}{3ex}
  & 200 & 279.08 & 51.61 & 52.20 & 96.02 & 99.03 & 98.83 \\
  \hline\rule{0pt}{3ex}
  \multirow{2}{*}{Airfoil} & 1500 & 5 & -35.40 & 36.72 & 14.21 & 15.54 & 15.25 \\ \rule{0pt}{3ex}
  & 300 & 46.95 & 43.92 & 43.65 & 94.33 & 99.27 & 99.07 \\
  \hline\rule{0pt}{3ex}
  \multirow{2}{*}{Boston-housing} & 500 & 13.00 & -34.27 & 18.25 & 0.61 & 0.65 & 0.64 \\ \rule{0pt}{3ex}
  & 150 & 0.17 & 26.12 & 26.22 & 95.80 & 97.40 & 97.00 \\
  \hline\rule{0pt}{3ex}
  \multirow{2}{*}{Auto-mpg} & 390 & 7.00 & 7.00 & 16.13 & 11.40 & 11.94 & 11.86 \\ \rule{0pt}{3ex}
  & 50 & 61.03 & 21.21 & 20.79 & 93.85 & 96.41 & 95.90 \\
  \hline\rule{0pt}{3ex}
  \multirow{2}{*}{Wine-quality-white} & 4890 & 11.00 & -22.82 & 8.77 & 3.34 & 4.06 & 3.96 \\ \rule{0pt}{3ex}
  & 1000 & 0.79 & 11.57 & 11.42 & 98.45 & 99.51 & 99.45 \\
  \hline\rule{0pt}{3ex}
  \multirow{2}{*}{Parkinsons} & 5870 & 16.00 & -25.06 & 7.58 & 35.63 & 43.23 & 42.35\\ \rule{0pt}{3ex}
  & 1000 & 66.14 & 8.25 & 8.09 & 99.71 & 99.97 & 99.95 \\
  \hline\rule{0pt}{3ex}
  \multirow{2}{*}{Wine-quality-red} & 1590 & 11.00 & -9.12 & 5.42 & 2.59 & 2.90 & 2.85\\ \rule{0pt}{3ex}
  & 300 & 0.65 & 7.33 & 7.45 & 95.91 & 97.80 & 97.48 \\
  \hline\rule{0pt}{3ex}
  \multirow{2}{*}{SkillCraft} & 3330 & 18.00 & 2.30 & 2.06 & 4.37 & 5.05 & 5.00\\ \rule{0pt}{3ex}
  & 600 & 2.10 & 4.23 & 4.30 & 98.74 & 99.61 & 99.52 \\
  \hline\rule{0pt}{3ex}
  \multirow{2}{*}{Communities} & 1990 & 96.00 & -2.40 & 1.68 & 0.62 & 0.69 & 0.68\\ \rule{0pt}{3ex}
  & 400 & 0.05 & 2.93 & 3.05 & 96.88 & 98.19 & 98.09 \\
   \hline
\end{tabular}
\caption{\it Comparison of the two variants of the Boosted Forest with the GBM Algorithm [\cite{friedman2001greedy} and the \protect\UseVerb{term} package] and the Bias Correction Algorithm in \cite{hooker2015bootstrap}. We use shorthands for n = number of datapoints, d = number of features, k = size of subset (not applicable for GBM), PI = Prediction Interval \eqref{eqn:predint}, RF = Random Forests, GBM = Gradient Boosting Machine, BC = Bootstrap Bias Correction, BFv1 = Boosted Forest variant 1 (algorithm \ref{algo:boostforestsame}) and similarly for BFv2 (algorithm \ref{algo:boostforestind}). The Improvement and PI Coverage figures are in percentages.}
\label{table:realresults}
\end{table}}

For each dataset we use 10-fold cross-validation to calculate the prediction MSE and then record the improvement (in percentages) compared to the basic random forest algorithm. Improvement is simply 
$$
1-\frac{\text{prediction MSE for improved algorithm}}{\text{prediction MSE for random forest}}
$$
For the \verb|GBM| package in \verb|R| we used a 10\% validation set to select the optimal tuning parameter (number of trees/boosting steps) out of a maximum of 1000. We didn't use subsets but rather the full dataset to construct each tree in that ensemble. For random forests (\verb|randomForest| package in \verb|R|) and the two variants of our Boosted Forest algorithm we also used 1000 trees in the forests and randomly selected subsamples for each tree the size of which is given by the number k in Table \ref{table:realresults}.

We can see that the GBM algorithm doesn't always have improvements over random forests and hence is not reliable as a good reference. Further the Boosted Forest algorithm has consistently registered greater improvement compared to the the Bias Correction algorithm (\cite{hooker2015bootstrap}). Variant 2 of our algorithm slightly outperforms variant 1 in most  cases.

We further validate our variance estimate by constructing test set confidence intervals. A 95\% prediction interval for the datapoint $Z_i = (Y_i,X_i)$ is given by
\begin{equation}
\left( \hat{Y}_i - \Phi^{-1}(0.975)\sqrt{\hat{V}_i + \hat{V}_e},\hat{Y}_i + \Phi^{-1}(0.975)\sqrt{\hat{V}_i + \hat{V}_e} \right) \label{eqn:predint}
\end{equation}
where $\hat{Y}_i$ is the estimate, $\hat{V}_i$ is the variance estimate and $\hat{V}_e = \frac1n \sum_{i=1}^n (\hat{Y}_i - Y_i)^2$ is the residual MSE.

We see that when comparing Boosted Forests (for both variants) with the basic random forest algorithm the length of the prediction interval increases slightly but the prediction coverage (in percentages) increases significantly. The increment in the length of the prediction interval can be attributed to the increase in variability due to boosting. The same can also partially explain the increase in prediction coverage but the main reason for this increase is the reduction in bias due to boosting which leads to better ``centering'' of the prediction interval.

For these datasets as well we used the out-of-bag residuals to construct the random forest in the boosting step. A comparison with other approaches is in \cref{sec:oobvsother}.

\section{Conclusion}

Our algorithm, the \textbf{One-Step Boosted Forest} fits a random forest on a given dataset and then fits another one on the residuals of the former. The sum of these two random forests give us our estimate. This a boosting method for random forests which, even if applied only once, provides performance improvements compared to base algorithm. Since it is a boosting method on a previously bagged estimate, the result should be a very stable algorithm.

The boosted forest also provides an estimate of its own variance which can be obtained with nothing more than the computation needed to calculate the boosted forest estimate itself. We have shown that our method leads to substantial reductions in bias (compared to a small increment in variance) in the regression setting and thus the predictive mean squared error. More such boosting steps can be chained to get more improvements and we devised a fairly simple criteria for when to stop such further boosting. We have only tested our method against the random forest and gradient boosting algorithms but we expect similar results for other ensemble methods.

Following the discussion in \cref{sec:extensions} we could suggest some potential stopping rules for the boosting iteration. As in the original boosting framework in \cite{friedman2001greedy}, we expect that while boosting reduces the bias associated with random forests, it will incur greater variance as boosting progresses and a stopping rule can be based on test set error. Here we can also make use of theoretical results for random forests where we observe that in the $M$-step boosted forest $\left(\hat{F}^{(m)}(x) \left| \hat{F}^{(0)}(x), \dots, \hat{F}^{(m-1)}(x) \right.\right)$ will have an asymptotic normal distribution. We can thus test whether the expectation of the last step is significantly different from zero -- i.e., did the last step contribute to bias reduction?  Tests of this form can be constructed by using a collection of tests points $(x_1,\ldots,x_q)$ for which $\hat{F}^{(m)}(x_1),\ldots,\hat{F}^{(m)}(x_q)$ has a multivariate normal distribution which can be used to for a $\chi^2$ test; similar approaches to testing structure in random forests were described in \cite{mentch2016quantifying}, \cite{mentch2017formal} and \cite{Zhou2Hooker2018}. The development of stopping criteria tests is left to future work. 


We note that commonly-employed diagnostic tools for random forests remain applicable to their boosted version. In particular, variable importance measures based on split improvement scores (e.g. \cite{friedman2001greedy}) can be summed across the forests to arrive at combined importance scores.  Permutation-based methods described in \cite{breiman2001random} should be applied to the combined model.  We note that both types of variable importance are potentially misleading; see \cite{strobl2007bias, ishwaran2007variable, hooker2007generalized, strobl2008conditional, hooker2019please} and therefore do not recommend their use, although \cite{zhou2019unbiased} produces a potential fix to split improvement scores.

\nocite{*} 
\printbibliography

\newpage
\appendix

\section{Exploring the regularity condition}

Our results in \cref{sec:theory} rest on the validity of Condition \ref{cond:regularity}, essentially stating that the variability of $\hat{F}^{(0)}$ has a negligible effect on $\hat{F}^{(1)}$. Although we find that this assumption appears to hold empirically, its asymptotic validity will likely depend strongly on the specific details of how random forest trees are built. First we will give some mathematical intuition for when we can expect this to be the case and then we show empirically that our assumption holds true in simple cases.

\subsection{Kernel Analogy for the regularity condition}\label{sec:kernel}

One way to examine $\hat{F}^{(1)}(x) - \tilde{F}^{(1)}(x)$ is to consider the leaf $L$ of tree $j$ within which $x$ falls and for which the difference in predictions is
$$
D = \frac{1}{|L|} \sum_{i \in L} \left(\hat{F}^{(0)}(X_i) - \EE\left[\hat{F}^{(0)}(X_i)\right]\right)
$$
the average deviation of $\hat{F}^{(0)}(X)$ from its expectation. So long as correlation between predictions within each leaf decays fast enough -- equivalent to the covariance matrix having a finite sum -- and so long as these differences do not change the structure of the tree when $n$ is sufficiently large, then $D$ should decrease relative to $Y_i - \hat{F}^{(0)}(X_i)$ and Condition \ref{cond:regularity} ought to hold.

We might expect low correlation among residuals when the trees in $\hat{F}^{(1)}$ have very different leaves from those in $\hat{F}^{(0)}$; either in choosing different covariates, or in being smaller and picking out more detail. These are exactly the conditions under which we expect one-step boosting to have an advantage: when $\hat{F}^{(1)}$ targets substantially different structure compared to $\hat{F}^{(0)}$.

However, the specific conditions required for this to occur are difficult to verify. An alternative is to use the connections to kernel methods developed by \cite{scornet2016random}. The kernels derived there are not given with respect to an explicit bandwidth but, loosely speaking, smaller bandwidths correspond to deeper trees.  Here we show that the equivalent condition for boosted kernel estimates holds if the second stage estimate either uses different covariates to the first stage, or has a smaller bandwidth.

Suppose we have a kernel $K$, a dataset $\left(X_i,Z_i,Y_i\right)_{i=1}^n$ and bandwidths $h_1$ and $h_2$. For the model $Y_i = f_0(X_i) + f_1(Z_i) + \epsilon_i$, $\epsilon_i \overset{iid}{\sim} (0,\sigma^2)$ define the following kernel estimators.
\begin{align*}
\hat{f}_0(x) &= \frac1{nh_1} \sum_{i=1}^n Y_i K\left(\frac{x-X_i}{h_1}\right) \\
\hat{f}_1(x) &= \frac1{nh_2} \sum_{i=1}^n \left(Y_i - \hat{f}_0(X_i)\right) \cdot K\left(\frac{z-Z_i}{h_2}\right) \\
\tilde{f}_1(x) &= \frac1{nh_2} \sum_{i=1}^n \left(Y_i - \EE\left[\hat{f}_0(X_i)\right]\right) \cdot K\left(\frac{z-Z_i}{h_2}\right)
\end{align*}

For completely randomised forests the kernels do not depend on the response. But in the standard random forest algorithm, the splits we make for each parent-children combination depends on the response. As a result the dataset used in the kernel-type weights given to each response $Y_i$ in the two stages of the boosted forest is expected to differ. This is the motivation behind the definition, more specifically using $X$ and $Z$ instead of only $X$. So if we consider the analogous case of the definition of $\hat{f}_1$ ($\hat{F}^{(1)}$) and $\tilde{f}_1$ ($\tilde{\hat{F}}^{(1)}$), then the relationship between $Z$ and $X$ falls between the two extremes
\begin{enumerate}[(a)]
\item $Z$ is the same as $X$, i.e., it's joint distribution is the same as $\ind_{\{X=Z\}} \times g(X)$, where $g$ is some density. Loosely speaking this is the same as saying that the joint distribution is concentrated on the "diagonal".
\item $Z$ and $X$ has a joint distribution, which is different to the one above, and loosely speaking it isn't concentrated on the "diagonal".
\end{enumerate}
We will show that
\begin{equation}
\frac{\hat{f}_1(x) - \tilde{f}_1(x)}{\sqrt{Var(\hat{f}_1(x))}} \xrightarrow{p} 0, \label{eqn:kernelregularity}
\end{equation}
holds for both cases (a) and (b) above.

For case (a) assume that $h_2/h_1 \to 0$ as $n \to \infty$
\begin{align*}
\hat{f}_1(x) - \tilde{f}_1(x) &= -\frac1{nh_2} \sum_{i=1}^n \left(\hat{f}_0(X_i) - \EE[\hat{f}_0(X_i)]\right) \cdot K\left(\frac{x-X_i}{h_2}\right) \\
&= -\frac1{n^2h_1h_2} \sum_{i=1}^n \sum_{j=1}^n \left(Y_j - \EE[Y_j]\right) \cdot K\left(\frac{X_i-X_j}{h_1}\right) \cdot K\left(\frac{x-X_i}{h_2}\right) \\
&= -\frac1n \sum_{j=1}^n \epsilon_j \left[\frac1{h_1h_2} \cdot \frac1n \sum_{i=1}^n K\left(\frac{X_i-X_j}{h_1}\right) \cdot K\left(\frac{x-X_i}{h_2}\right) \right] \\
&\approx -\frac1n \sum_{j=1}^n \epsilon_j \left[\frac1{h_1h_2} \cdot \int K\left(\frac{u-X_j}{h_1}\right) \cdot K\left(\frac{x-u}{h_2}\right) g(u) du \right] \\
&= -\frac1n \sum_{j=1}^n \epsilon_j \cdot \frac1{h_1} \int K\left( \frac{h_2}{h_1}t + \frac{x-X_j}{h_1} \right) K(t) g(x+th_2) dt \\
&\approx -\frac1n \sum_{j=1}^n \epsilon_j \cdot \frac1{h_1} K\left(\frac{x-X_j}{h_1}\right) \int K(t)g(x+th_2)dt
\end{align*}
Hence $Var(\hat{f}_1(x) - \tilde{f}_1(x)) = O(\frac1{nh_1})$. But $Var(\hat{f}_1(x)) = O(\frac1{nh_2})$. So we have
$$
\frac{Var(\hat{f}_1(x) - \tilde{f}_1(x))}{Var(\hat{f}_1(x))} = O\left(\frac{1/nh_1}{1/nh_2}\right) = O\left(\frac{h_2}{h_1}\right) \to 0
$$
Thus $\frac{\hat{f}_1(x) - \tilde{f}_1(x)}{\sqrt{Var(\hat{f}_1(x))}}$ converges to 0 in $\LLL^2$ and hence in probability as well.

We saw that for the case $Z=X$ \eqref{eqn:kernelregularity} holds under the condition that the bandwidth for the second stage is smaller than that of the first stage. In terms of random forests (following the calculations in \cite{scornet2016random}), the equivalent quantity to bandwidth is the inverse of the depth of each tree in the forest. So for construction of the boosted forest if we build the second stage trees to be deeper than the first stage trees then we should expect \ref{cond:regularity} to hold. This also makes intuitive sense, since we expect the first stage random forest to pick up most of the signal and to pick up any additional signal leftover in the residuals we would need to build deeper forests in the second stage.

For case (b) we only need to assume that $h_2 \to 0$ as $n \to \infty$ as per the following calculations
\begin{align*}
\hat{f}_1(z) - \tilde{f}_1(z) &= -\frac1{nh_2} \sum_{i=1}^n \left(\hat{f}_0(X_i) - \EE[\hat{f}_0(X_i)]\right) \cdot K\left(\frac{z-Z_i}{h_2}\right) \\
&= -\frac1{n^2h_1h_2} \sum_{i=1}^n \sum_{j=1}^n \left(Y_j - \EE[Y_j]\right) \cdot K\left(\frac{X_i-X_j}{h_1}\right) \cdot K\left(\frac{z-Z_i}{h_2}\right) \\
&= -\frac1n \sum_{j=1}^n \epsilon_j \left[\frac1{h_1h_2} \cdot \frac1n \sum_{i=1}^n K\left(\frac{X_i-X_j}{h_1}\right) \cdot K\left(\frac{z-Z_i}{h_2}\right) \right] \\
&\approx -\frac1n \sum_{j=1}^n \epsilon_j \left[\frac1{h_1h_2} \cdot \int K\left(\frac{u-X_j}{h_1}\right) \cdot K\left(\frac{z-v}{h_2}\right) g(u,v) du dv \right] \\
&= -\frac1n \sum_{j=1}^n \epsilon_j \cdot \int K(s) K(t) g(X_j+sh_1, z+th_2) dt
\end{align*}
Hence in this case $Var(\hat{f}_1(x) - \tilde{f}_1(x)) = O(\frac1n)$. But $Var(\hat{f}_1(x)) = O(\frac1{nh_2})$. So we have
$$
\frac{Var(\hat{f}_1(x) - \tilde{f}_1(x))}{Var(\hat{f}_1(x))} = O\left(\frac{1/n}{1/nh_2}\right) = O(h_2) \to 0,
$$
since we expect the bandwidth ($h_2$) for a kernel to be narrower as the amount of data ($n$) increases. Thus $\frac{\hat{f}_1(x) - \tilde{f}_1(x)}{\sqrt{Var(\hat{f}_1(x))}}$ converges to 0 in $\LLL^2$ and hence in probability as well.

So if $Z$ and $X$ are not the same then \eqref{eqn:kernelregularity} holds for any bandwidths $h_1$ and $h_2$. This is an idealised scenario that might not hold in the standard boosted forest construction and in particular it assumes the  $\hat{F}^{(0)}(x) - \EE[\hat{F}^{(0)}(x)]$ has negligible impact on the structure of the trees in $\hat{F}^{(1)}$. Nonetheless, we believe that our discussion does provide some intuition for when Condition \ref{cond:regularity} is likely to hold and emphasize that it does appear to be reasonable in practice.

\subsection{Empirical evaluation of the regularity condition}\label{sec:regularplot}

We consider a small scale empirical excursion into the regularity condition. Our model is $Y = X_1 + X_2 + \epsilon$, where $X \sim U([-1,1]^{5})$ and $\epsilon \sim N(0,0.25)$. First we fix a set of 1000 test points used for the final predictions of $\hat{F}^{(1)}(x)$ and $\check{F}^{(1)}(x)$. For calculating the values of $\EE \hat{F}^{(0)}(X_i)$ we fit 40 random forests on different training datasets and then take the average prediction from those on the fixed test points. We vary the sample size ($n$) of the training dataset in the range (100, 500, 1000, 2000) and to fix the maximum depth of the trees we set the nodesizes to be in the scale of $\log(n)$. The we measured the variance in the regularity condition by computing $\hat{F}^{(1)}$ and $\check{F}^{(1)}$ on the same 40 data sets used to estimate $\EE \hat{F}^{(0)}$. The ratio decreases as $n$ increases as shown in the figure below (with 1 standard deviation error-bars).

\begin{figure}[H]
\centering
\includegraphics[width=.5\textheight, height=.5\columnwidth]{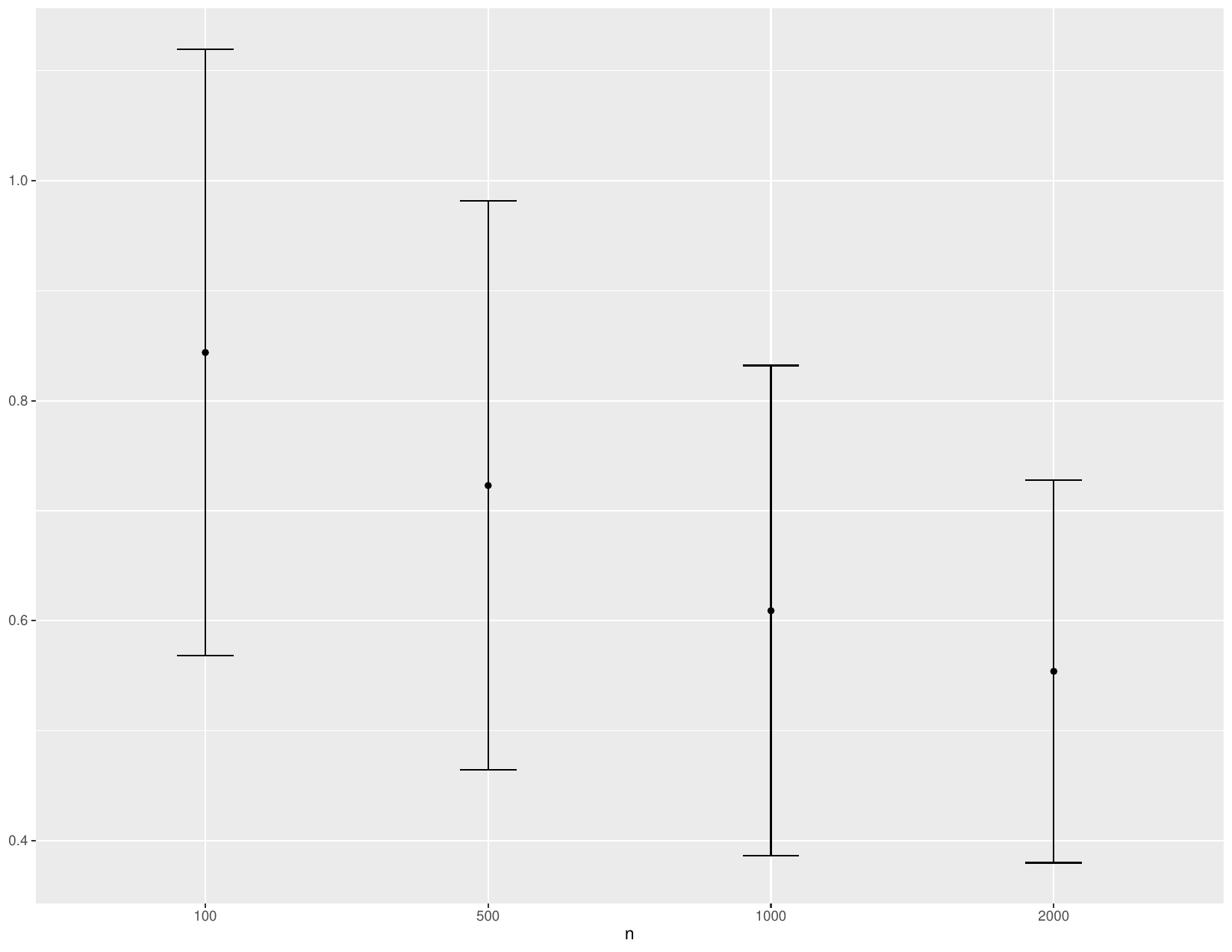}
\end{figure}

\section{Proofs}\label{sec:proofs}

\subsection{U-statistics and theoretical variance of one-step Boosted Forests}\label{sec:Uvarest}

To find the theoretical variance of the \textit{one-step boosted forest} we must familiarise ourselves with a result about U-statistics from \cite{hoeffding1948class}. [See \cite{lee1990u} for more details.]

\begin{lem}\label{lem:ustatvar}
If $h(z_1, \dots, z_k)$ is a symmetric function and a consistent estimator of $\theta$, then the U-statistic $U$ with kernel $h$ defined by
$$
U(Z_1, \dots, Z_n) = \frac1{{n \choose k}} \sum_{I \subseteq [n]: |I| = k} h(Z_I) = \frac1{{n \choose k}} \sum_{I \subseteq [n]: |I| = k} h(Z_{I_1}, \dots, Z_{I_k})
$$
is the MVUE for $\theta$ given an i.i.d. dataset $Z_{[n]} = (Z_1, \dots, Z_n)$. Further the U-statistic is asymptotically normal with variance $\frac{k^2}{n} \zeta_{1,k}$, where
\begin{align*}
\zeta_{c,k} &= cov(h(Z_1,\dots,Z_c,Z_{c+1},\dots,Z_k), h(Z_1,\dots,Z_c,Z'_{c+1},\dots,Z'_k)) \\
&= var(\EE(h(Z_1,\dots,Z_k) | Z_1 = z_1, \dots, Z_c = z_c))
\end{align*}
\end{lem}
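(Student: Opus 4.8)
The plan is to handle the three assertions of the lemma—the MVUE property, the two equivalent forms of $\zeta_{c,k}$, and asymptotic normality with the stated variance—by three different classical tools, using throughout the notation $h_c(z_1,\dots,z_c) = \EE[h(z_1,\dots,z_c,Z_{c+1},\dots,Z_k)]$ for the kernel with its last $k-c$ arguments integrated out, so that $h_0 = \theta$ and $h_k = h$. For the MVUE claim I would appeal to the Lehmann--Scheff\'e theorem. Unbiasedness of $U$ is immediate, since each summand $h(Z_I)$ is a symmetric unbiased kernel with mean $\theta$ under the i.i.d. assumption. The structural point is that $U$ is precisely the Rao--Blackwellization of $h(Z_1,\dots,Z_k)$ given the unordered sample: averaging $h$ over all $\binom{n}{k}$ subsets equals conditioning on the order statistics. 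Since the order statistics form a complete sufficient statistic for the nonparametric family of continuous distributions, any unbiased function of them is the unique minimum-variance unbiased estimator, and $U$ is such a function.

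The equivalence of the two expressions for $\zeta_{c,k}$ follows from the law of total covariance. Conditioning on the shared variables $Z_1,\dots,Z_c$, the two copies $h(Z_1,\dots,Z_c,Z_{c+1},\dots,Z_k)$ and $h(Z_1,\dots,Z_c,Z'_{c+1},\dots,Z'_k)$ become independent because they share no fresh arguments; their conditional covariance is therefore zero, and the total covariance collapses to $var\Lp\EE[h \mid Z_1,\dots,Z_c]\Rp = var(h_c(Z_1,\dots,Z_c))$, which is the second displayed form.

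For the asymptotic statement I would use the Hoeffding decomposition together with the H\'ajek projection. The exact variance admits the expansion $var(U) = \frac{1}{\binom{n}{k}}\sum_{c=1}^{k}\binom{k}{c}\binom{n-k}{k-c}\zeta_{c,k}$, whose dominant term as $n\to\infty$ is $\frac{k^2}{n}\zeta_{1,k}$. To obtain normality I would introduce the projection $\hat U = \theta + \frac{k}{n}\sum_{i=1}^{n}\Lp h_1(Z_i) - \theta\Rp$, a normalized sum of i.i.d. terms with variance exactly $\frac{k^2}{n}\zeta_{1,k}$ to which the ordinary central limit theorem applies. One then verifies $\EE\Lp U - \hat U\Rp^2 = o(1/n)$, so that $\sqrt{n}(U-\hat U)$ vanishes in $L^2$ and hence in probability, and Slutsky's theorem transfers the limiting normal law from $\hat U$ to $U$. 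The main obstacle is this projection-error bound: it amounts to showing that the higher-order terms of the decomposition (those carrying $\zeta_{c,k}$ for $c\ge 2$) are of strictly smaller order, for which the nondegeneracy $\zeta_{1,k}>0$ is essential and where the bulk of the classical argument's care lies; the nonparametric completeness invoked in the MVUE step is the other comparatively delicate ingredient.
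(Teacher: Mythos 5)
The paper gives no proof of this lemma, presenting it as a classical result cited from Hoeffding (1948) and Lee (1990); your proposal is a correct sketch of exactly the standard argument those sources contain — Lehmann--Scheff\'e via completeness of the order statistics for the MVUE claim, conditional independence given the shared arguments for the equivalence of the two forms of $\zeta_{c,k}$, and the H\'ajek projection together with the exact variance expansion $var(U)=\binom{n}{k}^{-1}\sum_{c=1}^{k}\binom{k}{c}\binom{n-k}{k-c}\zeta_{c,k}$ for the asymptotic normality. The only reading notes are that the hypothesis ``consistent estimator of $\theta$'' must be interpreted as unbiasedness, $\EE\, h(Z_1,\dots,Z_k)=\theta$, for the Rao--Blackwell and projection steps to go through, and that the normality conclusion additionally requires the nondegeneracy $\zeta_{1,k}>0$, which you correctly flag.
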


\textbf{Note:} $\zeta_{k,k} = var(h(Z_1,\dots,Z_k))$ follows from the above definition of $\zeta_{c,k}$.

To calculate the theoretical prediction variance of one-step boosted forests we will first condition over the weights to get a complete U-statistic and then we use the above lemma. As a reminder we had stated before that $\zeta_{k,k} = var(h(Z_1,\dots,Z_k))$ and had assumed, without loss of generality, that $\EE [T(x; Z_I)] = 0$ for all $I \subseteq [n]: |I| = k$.

\textbf{Variant I}: For the first variant of boosted forests $w_I^{(1)} = w_I^{(0)}$. Then we have
\begin{align*}
V_{same}(x) := var(\hat{F}(x)) &= var_Z(\EE_w \hat{F}(x)) + \EE_Z(var_w \hat{F}(x))\\
&= var_Z \left[\EE_w \left(\frac1{{n \choose k}} \sum_{I \subseteq [n]: |I| = k} w_I^{(0)} \big(T(x; Z_{I}^{(0)}) + T(x; Z_{I}^{(1)}) \big)\right)\right]\\
&\qquad\qquad + \EE_Z \left[var_w \left(\frac1{{n \choose k}} \sum_{I \subseteq [n]: |I| = k} w_I^{(0)} \big(T(x; Z_{I}^{(0)}) + T(x; Z_{I}^{(1)}) \big)\right)\right] \\
&= var_Z\left[\frac1{{n \choose k}} \sum_{I \subseteq [n]: |I| = k} \left(T(x; Z_{I}^{(0)}) + T(x; Z_{I}^{(1)}) \right) \right] \\
&\qquad\qquad + \EE_Z\left[\frac{c}{{n \choose k}^2} \sum_{I \subseteq [n]: |I| = k} \left(T(x; Z_{I}^{(0)}) + T(x; Z_{I}^{(1)}) \right)^2 \right] \\
&= \frac{k^2}{n}\zeta_{1,k} \cdot (1 + \epsilon_{k,n}) + \frac{c}{{n \choose k}} \zeta_{k,k},
\end{align*}
where under the conditions of Lemma \ref{lem:ustatvar}, $\epsilon_{k,n} \to 0$ as $n \to \infty$ for a fixed $k$. But also under Condition \ref{cond:mainLF} $\epsilon_{k,n} \to 0$ if $k = o(n)$. The $\zeta$ values are based on the kernel being $T(x; Z_{I}^{(0)}) + T(x; Z_{I}^{(1)})$, i.e., sum of the trees in the two stages rather than the individual trees for separate stages.

\textbf{Variant II:} Here $w_I^{(0)}$ and $w_I^{(1)}$ are independent sets of binary weights. Then we see that
\begin{align*}
V_{ind}(x) := var(\hat{F}(x)) &= var_Z(\EE_w \hat{F}(x)) + \EE_Z(var_w \hat{F}(x)) \\
&= var_Z \left[\EE_w \left(\frac1{{n \choose k}} \sum_{I \subseteq [n]: |I| = k} \big( w_I^{(0)} T(x; Z_{I}^{(0)}) + w_I^{(1)} T(x; Z_{I}^{(1)}) \big)\right)\right] \\
&\qquad\qquad + \EE_Z \left[var_w \left(\frac1{{n \choose k}} \sum_{I \subseteq [n]: |I| = k} \big( w_I^{(0)} T(x; Z_{I}^{(0)}) + w_I^{(1)} T(x; Z_{I}^{(1)}) \big)\right)\right] \\
&= var_Z\left[\frac1{{n \choose k}} \sum_{I \subseteq [n]: |I| = k} \left(T(x; Z_{I}^{(0)}) + T(x; Z_{I}^{(1)}) \right) \right] \\
&\qquad\qquad + \EE_Z\left[\frac{c}{{n \choose k}^2} \sum_{I \subseteq [n]: |I| = k} \left(T(x; Z_{I}^{(0)})^2 + T(x; Z_{I}^{(1)})^2 \right) \right] \\
&= \frac{k^2}{n}(\zeta^{(0)}_{1,k} + \zeta^{(1)}_{1,k} + 2\zeta^{(0,1)}_{1,k}) \cdot (1 + \epsilon_{k,n}) + \frac{c}{{n \choose k}} (\zeta^{(0)}_{k,k} + \zeta^{(1)}_{k,k}), 
\end{align*}
where $\epsilon_{k,n} \to 0$ under the same conditions described in the previous paragraph. Here the $\zeta^{(0)}$ and $\zeta^{(1)}$ values are based on the kernels being $T(x; Z_{I}^{(0)})$ and $T(x; Z_{I}^{(1)})$ respectively. Also here $\zeta^{(0,1)}_{1,k}$ is the (scaled) covariance term between the trees in the two stages, $T(x; Z_{I}^{(0)})$ and $T(x; Z_{I}^{(1)})$.

Note that the first of the two terms in \eqref{eqn:bfsamevar} and \eqref{eqn:bfindvar} correspond to $var_Z(\EE_w \hat{F}(x))$, i.e., variance of a complete (unweighted) U-statistic. So then theoretically $\zeta_{1,k} = \zeta^{(0)}_{1,k} + \zeta^{(1)}_{1,k} + 2\zeta^{(0,1)}_{1,k}$. This property will be useful in the proof of Theorem \ref{thm:normal}. The different formulae for the same quantity become useful for estimation purposes. We can use the structure in the construction of variant I (i.e., same subsets on both stages) to estimate $\zeta_{1,k}$ but due to the absence of such structure in variant II we shall need to estimate each term in $\zeta^{(0)}_{1,k} + \zeta^{(1)}_{1,k} + 2\zeta^{(0,1)}_{1,k}$ separately. 

\subsection{Asymptotic Normality of One-Step Boosted Forests} \label{sec:proofmain}

\textit{Proof of Theorem \ref{thm:normal}}. We will follow the same notation as in \cref{sec:basicalgo} and keep in mind the simplification in notation discussed at the end of \cref{sec:assumption}. We first focus on the case of Variant I and or simplicity of notation we define $M_n = {n \choose k_n}$ and $T_I = T(x;Z_I^{(0)})+T(x;Z_I^{(1)})$. Then we have that

$$
\hat{F}(x) = \frac1{M_n} \sum_{(I)} w_I^{(0)}T_I,
$$
where the sum $(I)$ is over all subsets $I \subseteq [n]; |I| = k$. Now we define $\tilde{F}(x) = \frac1{M_n} \sum_{(I)} T_I$ and $w'_I = w_I^{(0)}-1$ to be independent binary random variables with the following properties
\begin{align*}
\EE[w'_I] &= \EE[w_I^{(0)}] - 1 = 0; \\
\EE[(w'_I)^2] &= \EE[(w_I^{(0)})^2] - 2\EE[w_I^{(0)}] + 1 = M_n/B_n - 1; \\
\text{for}\; I \neq J, \;\EE[w'_Iw'_J] &= \EE[w_I^{(0)}w_J^{(0)}] - \EE[w_I^{(0)}] - \EE[w_J^{(0)}] + 1 = 1 \cdot 1 - 1 -1 + 1 = 0
\end{align*}

Also note that $\EE[T^2] = \zeta_{k_n,k_n}$ and hence
\begin{align*}
\EE[(\hat{F}(x) - \tilde{F}(x))^2] &= \EE\left[ \left(\frac1{M_n} \sum_{(I)} (w_I^{(0)}T_I - T_I)\right)^2 \right] = \EE\left[ \left(\frac1{M_n} \sum_{(I)} w'_I T_I\right)^2 \right] \\
&= \frac1{M_n^2} \left[ \EE\left[\sum_{(I)} (w'_I T_I)^2 \right] + \EE\left[ \sum_{I \neq J} w'_Iw'_J T_IT_J \right] \right] \\
&= \frac1{M_n^2} \left[ M_n\left(\frac{M_n}{B_n} - 1\right)\EE[T^2] + 2{M_n \choose 2} \cdot 0 \right] \qquad [\text{we condition over } T] \\
&= \left(\frac1{B_n} - \frac1{M_n}\right) \EE[T^2] \\
\implies \frac{\EE[(\hat{F}(x) - \tilde{F}(x))^2]}{\frac{k_n^2}{n}\zeta_{1,k_n} + \frac1{B_n} \zeta_{k_n,k_n}} &= \frac{\left(\frac1{B_n} - \frac1{M_n}\right) \zeta_{k_n,k_n}}{\frac{k_n^2}{n}\zeta_{1,k_n} + \frac1{B_n} \zeta_{k_n,k_n}} = \frac{1 - \frac{B_n}{M_n}}{\frac{B_n}{n} \cdot k_n \cdot \frac{k_n\zeta_{1,k_n}}{\zeta_{k_n,k_n}} + 1} \to 0 \\
&\qquad\left[\because k_n \to \infty, \frac{B_n}{n} \to \infty, \lim\limits_{n \to \infty} \frac{k_n\zeta_{1,k_n}}{\zeta_{k_n,k_n}} \neq 0 \text{ and } B_n \leq M_n \right]
\end{align*}
Thus similar to results in \cite{scornet2016asymptotics} we showed that $\hat{F}(x)$ and $\tilde{F}(x)$ are asymptotically close (with appropriate scaling). We also need to show that their variances are asymptotically close as well.
\begin{align*}
\frac{\frac{k_n^2}{n}\zeta_{1,k_n}}{\frac{k_n^2}{n}\zeta_{1,k_n} + \frac1{B_n} \zeta_{k_n,k_n}} &= \frac1{1 + \frac{n}{B_n} \cdot \frac{1}{k_n} \cdot \frac{\zeta_{k_n,k_n}}{k_n\zeta_{1,k_n}}} \to 1
\end{align*}
Now we know that $\tilde{F}(x)$ is a complete U-statistic and hence by Theorem 1(i) of \cite{mentch2016quantifying} \footnote{Note that the assumptions concerning $k_n$ and $\zeta_{1,k_n}$ in Theorem 1 of \cite{mentch2016quantifying} are not consistent. We assume $k_n/n \to 0$ and $\lim \frac{k_n\zeta_{1,k_n}}{\zeta_{k_n,k_n}} \neq 0$ instead and if we follow the proof of main theorem of that paper then these conditions can easily be shown to give us the same result.} and Theorems 11.2 and 12.3 of \cite{van2000asymptotic} we have
\begin{align*}
\frac{\tilde{F}(x)}{\left(\frac{k_n^2}{n}\zeta_{1,k_n}\right)^{1/2}} &\xrightarrow{\DDD} Z \sim \NNN(0,1) \\
\implies \frac{\hat{F}(x)}{\left(\frac{k_n^2}{n}\zeta_{1,k_n} + \frac1{B_n} \zeta_{k_n,k_n}\right)^{1/2}} &= \frac{\tilde{F}(x)}{\left(\frac{k_n^2}{n}\zeta_{1,k_n}\right)^{1/2}} \cdot \left(\frac{\frac{k_n^2}{n}\zeta_{1,k_n}}{\frac{k_n^2}{n}\zeta_{1,k_n} + \frac1{B_n} \zeta_{k_n,k_n}}\right)^{1/2} + \frac{\hat{F}(x) - \tilde{F}(x)}{\left(\frac{k_n^2}{n}\zeta_{1,k_n} + \frac1{B_n} \zeta_{k_n,k_n}\right)^{1/2}} \\
&\xrightarrow{\DDD} Z \cdot 1^{1/2} + 0 \qquad[\because \text{ convergence in } \LLL^2 \text{ implies convergence in probability}]\\
&= Z \sim \NNN(0,1) \qquad[\text{by Slutsky's Theorem}]
\end{align*}
So if we define $\sigma_n^2(x) = \frac{k_n^2}{n}\zeta_{1,k_n} + \frac1{B_n} \zeta_{k_n,k_n}$ then we have central limit theorem for Variant I of the One-Step Boosted Forest (Algorithm \ref{algo:boostforestsame}).

Similarly when we define $\hat{F}(x)$ to be Variant II of the One-Step Boosted Forest then we can easily see that $\tilde{F}(x) = \tilde{F}^{(0)}(x) + \tilde{F}^{(1)}(x)$ satisfies [$\because \zeta_{1,k_n} = \zeta^{(0)}_{1,k_n} + \zeta^{(1)}_{1,k_n} + 2\zeta^{(0,1)}_{1,k_n}$]
$$
\frac{\tilde{F}(x)}{\left(\frac{k^2}{n}(\zeta^{(0)}_{1,k_n} + \zeta^{(1)}_{1,k_n} + 2\zeta^{(0,1)}_{1,k_n})\right)^{1/2}} \xrightarrow{\DDD} \NNN(0,1)
$$
Following similar calculations as above we can also easily show that
$$
\frac{\frac{k^2}{n}(\zeta^{(0)}_{1,k_n} + \zeta^{(1)}_{1,k_n} + 2\zeta^{(0,1)}_{1,k_n})}{\frac{k^2}{n}(\zeta^{(0)}_{1,k_n} + \zeta^{(1)}_{1,k_n} + 2\zeta^{(0,1)}_{1,k_n}) + \frac1{B_n} (\zeta^{(0)}_{k_n,k_n} + \zeta^{(1)}_{k_n,k_n})} \to 1
$$
and that $\hat{F}(x)$ and $\tilde{F}(x)$ are close together (with appropriate scaling). Thus if we define $\sigma_n^2(x) = \frac{k^2}{n}(\zeta^{(0)}_{1,k_n} + \zeta^{(1)}_{1,k_n} + 2\zeta^{(0,1)}_{1,k_n}) + \frac1{B_n} (\zeta^{(0)}_{k_n,k_n} + \zeta^{(1)}_{k_n,k_n})$ we also have proved central limit theorem for Variant II of the One-Step Boosted Forest (Algorithm \ref{algo:boostforestind}). \qed

\subsection{Joint Normality of the Boosting Steps}\label{sec:proofnormal}

This theorem is a generalisation of Theorem \ref{thm:normal} stating that the boosting steps have a joint asymptotic normal distribution (with appropriate scaling). For this we need to assume a slightly more stringent assumption for the variances of each step [$\zeta_{1,k_n}^{(0)}$ and $\zeta_{1,k_n}^{(1)}$] rather than the variance of their sum as we did for the previous theorem.

\begin{thm}\label{thm:main}
Assume that the dataset $Z_1^{(0)}, Z_2^{(0)}, \dots \overset{iid}{\sim} D_{Z^{(0)}}$ and $\tilde{F}^{(0)}$ and $\tilde{F}^{(1)}$ are first and second stage random forests constructed based the tree kernel $T$ with mean zero and on all possible subsets of size $k_n$. Also assume that $\EE T^4(x; Z_1,\dots,Z_{k_n}) \leq C < \infty$ for all $x, n$ and some constant $C$ and that $\lim\limits_{n \to \infty} \frac{\zeta_{1,k_n}^{(0)}}{\zeta_{1,k_n}^{(1)}} \in (0, \infty)$. Then under conditions \ref{cond:regularity} and \ref{cond:mainLF} and as long as $k_n, B_n \to \infty$ such that $\frac{k_n}{n} \to 0$ and $\frac{n}{B_n} \to 0$ as $n \to \infty$ as well as $\lim\limits_{n \to \infty} \frac{k_n \zeta_{1,k_n}^{(0)}}{\zeta_{k_n,k_n}^{(0)}} \neq 0$ and $\lim\limits_{n \to \infty} \frac{k_n \zeta_{1,k_n}^{(1)}}{\zeta_{k_n,k_n}^{(1)}} \neq 0$ we have
\begin{equation}
\Sigma_n^{-1/2} \begin{pmatrix} \tilde{F}^{(0)}(x) \\ \tilde{F}^{(1)}(x) \end{pmatrix} \xrightarrow{\DDD} \NNN(\mathbf{0}, \mathbf{I}_2), \label{eqn:conv}
\end{equation}
for some $2 \times 2$ covariance matrix $\Sigma_n$, which can be consistently estimated by the infinitesimal Jackknife covariance matrix estimator given by
\begin{equation}
\hat{\Sigma}_{IJ} = \sum_{i=1}^n \begin{pmatrix}
cov_*[N_{i,b}^{(0)}, \,T^{(0)}_b(x)]\\ cov_*[N_{i,b}^{(1)}, \,T^{(1)}_b(x)]
\end{pmatrix} \cdot \begin{pmatrix}
cov_*[N_{i,b}^{(0)}, \,T^{(0)}_b(x)]\\ cov_*[N_{i,b}^{(1)}, \,T^{(1)}_b(x)]
\end{pmatrix}^\top \label{eqn:IJest}
\end{equation}
\end{thm}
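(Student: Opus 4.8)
The plan is to establish the two assertions --- joint asymptotic normality and consistency of $\hat{\Sigma}_{IJ}$ --- separately, reducing the first to the one-dimensional result already proved in Theorem \ref{thm:normal} by means of the Cramér--Wold device. The key observation is that for any fixed vector $\mathbf{a} = (a_0, a_1)^\top \in \RR^2$, the linear combination $a_0 \tilde{F}^{(0)}(x) + a_1 \tilde{F}^{(1)}(x)$ is again a complete U-statistic, with symmetric kernel $h_{\mathbf{a}}(Z_I) = a_0 T(x; Z_I^{(0)}) + a_1 T(x; Z_I^{(1)})$. This is exactly the structure treated in the proof of Theorem \ref{thm:normal}, which corresponds to the single choice $\mathbf{a} = (1,1)^\top$, so the same argument applies: by Theorem 1(i) of \cite{mentch2016quantifying} together with Theorems 11.2 and 12.3 of \cite{van2000asymptotic}, this combination is asymptotically normal with variance $\tfrac{k_n^2}{n}\zeta_{1,k_n}^{\mathbf{a}}$, where $\zeta_{1,k_n}^{\mathbf{a}} = a_0^2 \zeta_{1,k_n}^{(0)} + a_1^2 \zeta_{1,k_n}^{(1)} + 2 a_0 a_1 \zeta_{1,k_n}^{(0,1)}$. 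The fourth-moment bound $\EE T^4 \leq C$ supplies a Lyapunov-type bound on the projected kernel $T_{1,k_n}$ that is uniform in $\mathbf{a}$, so the Lindeberg hypothesis of Condition \ref{cond:mainLF} transfers to every combined kernel $h_{\mathbf{a}}$ with no additional assumptions.

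To obtain the stated standardization by $\Sigma_n^{-1/2}$ I would then apply Cramér--Wold to the normalized vector: for any unit vector $\mathbf{u}$, set $\mathbf{a}_n = \Sigma_n^{-1/2}\mathbf{u}$ and note that $\mathbf{a}_n^\top (\tilde{F}^{(0)}(x),\tilde{F}^{(1)}(x))^\top$ has variance exactly $1$ by construction. Since these coefficients depend on $n$, I would invoke a triangular-array version of the U-statistic CLT, again justified by the Lyapunov bound coming from $\EE T^4 \leq C$. The role of the extra hypothesis $\lim_{n} \zeta_{1,k_n}^{(0)}/\zeta_{1,k_n}^{(1)} \in (0,\infty)$ is to keep $\Sigma_n$ well-conditioned --- its two diagonal entries of the same order --- so that $\mathbf{a}_n$ stays bounded and the non-degeneracy ratios $\tfrac{k_n \zeta_{1,k_n}^{\mathbf{a}_n}}{\zeta_{k_n,k_n}^{\mathbf{a}_n}}$ remain bounded away from zero, which is precisely what the per-step conditions $\lim \tfrac{k_n \zeta_{1,k_n}^{(j)}}{\zeta_{k_n,k_n}^{(j)}} \neq 0$ guarantee for $j = 0,1$. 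This yields \eqref{eqn:conv}.

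For the consistency of $\hat{\Sigma}_{IJ}$ I would argue entrywise. The two diagonal entries of \eqref{eqn:IJest} are exactly the infinitesimal Jackknife variance estimators of $\tilde{F}^{(0)}(x)$ and $\tilde{F}^{(1)}(x)$, consistent by Theorem 1 of \cite{wager2017estimation}, while the off-diagonal entry is the two-sample infinitesimal Jackknife covariance estimator whose consistency is established in Lemma \ref{lem:IJest} of \cref{sec:proofIJ}. Because convergence in probability is preserved under the finitely many continuous operations that assemble the matrix, $\hat{\Sigma}_{IJ}$ is consistent for $\Sigma_n$.

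The main obstacle I anticipate is the triangular-array step: because the standardizing coefficients $\mathbf{a}_n$ vary with $n$, the one-dimensional result of Theorem \ref{thm:normal} cannot be quoted verbatim, and one must re-run its Hájek-projection argument while controlling the combined kernel's projection uniformly over the relevant coefficient sequences --- the fourth-moment assumption being what makes this uniform control possible. The genuinely delicate point is ensuring that the limiting correlation between the two forests stays strictly inside $(-1,1)$, so that $\Sigma_n^{-1/2}$ does not blow up and every direction $\zeta_{1,k_n}^{\mathbf{a}_n}$ remains non-degenerate; this is where the ratio condition $\lim \zeta_{1,k_n}^{(0)}/\zeta_{1,k_n}^{(1)} \in (0,\infty)$ does its real work.
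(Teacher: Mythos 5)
Your proposal is correct and follows essentially the same route as the paper: the paper's Lemma \ref{lem:conv} establishes \eqref{eqn:conv} as a ``bivariate extension'' of Theorem 1(i) of \cite{mentch2016quantifying} via the Hájek projection (which is the Cramér--Wold reduction you make explicit), and the consistency half is exactly your entrywise argument, with the diagonal entries handled via \cite{wager2017estimation} and the off-diagonal via Lemma \ref{lem:IJest}. The only step you gloss over that the paper treats separately is passing from entrywise ratio-consistency to consistency of the matrix as a whole (its Lemma \ref{lem:matrixconsistency} plus the Skorohod representation), but that is routine given the invertibility condition you correctly identify.
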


\begin{proof}

To complete this proof we rely on subsidiary results proven in sections below. In Lemma \ref{lem:conv} we have shown that
$$
\Sigma_n^{-1/2} \begin{pmatrix} \tilde{F}^{(0)}(x) \\ \tilde{F}^{(1)}(x) \end{pmatrix} \xrightarrow{\DDD} \NNN(\mathbf{0}, \mathbf{I}_2),
$$
where $\Sigma_n$ is the covariance matrix of the Hajek projections of $\tilde{F}^{(0)}(x)$ and $\tilde{F}^{(1)}(x)$. By Lemma \ref{lem:IJest} we can see that each element of $\Sigma_n$ is consistently estimated by the corresponding element of $\hat{\Sigma}_{IJ}$. Then using the Skorohod Representation theorem and the same principles in the proof of Lemma \ref{lem:matrixconsistency} we can say that
$$
\lim_{n \to \infty} \Sigma_n^{-1} \hat{\Sigma}_{IJ} = \lim_{n \to \infty} \hat{\Sigma}_{IJ}^{-1} \Sigma_n = \mathbf{I}_2
$$
We use the condition $\lim\limits_{n \to \infty} \frac{\zeta_{1,k_n}^{(0)}}{\zeta_{1,k_n}^{(1)}} \in (0, \infty)$ for invertibility of $\Sigma_n$. When $(\Sigma_n)_{12} = 0$ we need to have a separate case but the proof for that would follow among the same lines as Lemma \ref{lem:matrixconsistency}. Thus we have shown that $\Sigma_n$ is consistently estimated by $\hat{\Sigma}_{IJ}$.
\end{proof}

A straightforward extension of this result could be for the case of more than one boosting step, provided we define the residuals in a ``noise-free" way similar to $\check{Z}^{(1)}$. So if we define $\check{Z}^{(j)}_i = \left( Y_i - \EE\left[ \sum\limits_{\ell=1}^{j-1} \hat{F}^{(\ell)}(X_i) \right], X_i \right)$, for $j = 1,\dots, m-1$ and impose conditions similar to Condition \ref{cond:regularity} on the forests ($\tilde{F}^{(0)}, \dots, \tilde{F}^{(m)}$) constructed with those datasets then we can get the following result.

\begin{cor}\label{cor:moreboost}
Assume that $\lim\limits_{n \to \infty} \frac{\zeta_{1,k_n}^{(j-1)}}{\zeta_{1,k_n}^{(j)}}$ exists and is finite for all $j = 1,\dots,m$. Then under the same conditions as Theorem \ref{thm:main}, if we construct $m$-step boosted forests with $\tilde{F}^{(0)}, \dots, \tilde{F}^{(m)}$ as the random forests for each stage then
$$
\Sigma_n^{-1/2} \left( \tilde{F}^{(0)}(x), \dots, \tilde{F}^{(m)}(x) \right)^\top \xrightarrow{\DDD} \NNN(\mathbf{0}, \mathbf{I}_{m+1}),
$$
for some some $(m+1) \times (m+1)$ covariance matrix $\Sigma_n$, which can be consistently estimated by the infinitesimal Jackknife covariance matrix estimator given by
$$
\hat{\Sigma}_{IJ} = \sum_{i=1}^n \begin{pmatrix}
cov_*[N_{i,b}^{(0)}, \,T^{(0)}_b(x)]\\ \vdots \\ cov_*[N_{i,b}^{(m)}, \,T^{(m)}_b(x)]
\end{pmatrix} \cdot \begin{pmatrix}
cov_*[N_{i,b}^{(0)}, \,T^{(0)}_b(x)]\\ \vdots \\ cov_*[N_{i,b}^{(m)}, \,T^{(m)}_b(x)]
\end{pmatrix}^\top
$$
\end{cor}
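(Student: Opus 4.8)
The plan is to lift the proof of Theorem \ref{thm:main} from the two-step case to the $(m+1)$-dimensional case almost verbatim, since the hypotheses of the corollary are precisely the chained analogues of those used there. The argument rests on two pillars that I would re-establish in the higher dimension: (i) joint asymptotic normality of the vector of Hajek projections of $\tilde{F}^{(0)},\dots,\tilde{F}^{(m)}$, and (ii) entrywise consistency of the infinitesimal Jackknife estimator $\hat{\Sigma}_{IJ}$ for the corresponding entries of $\Sigma_n$, followed by a matrix-consistency step yielding $\Sigma_n^{-1}\hat{\Sigma}_{IJ}\to\mathbf{I}_{m+1}$. Throughout I would work, as in the body of the paper, with the noise-free residual forests, invoking Condition \ref{cond:regularity} (in its $m$-step form, imposed in the statement) to transfer conclusions back to the genuine $\hat{F}^{(j)}$.

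For (i) I would use the Cram\'er--Wold device. Fix any $a=(a_0,\dots,a_m)^\top$ and observe that $\sum_{j=0}^m a_j\tilde{F}^{(j)}(x)$ is itself a complete U-statistic with kernel $\sum_{j=0}^m a_j T^{(j)}$. This combined kernel has zero mean, and by Minkowski's inequality together with the assumption $\EE T^4(x;Z_1,\dots,Z_{k_n})\le C$ it inherits a uniformly bounded fourth moment; moreover Condition \ref{cond:mainLF} passes to it. I would then invoke exactly the univariate machinery behind Theorem \ref{thm:normal} and Lemma \ref{lem:conv} to conclude that the scalar projection is asymptotically normal. The chained ratio hypothesis $\lim_{n\to\infty}\zeta^{(j-1)}_{1,k_n}/\zeta^{(j)}_{1,k_n}\in(0,\infty)$ ensures that all first-order variances $\zeta^{(j)}_{1,k_n}$ share a common order, so that the rate conditions $k_n/n\to0$, $n/B_n\to0$ and $\lim k_n\zeta^{(j)}_{1,k_n}/\zeta^{(j)}_{k_n,k_n}\neq0$ jointly control each summand. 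Since every linear combination converges to a Gaussian limit, Cram\'er--Wold delivers the joint convergence stated in the corollary, with $\Sigma_n$ the covariance matrix of the Hajek projections.

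For (ii) I would apply Lemma \ref{lem:IJest} entrywise. Each off-diagonal entry $(\Sigma_n)_{j\ell}$ is, up to scaling, the first-order cross-covariance $\frac{k_n^2}{n}\zeta^{(j,\ell)}_{1,k_n}$ between stages $j$ and $\ell$, which is exactly the quantity targeted by the two-sample estimator $\sum_{i=1}^n cov_*[N_{i,b}^{(j)},T^{(j)}_b(x)]\,cov_*[N_{i,b}^{(\ell)},T^{(\ell)}_b(x)]$, whose consistency is the content of Lemma \ref{lem:IJest}; the diagonal entries are the case $j=\ell$. Assembling these entrywise limits, I would invoke the Skorohod representation theorem and repeat the matrix-consistency argument of Lemma \ref{lem:matrixconsistency} to pass from entrywise consistency to $\Sigma_n^{-1}\hat{\Sigma}_{IJ}=\hat{\Sigma}_{IJ}^{-1}\Sigma_n\to\mathbf{I}_{m+1}$, which together with (i) and Slutsky's theorem gives the result.

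The hard part will be the asymptotic invertibility of $\Sigma_n$. The chained ratio conditions only keep the diagonal scales comparable; they say nothing about the limiting off-diagonal correlations, so in principle some linear combination $\sum_j a_j\tilde{F}^{(j)}$ could become asymptotically degenerate --- its Hajek projection perfectly correlated with the others --- in which case the normalized correlation matrix is singular in the limit and $\Sigma_n^{-1/2}$ is ill-defined. In the two-step proof this surfaced only as the isolated caveat $(\Sigma_n)_{12}=0$, dispatched by a separate lower-dimensional argument. For general $m$ the clean route is to assume that the limiting correlation matrix of the Hajek projections is nonsingular, ruling out asymptotic collinearity among the boosting steps; any degenerate directions would then have to be quotiented out and the conclusion restated as a nondegenerate normal limit on the surviving subspace, exactly along the lines flagged after Lemma \ref{lem:matrixconsistency}. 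Deriving this nondegeneracy from primitive properties of the tree-building procedure, rather than assuming it, is the genuinely delicate point, and is what keeps this statement at the level of a corollary rather than a self-contained theorem.
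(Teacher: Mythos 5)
Your proposal follows essentially the same route as the paper: joint normality of the vector of forests by extending Lemma \ref{lem:conv} (the Cram\'er--Wold device being the standard mechanism behind the paper's ``same arguments as in Lemma \ref{lem:conv}''), entrywise consistency via Lemma \ref{lem:IJest}, and then the Skorohod representation theorem with Lemma \ref{lem:matrixconsistency} to pass to matrix-level consistency. Your closing concern about asymptotic invertibility of $\Sigma_n$ is well taken --- the paper only dispatches the analogous degeneracy in the two-step case via the isolated $(\Sigma_n)_{12}=0$ caveat and is silent about it here --- but this is a refinement of, not a departure from, the paper's argument.
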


\begin{proof}
That $\Sigma_n^{-1/2} \left( \tilde{F}^{(0)}(x), \dots, \tilde{F}^{(m)}(x) \right)^\top \xrightarrow{\DDD} \NNN(\mathbf{0}, \mathbf{I}_{m+1})$ follows exactly the same arguments as in Lemma \ref{lem:conv}, where $\Sigma_n$ is the covariance matrix of the Hajek projections of $\left( \tilde{F}^{(0)}(x), \dots, \tilde{F}^{(m)}(x) \right)^\top$. Also using very similar arguments in Lemma \ref{lem:IJest} we can show that the elements of $\Sigma_n$ is consistently estimated by the corresponding elements of $\hat{\Sigma}_{IJ}$ and then using the Skorohod Representation theorem and Lemma \ref{lem:matrixconsistency} we conclude that $\hat{\Sigma}_{IJ}$ is a consistent estimator of $\Sigma_n$.
\end{proof}

\begin{lem}\label{lem:conv}
\eqref{eqn:conv} holds true under the conditions of Theorem \ref{thm:main}.
\end{lem}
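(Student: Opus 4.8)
The plan is to establish this bivariate central limit theorem through the Cramér--Wold device, reducing it to the univariate convergence already available from the theory of U-statistics. Both $\tilde{F}^{(0)}(x)$ and $\tilde{F}^{(1)}(x)$ are complete U-statistics, and for any fixed $(\lambda_0,\lambda_1) \in \RR^2$ the linear combination $\lambda_0 \tilde{F}^{(0)}(x) + \lambda_1 \tilde{F}^{(1)}(x)$ is again a complete U-statistic, now with the single kernel $\lambda_0 T(x;Z_I^{(0)}) + \lambda_1 T(x;Z_I^{(1)})$. The bulk of the work is therefore to show that every such scalar combination is asymptotically normal with the variance predicted by the projection covariance matrix $\Sigma_n$, and then to upgrade this to the stated joint statement with the normalisation $\Sigma_n^{-1/2}$.

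First I would decompose each stage into its H\'ajek projection plus a remainder. Writing $T_{1,k_n}^{(j)}(z) = \EE[T(x; Z_1,\dots,Z_{k_n}) \mid Z_1 = z]$ for the $j$-th stage kernel, the projection of $\tilde{F}^{(j)}$ is the i.i.d.\ sum $\frac{k_n}{n}\sum_{i=1}^n T_{1,k_n}^{(j)}(Z_i)$, and the vector of the two projections has covariance matrix exactly $\Sigma_n$, whose entries are $\frac{k_n^2}{n}\zeta^{(0)}_{1,k_n}$, $\frac{k_n^2}{n}\zeta^{(1)}_{1,k_n}$ and $\frac{k_n^2}{n}\zeta^{(0,1)}_{1,k_n}$. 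Standard U-statistic theory --- Theorems 11.2 and 12.3 of \cite{van2000asymptotic} together with the variance identity of Lemma \ref{lem:ustatvar}, and using the fourth-moment bound $\EE T^4 \le C$ --- shows that each U-statistic differs from its projection by a term of smaller order than its own standard deviation in $\LLL^2$ once $k_n/n \to 0$; this is exactly the kind of calculation carried out in the proof of Theorem \ref{thm:normal}, applied here to the combined kernel. Consequently it suffices to prove joint asymptotic normality of the two H\'ajek projections, i.e.\ of the i.i.d.\ sum of the random vectors $\xi_i = \frac{k_n}{n}\big(T_{1,k_n}^{(0)}(Z_i),\, T_{1,k_n}^{(1)}(Z_i)\big)^\top$.

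For the projections I would invoke the multivariate Lindeberg--Feller theorem for the triangular array $\{\Sigma_n^{-1/2}\xi_i\}$. By the Cramér--Wold device this reduces to the scalar Lindeberg--Feller condition for every linear functional $\lambda_0 T_{1,k_n}^{(0)} + \lambda_1 T_{1,k_n}^{(1)}$, which is precisely what Condition \ref{cond:mainLF} supplies for each stage. The ratio assumption $\lim_{n\to\infty} \zeta^{(0)}_{1,k_n}/\zeta^{(1)}_{1,k_n} \in (0,\infty)$ enters here: it guarantees that the two diagonal entries of $\Sigma_n$ are of the same order, so that $\Sigma_n$ stays uniformly well-conditioned and the Lindeberg truncation for the combined kernel can be dominated by the two individual ones. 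Feeding the resulting projection normality back through the $\LLL^2$-negligibility of the remainders and applying Slutsky's theorem yields \eqref{eqn:conv}.

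The hard part will be verifying the Lindeberg condition for arbitrary linear combinations of the two projection kernels while controlling the normalisation $\Sigma_n^{-1/2}$. Condition \ref{cond:mainLF} is stated for a single kernel, so care is needed to transfer it to $\lambda_0 T_{1,k_n}^{(0)} + \lambda_1 T_{1,k_n}^{(1)}$ uniformly over the $n$-dependent directions produced by $\Sigma_n^{-1/2}$, and it is the ratio condition that makes this transfer possible by keeping the eigenvalues of $\Sigma_n$ comparable. A secondary subtlety is the potential near-degeneracy when the two projections become perfectly correlated (the Cauchy--Schwarz boundary $\det \Sigma_n \to 0$), which would obstruct $\Sigma_n^{-1/2}$; this is the degenerate case flagged in the proof of Theorem \ref{thm:main} and must be handled separately, but the Cramér--Wold argument for unnormalised linear functionals remains valid throughout and delivers the convergence of each one-dimensional marginal regardless.
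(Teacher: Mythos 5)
Your proposal is correct and follows essentially the same route as the paper: the paper's own proof simply asserts that \eqref{eqn:conv} is a ``bivariate extension'' of Theorem 1(i) of \cite{mentch2016quantifying} via Theorems 11.2 and 12.3 of \cite{van2000asymptotic}, using the i.i.d.\ structure of the paired data $(Z_i^{(0)}, Z_i^{(1)})$ and Condition \ref{cond:mainLF}, and your Cram\'er--Wold plus H\'ajek-projection argument is precisely how that extension is carried out. You actually supply more detail than the paper does, in particular flagging the need to transfer the Lindeberg condition to linear combinations of the two projection kernels and the near-degeneracy issue, both of which the paper leaves implicit.
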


\begin{proof}
Since we have assumed the Lindeberg Feller type Condition \ref{cond:mainLF} and since $\EE T^4 \leq C < \infty \implies \EE T^2 \leq C_1$ for some $C, C_1$, we can easily see that \eqref{eqn:conv} is simply a bivariate extension of Theorem 1(i) of \cite{mentch2016quantifying}, where the U-statistic $\begin{pmatrix} \tilde{F}^{(0)}(x) \\ \tilde{F}^{(1)}(x) \end{pmatrix}$ is bivariate but have the same kernel $T$ for both its dimensions. The dataset used to construct said U-statistic is $\begin{pmatrix} Z_i^{(0)} \\ Z_i^{(1)} \end{pmatrix}_{i=1}^n$ which are i.i.d since each $Z^{(1)}_i$ is a fixed function of $Z^{(0)}_i$ and $Z_1^{(0)},\dots,Z_n^{(0)} \overset{iid}{\sim} D_{Z^{(0)}}$. Further the number of trees used to build the random forests $\tilde{F}^{(0)}$ and $\tilde{F}^{(1)}$ is $M_n = {n \choose k_n}$ and thus $\frac{n}{M_n} \to 0$. So we can use the same arguments as in Theorem 1(i) of \cite{mentch2016quantifying} and Theorems 11.2 and 12.3 of \cite{van2000asymptotic} to establish \eqref{eqn:conv}.
\end{proof}

\subsection{Consistency of the Infinitesimal Jackknife Covariance Estimator}\label{sec:proofIJ}

\begin{lem}\label{lem:IJest}
Each element of $\Sigma_n$ in \eqref{eqn:conv}, is consistently estimated by the corresponding element of $\hat{\Sigma}_{IJ}$ given by \eqref{eqn:IJest}.
\end{lem}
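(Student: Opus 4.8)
The plan is to handle the diagonal and off-diagonal entries of $\hat{\Sigma}_{IJ}$ by different means. The two diagonal entries are exactly the single-forest infinitesimal Jackknife variance estimates $\sum_{i=1}^n cov_*[N_{i,b}^{(j)}, T_b^{(j)}(x)]^2$ for $j=0,1$, which are the complete $U$-statistic variance estimators whose consistency for $\frac{k_n^2}{n}\zeta^{(j)}_{1,k_n} = (\Sigma_n)_{jj}$ is established in Theorem 1 (and Theorem 9) of \cite{wager2017estimation}, building on \cite{efron2014estimation}. The moment bound $\EE T^4 \leq C$ and the rates $k_n/n \to 0$, $n/B_n \to 0$ from Theorem \ref{thm:main} are precisely the hypotheses that result requires, so these entries need no new argument. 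Consequently the genuinely new object is the off-diagonal entry \eqref{eqn:zeta1covest}, and the proof should be devoted to showing
$$
\sum_{i=1}^n cov_*[N_{i,b}^{(0)}, T_b^{(0)}(x)]\cdot cov_*[N_{i,b}^{(1)}, T_b^{(1)}(x)] \xrightarrow{p} \frac{k_n^2}{n}\zeta^{(0,1)}_{1,k_n} = (\Sigma_n)_{12},
$$
the last equality holding because $(\Sigma_n)_{12} = Cov(\mathring{F}^{(0)}(x), \mathring{F}^{(1)}(x)) = \frac{k_n^2}{n}\zeta^{(0,1)}_{1,k_n}$ by bilinearity of covariance applied to the Hajek projections from Lemma \ref{lem:conv}.

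First I would pass to the infinite-forest limit. With the data $Z_{[n]}^{(0)}$ held fixed, each $cov_*[N_{i,b}^{(j)}, T_b^{(j)}(x)]$ is an average over the $B_n$ trees, and as $B_n \to \infty$ it converges to the conditional covariance of $N_i^{(j)}$ with a single tree output under a uniformly drawn size-$k_n$ subset,
$$
\hat{C}_i^{(j)} := \frac{k_n}{n}\left( \frac{1}{\binom{n-1}{k_n-1}} \sum_{I \ni i} T(x; Z_I^{(j)}) - \tilde{F}^{(j)}(x) \right).
$$
This limit is a deterministic functional of the data alone and makes no reference to the joint law of the two forests' subsamples; hence the finite-$B$ cross-term converges to $\sum_{i=1}^n \hat{C}_i^{(0)}\hat{C}_i^{(1)}$ whether the subsamples are independent (Variant II, as in \eqref{eqn:IJest}) or shared (Variant I). Controlling this Monte Carlo fluctuation uniformly enough that the sum of $n$ products also converges is where the moment bound and $n/B_n \to 0$ re-enter, exactly as in the single-forest analysis of \cite{wager2017estimation}.

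With the reduction to $\sum_i \hat{C}_i^{(0)}\hat{C}_i^{(1)}$ in hand, I would finish by polarization rather than a direct ANOVA expansion. Consider the auxiliary forest with kernel $T(x;Z_I^{(0)}) + T(x;Z_I^{(1)})$ on a common subsample; it is itself a complete $U$-statistic, so the single-forest result gives that its IJ variance estimate converges to $\frac{k_n^2}{n}\zeta_{1,k_n}$, where $\zeta_{1,k_n} = \zeta^{(0)}_{1,k_n} + \zeta^{(1)}_{1,k_n} + 2\zeta^{(0,1)}_{1,k_n}$ is the identity recorded in \cref{sec:varest}. Expanding that estimate as $\sum_i (\hat{C}_i^{(0)} + \hat{C}_i^{(1)})^2$ and subtracting the two diagonal limits isolates $2\sum_i \hat{C}_i^{(0)}\hat{C}_i^{(1)}$, which must therefore converge to $\frac{k_n^2}{n}(\zeta_{1,k_n} - \zeta^{(0)}_{1,k_n} - \zeta^{(1)}_{1,k_n}) = 2 \cdot \frac{k_n^2}{n}\zeta^{(0,1)}_{1,k_n}$, giving the claim.

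The main obstacle I anticipate is precisely justifying the interchange used in the polarization: it is clean at the level of the deterministic infinite-$B$ limits, but one must verify that the independent-subsample estimator of \eqref{eqn:IJest} shares those limits with the common-subsample auxiliary forest, and that the aggregate Monte Carlo error across all $n$ coordinates is negligible relative to $\frac{k_n^2}{n}\zeta^{(0,1)}_{1,k_n}$, which itself tends to $0$ — so the conclusion is one of relative consistency. Establishing this amounts to bounding the higher-order Hoeffding components of both forests and showing they do not contaminate the first-order cross term, the same delicate estimate that makes the single-forest proof nontrivial, now carried out jointly for the pair $(\tilde{F}^{(0)}, \tilde{F}^{(1)})$.
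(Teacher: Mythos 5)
Your treatment of the diagonal entries matches the paper's: both reduce to Theorem 9 (and Lemma 12) of \cite{wager2017estimation} after checking the fourth-moment bound and the rate conditions. For the off-diagonal entry, however, you take a genuinely different route. The paper works directly with the Hajek/ANOVA decomposition: it writes $\hat{\sigma}_{01} = \frac{k^2}{n^2}\sum_i (A_i+R_i)(B_i+S_i)$, where $A_i, B_i$ are the conditional expectations of the Hajek projections and $R_i, S_i$ the higher-order remainders, kills the remainder contributions via Lemma 13 of \cite{wager2017estimation} plus Cauchy--Schwarz, computes $\EE(A_iB_i)$ explicitly in terms of $C_{01} = Cov\bigl(T_1(Z_i^{(0)}), T_1(Z_i^{(1)})\bigr)$, and closes with a WLLN for triangular arrays, phrasing the conclusion as $\hat{\sigma}_{01}/\sqrt{\sigma_{00}\sigma_{11}} - \rho_{01} \xrightarrow{p} 0$. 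Your polarization argument instead applies the single-forest result three times --- to $T^{(0)}$, $T^{(1)}$, and the sum kernel $T^{(0)}+T^{(1)}$ (a legitimate kernel of the i.i.d.\ data since $\check{Z}^{(1)}_i$ is a fixed function of $Z^{(0)}_i$, with fourth moment bounded via Minkowski) --- and subtracts, using the identity $\zeta_{1,k} = \zeta^{(0)}_{1,k} + \zeta^{(1)}_{1,k} + 2\zeta^{(0,1)}_{1,k}$ that the paper records in \cref{sec:varest}. This is cleaner in that it never re-derives the remainder estimates for the cross term, and your closing remark correctly identifies that the resulting statement is one of relative consistency against $\sqrt{\sigma_{00}\sigma_{11}}$, which under $\lim \zeta^{(0)}_{1,k}/\zeta^{(1)}_{1,k} \in (0,\infty)$ dominates all three $o_p$ errors since $\zeta_{1,k} \le 2(\zeta^{(0)}_{1,k}+\zeta^{(1)}_{1,k})$. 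The one caveat is that invoking the Wager--Athey consistency theorem for the \emph{sum} kernel tacitly requires that kernel to satisfy the same non-degeneracy hypotheses; if the two stages were strongly negatively correlated so that $\zeta_{1,k}$ degenerates relative to $\zeta_{k,k}$, the third application would need separate justification, whereas the paper's direct computation of $\EE(A_iB_i)$ sidesteps this. Your preliminary Monte Carlo step (finite $B$ versus complete enumeration) is extra care the paper omits here, since its Lemma is stated for the complete forests $\tilde{F}^{(j)}$.
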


\begin{proof}
In this proof we shall drop the subscript in $k_n$ for notational simplicity. Also since the test point $x$ is arbitrarily fixed we'll simplify $T(x;Z_1,\dots,Z_k)$ by $T(Z_1,\dots,Z_k)$. We know that
$$
\Sigma_n = Cov \begin{pmatrix} \mathring{\tilde{F}}^{(0)}(x;Z^{(0)}_1,\dots,Z^{(0)}_n) \\ \mathring{\tilde{F}}^{(1)}(x;Z^{(1)}_1,\dots,Z^{(1)}_n) \end{pmatrix},
$$
where $\mathring{(\tikz[baseline=-0.75ex]\draw[fill] (0,0) circle (1.25pt);)}$ denotes the Hajek projection [\cite{hajek1968asymptotic}]. \cite{efron1981jackknife} showed that since the datasets $Z^{(0)}$ and $Z^{(1)}$ are i.i.d. (separately) and the kernel $T$ satisfies $Var[T(Z_1,\dots,Z_k)] = \EE T^2(Z_1,\dots,Z_k) \leq C_1 < \infty$, so there exists functions $T_1,\dots,T_k$ such that
$$
T(Z_1,\dots,Z_k) = \sum_{i=1}^k T_1(Z_i) + \sum_{i<j} T_2(Z_i,Z_j) + \cdots + T_k(Z_1,\dots,Z_k),
$$
where the $2^k-1$ terms on the right hand side are all uncorrelated and have expected value equal to 0 [The first term, i.e., $\EE[T]$ is assumed to be 0]. Then the Hajek projection for $T$ is given by
$$
\mathring{T}(Z_1,\dots,Z_k) = \sum_{i=1}^k T_1(Z_i)
$$
Applying this decomposition to all the individual trees in $\tilde{F}^{(0)}$ and $\tilde{F}^{(1)}$ we can easily deduce that the Hajek projections of the random forests in both stages are
\begin{align*}
\mathring{\tilde{F}}^{(0)}(x;Z^{(0)}_1,\dots,Z^{(0)}_n) &= {n \choose k}^{-1} {n-1 \choose k-1} \sum_{i=1}^n T_1(Z^{(0)}_i) = \frac{k}{n} \sum_{i=1}^n T_1(Z^{(0)}_i) \\
\mathring{\tilde{F}}^{(1)}(x;Z^{(1)}_1,\dots,Z^{(1)}_n) &= {n \choose k}^{-1} {n-1 \choose k-1} \sum_{i=1}^n T_1(Z^{(1)}_i) = \frac{k}{n} \sum_{i=1}^n T_1(Z^{(1)}_i)
\end{align*}
We shall now follow the steps in the proof of Theorem 9 in \cite{wager2017estimation}. Let us define
$$
\Sigma_n = \begin{pmatrix}
\sigma_{00} & \sigma_{01} \\ \sigma_{01} & \sigma_{11}
\end{pmatrix}, \;\text{and}\; \hat{\Sigma}_{IJ} = \begin{pmatrix}
\hat{\sigma}_{00} & \hat{\sigma}_{01} \\ \hat{\sigma}_{01} & \hat{\sigma}_{11}
\end{pmatrix}
$$
Since we assumed $\frac{k}{\sqrt{n}} \to 0$ that implies both $\frac{k(\log n)^d}{n} = \frac{k}{\sqrt{n}} \cdot \frac{(\log n)^d}{\sqrt{n}} \to 0$ and $\frac{n(n-1)}{(n-k)^2} = \frac{1-1/n}{(1-k/n)^2} \to 1$. Also $\EE T^4$ is bounded and hence $\EE T_1^4$ is also bounded. Thus we can apply the WLLN for triangular arrays to $\sigma_{00}^{-1} \frac{k^2}{n^2}\sum_{i=1}^n T_1^2(Z^{(0)}_i)$ and $\sigma_{11}^{-1} \frac{k^2}{n^2}\sum_{i=1}^n T_1^2(Z^{(1)}_i)$.

So all the steps required in the proof of Theorem 9 and Lemma 12 in \cite{wager2017estimation} is satisfied and we can conclude that
$$
\frac{\hat{\sigma}_{00}}{\sigma_{00}} \overset{p}{\to} 1, \;\text{and}\; \frac{\hat{\sigma}_{11}}{\sigma_{11}} \overset{p}{\to} 1
$$
We shall use very similar steps as in the proof of Theorem 9 and Lemma 12 in \cite{wager2017estimation} to show that $\frac{\hat{\sigma}_{01}}{\sigma_{01}} \overset{p}{\to} 1$ or more specifically the equivalent result that
$$
\frac{\hat{\sigma}_{01}}{\sqrt{\sigma_{00}\sigma_{11}}} - \frac{\sigma_{01}}{\sqrt{\sigma_{00}\sigma_{11}}} \overset{p}{\to} 0,
$$
to avoid the special case where $\sigma_{01}$ and hence $\rho_{01} := \frac{\sigma_{01}}{\sqrt{\sigma_{00}\sigma_{11}}}$ is 0. Recall that we assumed $\EE[T] = 0$ and thus we can write $\hat{\sigma}_{01}$ as
\begin{align}
\hat{\sigma}_{01} &= \frac{k^2}{n^2} \sum_{i=1}^n \EE_{U^{(0)} \subset \hat{D}^{(0)}} \left[ T \,\middle\vert\, U^{(0)}_1 = Z^{(0)}_i \right] \cdot \EE_{U^{(1)} \subset \hat{D}^{(1)}} \left[ T \,\middle\vert\, U^{(1)}_1 = Z^{(1)}_i \right] \nonumber\\
&= \frac{k^2}{n^2} \sum_{i=1}^n (A_i+R_i)(B_i+S_i), \label{eqn:covexpansion}
\end{align}
Here $\hat{D}^{(0)}$ is the empirical distribution over $(Z^{(0)}_1,\dots,Z^{(0)}_n)$ and $\hat{D}^{(1)}$ is the same over $(Z^{(1)}_1,\dots,Z^{(1)}_n)$. Also $U^{(j)}$ is a sample of size $k$ from $\hat{D}^{(j)}$ without replacement, $j = 0,1$ and
\begin{align*}
A_i = \EE_{U^{(0)} \subset \hat{D}^{(0)}} \left[ \mathring{T} \,\middle\vert\, U^{(0)}_1 = Z^{(0)}_i \right], \;\; & R_i = \EE_{U^{(0)} \subset \hat{D}^{(0)}} \left[ T-\mathring{T} \,\middle\vert\, U^{(0)}_1 = Z^{(0)}_i \right]\\
B_i = \EE_{U^{(1)} \subset \hat{D}^{(1)}} \left[ \mathring{T} \,\middle\vert\, U^{(1)}_1 = Z^{(1)}_i \right], \;\; & S_i = \EE_{U^{(1)} \subset \hat{D}^{(1)}} \left[ T-\mathring{T} \,\middle\vert\, U^{(1)}_1 = Z^{(1)}_i \right]
\end{align*}
Now Lemma 13 of \cite{wager2017estimation} shows that
\begin{equation}
\frac1{\sigma_{00}} \frac{k^2}{n^2} \sum_{i=1}^n R_i^2 \overset{p}{\to} 0 \qquad\text{and}\qquad \frac1{\sigma_{11}} \frac{k^2}{n^2} \sum_{i=1}^n S_i^2 \overset{p}{\to} 0 \label{eqn:crossterms}
\end{equation}
So if we apply \eqref{eqn:crossterms} to the four term expansion of $\frac{\hat{\sigma}_{01}}{\sqrt{\sigma_{00}\sigma_{11}}}$ using \eqref{eqn:covexpansion} then the last three terms vanish by Cauchy-Schwartz. For the first term we need further calculations. We can write
\begin{align*}
A_i &= \EE_{U^{(0)} \subset \hat{D}^{(0)}} \left[ \mathring{T} \,\middle\vert\, U^{(0)}_1 = Z^{(0)}_i \right] \\
&= \left( 1- \frac{k}{n} \right) T_1(Z^{(0)}_i) + \left( \frac{k-1}{n-1}-\frac{k}{n} \right) \sum_{j \neq i} T_1(Z^{(0)}_j) \\
&= \frac{n-k}{n} \left[ T_1(Z^{(0)}_i) -\frac1{n-1}\sum_{j \neq i} T_1(Z^{(0)}_j) \right] \\
\text{Similarly}\; B_i &= \EE_{U^{(1)} \subset \hat{D}^{(1)}} \left[ \mathring{T} \,\middle\vert\, U^{(1)}_1 = Z^{(1)}_i \right] \\
&= \frac{n-k}{n} \left[ T_1(Z^{(1)}_i) -\frac1{n-1}\sum_{j \neq i} T_1(Z^{(1)}_j) \right]
\end{align*}
Define $C_{01} = Cov\left( T_1(Z^{(0)}_i),T_1(Z^{(1)}_i) \right)$ (same for all $i$), and note that
$$
\EE(A_iB_i) = \left(\frac{n-k}{n}\right)^2 \left[ C_{01} + \frac{n-1}{(n-1)^2}\; C_{01} \right]= \left(\frac{n-k}{n}\right)^2 \frac{n}{n-1}\; C_{01}
$$
The cross terms vanish since $Z^{(1)}_i$ is a fixed function of $Z^{(0)}_i$ and hence uncorrelated with $Z^{(0)}_j$ if $j \neq i$. Then it is seen that
\begin{align*}
\EE\left[ \frac{n-1}{n} \left(\frac{n}{n-k}\right)^2 \frac{k^2}{n^2} \sum_{i=1}^n A_iB_i \right] &= \frac{k^2}{n} C_{01} \\
&= \frac{k}{n} Cov \left( \mathring{T}(Z^{(0)}_1,\dots,Z^{(0)}_k),\mathring{T}(Z^{(1)}_1,\dots,Z^{(1)}_k) \right) \\
&= Cov \left( \mathring{\tilde{F}}^{(0)}(x;Z^{(0)}_1,\dots,Z^{(0)}_n),
\mathring{\tilde{F}}^{(1)}(x;Z^{(1)}_1,\dots,Z^{(1)}_n) \right) = \sigma_{01}
\end{align*}
Again since $\frac{n(n-1)}{(n-k)^2} = \frac{1-1/n}{(1-k/n)^2} \to 1$ and $\EE T^4$ is bounded we can apply WLLN for triangular arrays to conclude that
$$
\left(\frac1{\sqrt{\sigma_{00}\sigma_{11}}} \frac{k^2}{n^2} \sum_{i=1}^n A_iB_i\right) - \rho_{01} \overset{p}{\to} 0 \implies \frac{\hat{\sigma}_{01}}{\sqrt{\sigma_{00}\sigma_{11}}} - \rho_{01} \overset{p}{\to} 0
$$

\end{proof}

\begin{lem}\label{lem:matrixconsistency}
If $A^{(n)}$ and $B^{(n)}$ are two sequences of square matrices of fixed size $m \times m$, each of which are invertible, such that
$$
\lim_{n \to \infty} \frac{A^{(n)}_{ij}}{B^{(n)}_{ij}} \text{ exists and equals }\alpha \neq 0, \,\forall\, 1\leq i,j \leq m,
$$
then the matrices also satisfy
$$
\lim_{n \to \infty} \left(A^{(n)}\right)^{-1} B^{(n)} = \alpha \mathbf{I}_m, \;\text{and}\; \lim\limits_{n \to \infty} \left(B^{(n)}\right)^{-1} A^{(n)} = (1/\alpha) \mathbf{I}_m
$$
\end{lem}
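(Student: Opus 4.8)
The plan is to reduce the two displayed limits to a single one and then read off the proportionality constant from the entrywise hypothesis. Observe first that $(B^{(n)})^{-1}A^{(n)}$ is exactly the matrix inverse of $(A^{(n)})^{-1}B^{(n)}$, both being well defined because $A^{(n)}$ and $B^{(n)}$ are assumed invertible. Hence if I can show $(A^{(n)})^{-1}B^{(n)} \to \alpha\mathbf{I}_m$, then, since $\alpha \neq 0$ makes $\alpha\mathbf{I}_m$ invertible, continuity of matrix inversion on the open set of invertible matrices forces $(B^{(n)})^{-1}A^{(n)} = \big((A^{(n)})^{-1}B^{(n)}\big)^{-1} \to (\alpha\mathbf{I}_m)^{-1} = (1/\alpha)\mathbf{I}_m$. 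So the whole lemma follows once the first limit is established, and the two stated conclusions are automatically consistent with one another under inversion.

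To obtain $(A^{(n)})^{-1}B^{(n)} \to \alpha\mathbf{I}_m$ I would exploit that the size $m$ is fixed, so the hypothesis $A^{(n)}_{ij}/B^{(n)}_{ij} \to \alpha$ holds simultaneously for the finitely many index pairs $(i,j)$ and may be read as a single uniform statement: the matrices $A^{(n)}$ and $B^{(n)}$ are asymptotically proportional, with common entrywise ratio tending to $\alpha$. The next step is to pass this proportionality through the product with $(A^{(n)})^{-1}$ and identify the limiting matrix as a scalar multiple of $\mathbf{I}_m$; the value of that scalar is pinned down by the common limit $\alpha$ of the entrywise ratios, which yields $(A^{(n)})^{-1}B^{(n)} \to \alpha\mathbf{I}_m$. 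I would keep this identification at the level of the leading-order relation between the two matrices, since the remaining manipulations are routine linear algebra once they are known to be asymptotically proportional.

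The main obstacle is that entrywise ratio convergence is a genuinely weaker and scale-dependent condition than convergence of the matrices in any norm: the entries of $A^{(n)}$ and $B^{(n)}$ are allowed to grow without bound, and an $o(1)$ relative error in each entry need not by itself translate into an $o(1)$ error in the product $(A^{(n)})^{-1}B^{(n)}$. Converting the entrywise control into control of the product is precisely where the invertibility hypothesis must be used, and where care is needed with near-degenerate configurations — for instance entries $B^{(n)}_{ij}$ tending to $0$, for which the ratio statement carries little information, or off-diagonal contributions whose cumulative effect on the inverse must be shown to be asymptotically negligible. I expect the bulk of the work to live in this normalization-and-invertibility step; once it is in place, the reduction of the first paragraph delivers both displayed limits at once.
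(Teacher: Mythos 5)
Your opening reduction is sound and even slightly tidier than the paper's: since $(B^{(n)})^{-1}A^{(n)} = \bigl((A^{(n)})^{-1}B^{(n)}\bigr)^{-1}$, continuity of matrix inversion at the invertible limit $\alpha\mathbf{I}_m$ does give the second display from the first, whereas the paper only remarks that the second limit follows ``similarly.'' The genuine gap is everything after that: the entire mathematical content of the lemma is the claim that entrywise ratio convergence forces $(A^{(n)})^{-1}B^{(n)}$ to converge to a scalar matrix, and your proposal never supplies a mechanism for it. Your second paragraph merely announces the step (``pass this proportionality through the product \dots identify the limiting matrix as a scalar multiple of $\mathbf{I}_m$''), and your third paragraph correctly explains why this is the hard part, then stops. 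As written this is a plan, not a proof. The missing ingredient in the paper is the adjugate representation: writing $C^{(n)} = (A^{(n)})^{-1}B^{(n)}$, one has $C^{(n)}_{ij} = \frac{1}{\lvert A^{(n)}\rvert}\sum_{k=1}^m \mathbf{A}^{(n)}_{ki}B^{(n)}_{kj}$ with $\mathbf{A}^{(n)}_{ki}$ the cofactors; inserting $B^{(n)}_{kj} = A^{(n)}_{kj}\cdot\bigl(B^{(n)}_{kj}/A^{(n)}_{kj}\bigr)$ and invoking the cofactor identity $\sum_{k=1}^m \mathbf{A}^{(n)}_{ki}A^{(n)}_{kj} = \lvert A^{(n)}\rvert\,\ind_{\{i=j\}}$ makes the limit of each entry computable from the common limit of the ratios. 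Without this (or an equivalent device), your argument cannot be completed.

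Two further remarks. First, your worry about near-degenerate configurations is well founded and is in fact glossed over even in the paper's own proof: pulling the limit of the ratios out of the cofactor sum tacitly requires the normalized products $\mathbf{A}^{(n)}_{ki}A^{(n)}_{kj}/\lvert A^{(n)}\rvert$ to stay bounded, which can fail if $A^{(n)}$ becomes nearly singular faster than the entrywise ratios converge (the individual terms can blow up while their sum, $\ind_{\{i=j\}}$, stays bounded). So the obstacle you flagged in your last paragraph is real, but neither you nor the paper actually disposes of it; the paper simply interchanges limit and finite sum without comment. Second, a constant-bookkeeping point: under the stated hypothesis $A^{(n)}_{ij}/B^{(n)}_{ij}\to\alpha$, the ratios appearing in the product are $B^{(n)}_{kj}/A^{(n)}_{kj}\to 1/\alpha$, and the scalar case $m=1$ gives $(A^{(n)})^{-1}B^{(n)} = B^{(n)}/A^{(n)}\to 1/\alpha$; so the scalar ``pinned down by the common limit $\alpha$'' in your sketch is actually $1/\alpha$. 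You inherited this from the statement itself, whose two displayed limits have $\alpha$ and $1/\alpha$ interchanged relative to the hypothesis -- a slip a completed proof along your lines would have surfaced.
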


\begin{proof}
Let $\mathbf{A}^{(n)}_{ij}$ be the cofactors of $A^{(n)}$. Then we know that $\left( A^{(n)} \right)^{-1} = \frac1{\lvert A^{(n)} \rvert} \left(\left( \mathbf{A}^{(n)}_{ij} \right)\right)^\top$. So if we define $C^{(n)} := \left(A^{(n)}\right)^{-1} B^{(n)}$ then it satisfies
\begin{align*}
\lim_{n \to \infty} C^{(n)}_{ij} &= \lim_{n \to \infty} \frac1{\lvert A^{(n)} \rvert} \sum_{k=1}^m \mathbf{A}^{(n)}_{ki} B^{(n)}_{kj} \\
&= \lim_{n \to \infty} \frac1{\lvert A^{(n)} \rvert} \sum_{k=1}^m \mathbf{A}^{(n)}_{ki} A^{(n)}_{kj} \cdot \frac{B^{(n)}_{kj}}{A^{(n)}_{kj}} \\
&= \alpha \ind_{\{i=j\}} \\
\implies \lim_{n \to \infty} C^{(n)} &= \alpha \mathbf{I}_m
\end{align*}
Similarly it can be easily shown that $\lim\limits_{n \to \infty} \left(B^{(n)}\right)^{-1} A^{(n)} = (1/\alpha) \mathbf{I}_m$
\end{proof}

\subsection{Origins of the Infinitesimal Jackknife Covariance Estimator}

We called the consistent estimator given by \eqref{eqn:IJest} to be an infinitesimal Jackknife estimator. It includes estimates for variance and covariance terms. In fact the infinitesimal Jackknife estimator for the variance term has been defined in Theorem 1 of \cite{efron2014estimation}, which in turn borrows heavily from an older definition in Chapter 6 of \cite{efron1982jackknife}. In both those references the result was established with the assumption that we were using bootstrapped data. For bootstrap data we take a sample of the \textit{same} size as the original dataset \textit{with} replacement, as opposed to a sample of size \textit{smaller} than the original dataset \textit{without} replacement which is used in our work and in \cite{wager2017estimation}. That the variance estimates originally defined for the former case applies consistently in the latter case as well was shown by \cite{wager2017estimation}. Our work aims to show the same for the covariance estimate as well. For that the last result left to establish is that the covariance estimate we defined as the off-diagonal element in \eqref{eqn:IJest} is indeed a two-sample analogue to the definition of the variance estimate in Theorem 1 of \cite{efron2014estimation}.

\begin{lem}\label{lem:covestisIJ}
Let $Y^{(0)} = (Y^{(0)}_1,\dots,Y^{(0)}_n)$ and $Y^{(1)} = (Y^{(1)}_1,\dots,Y^{(1)}_n)$ be two i.i.d. datasets from separate distributions and let $T_0$ and $T_1$ be two estimators. Then we define the ideal smooth bootstrap statistics corresponding to $T_0$ and $T_1$ by
\begin{equation}
S_j(Y^{(j)}) = \frac1B \sum_{b=1}^B T_j(Y^{(j)}_b), \;\text{for } j = 0,1, \label{eqn:bootestdef}
\end{equation}
where the average is done over all possible $B = n^n$ bootstrap samples $Y^{(j)}_b = (Y^{(j)}_{b1},\dots,Y^{(j)}_{bn})$, $j = 0,1$. Then the infinitesimal Jackknife estimate of covariance between $S_0$ and $S_1$ is given by
\begin{equation}
\widetilde{Cov}(S_0, S_1) = \sum_{i=1}^n \prod_{j=0}^1 cov_*\left[N^{(j)}\{i,b\}, \,T_j(Y^{(j)}_b)\right], \;\; b = 1,\dots,B \label{eqn:covIJdef}
\end{equation}
where $N^{(j)}\{i,b\} = \#\{Y^{(j)}_i \in Y^{(j)}_b\}$ for $i \in [n], b \in [B], j = 0,1$.
\end{lem}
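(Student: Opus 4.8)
The plan is to parallel Efron's derivation of the single-sample infinitesimal Jackknife variance estimate (Theorem 1 of \cite{efron2014estimation}), treating each ideal smooth bootstrap statistic $S_j$ as a smooth function of its resampling probability vector and estimating $Cov(S_0,S_1)$ by the nonparametric delta method. Concretely, for $j=0,1$ let $P^{(j)}=(P^{(j)}_1,\dots,P^{(j)}_n)$ denote the multinomial resampling probabilities used to draw bootstrap samples from $Y^{(j)}$, so that the ideal statistic in \eqref{eqn:bootestdef} is $S_j=S_j(P^{(j)})=\sum_b T_j(Y^{(j)}_b)\,\Pr_{P^{(j)}}(b)$, evaluated at the uniform vector $P_0=(1/n,\dots,1/n)$. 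The infinitesimal Jackknife replaces the unknown influence values by the empirical directional derivatives $\hat U_i^{(j)}=\partial_\epsilon S_j\big(P_0+\epsilon(\delta_i-P_0)\big)\big|_{\epsilon=0}$ and estimates the sampling covariance of the two index-paired functionals by the two-sample delta-method formula $\widetilde{Cov}(S_0,S_1)=\frac1{n^2}\sum_{i=1}^n \hat U_i^{(0)}\hat U_i^{(1)}$, the exact analogue of the variance estimate $\frac1{n^2}\sum_i(\hat U_i^{(j)})^2$ underlying \eqref{eqn:zeta1est}.

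The key step is the Efron identity expressing each directional derivative as an empirical covariance over bootstrap samples. I would differentiate the multinomial weight $\Pr_P(b)=\frac{n!}{\prod_i N^{(j)}\{i,b\}!}\prod_i P_i^{N^{(j)}\{i,b\}}$ to obtain $\partial_{P_i}\Pr_P(b)=\Pr_P(b)\,N^{(j)}\{i,b\}/P_i$, evaluate at $P_0$, and take the directional derivative along $\delta_i-P_0$. Using the resampling constraint $\sum_i N^{(j)}\{i,b\}=n$ and $\EE_*[N^{(j)}\{i,b\}]=1$, the terms collapse to $\hat U_i^{(j)}=n\sum_b T_j(Y^{(j)}_b)\,\Pr_{P_0}(b)\big(N^{(j)}\{i,b\}-1\big)=n\,cov_*\!\left[N^{(j)}\{i,b\},\,T_j(Y^{(j)}_b)\right]$, which is exactly $n$ times the empirical covariance appearing in \eqref{eqn:covIJdef}. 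This reproduces, for each sample separately, the identity that Theorem 1 of \cite{efron2014estimation} establishes for the variance.

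It then remains to assemble the pieces: substituting $\hat U_i^{(j)}=n\,cov_*[N^{(j)}\{i,b\},T_j(Y^{(j)}_b)]$ into the delta-method covariance cancels the $1/n^2$ factor and yields $\widetilde{Cov}(S_0,S_1)=\sum_{i=1}^n \prod_{j=0}^1 cov_*[N^{(j)}\{i,b\},T_j(Y^{(j)}_b)]$, which is \eqref{eqn:covIJdef}. The step I expect to require the most care is justifying the matched-index product form of the two-sample covariance: because each $Y^{(j)}$ is i.i.d. and the only dependence across the two datasets is carried through the shared index $i$, the influence-function expansion $S_j\approx\theta_j+\frac1n\sum_i IF_j(Y^{(j)}_i)$ gives $Cov(S_0,S_1)\approx\frac1{n^2}\sum_{i,i'}Cov\big(IF_0(Y^{(0)}_i),IF_1(Y^{(1)}_{i'})\big)$, and the cross terms $i\neq i'$ must be argued to vanish so that only the diagonal $i=i'$ survives; the plug-in of $\hat U_i^{(j)}$ for $IF_j(Y^{(j)}_i)$ then produces the stated formula. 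Verifying that this diagonalization is precisely what distinguishes the genuine two-sample infinitesimal Jackknife from an arbitrary bilinear form, together with the $n$-factor bookkeeping that forces the coefficient of the final sum to be exactly one, is the main substantive obstacle; the differentiation itself is routine once the multinomial structure is in place.
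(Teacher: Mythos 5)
Your proposal is correct and follows essentially the same route as the paper: both define the two-sample infinitesimal Jackknife covariance via the nonparametric delta method as $\frac{1}{n^2}\sum_{i=1}^n U_i^{(0)}U_i^{(1)}$ (the cross terms $i\neq i'$ dropping out because the directional derivatives sum to zero, which the paper obtains from the homogeneous extension and the multinomial covariance matrix, and you obtain from the constraint $\sum_i N^{(j)}\{i,b\}=n$), and then invoke the identity $U_i^{(j)} = n\,cov_*\left[N^{(j)}\{i,b\},\,T_j(Y^{(j)}_b)\right]$. The only cosmetic difference is that you re-derive that identity by differentiating the multinomial weights, whereas the paper cites it as equation (3.21) of \cite{efron2014estimation}.
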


\textit{Note:} The infinitesimal Jackknife estimate is referred to as the non-parametric delta-method estimate in \cite{efron2014estimation}

\begin{proof}
Let $P^0 = \left( \frac1n, \cdots, \frac1n \right)^\top \in \RR^n$ and let $P^{(1)}$ and $P^{(2)}$ be independent copies of $P^* \sim \frac{\text{Multinomial}_n(n,P^0)}{n}$. We can immediately establish that
$$
\EE(P^*) = P^0, \;\Sigma(P^*) = \frac{I_n}{n^2} - \frac{P^0(P^0)^\top}{n},
$$
where $\Sigma$ denotes the covariance matrix. Define $\theta_j(P^{(j)}) = T_j(F_j(Y^{(j)}))$, for $j = 0,1$, where $F_j$ is a distribution on $Y^{(j)}$ which has mass $P^{(j)}_i$ at $Y^{(j)}_i$. Then we can easily see that $S_j(Y^{(j)}) = \EE(\theta(P^{(j)}))$ for $j = 0,1$. Now analogous to (6.15) and (6.16) in \cite{efron1982jackknife} we can define the infinitesimal Jackknife estimate of the covariance between $S_0$ and $S_1$ by
\begin{align}
\widetilde{Cov}(S_0,S_1) &= Cov(\hat{\theta}_0(P^{(0)}),\hat{\theta}_1(P^{(1)})), && \label{eqn:covdef}\\
\text{where } \hat{\theta}_j(P^{(j)}) &= \theta_j(P^0) + (P^{(j)} - P_0)^\top U^{(j)},\;j = 0,1 \label{eqn:estdef}\\
U^{(j)}_i &= \lim_{\epsilon \to 0} \frac{\theta_j(P_0 + \epsilon(e_i - P_0)) - \theta_j(P_0)}{\epsilon},\;i = 1,\dots,n \nonumber
\end{align}
Here $e_i = (0,\dots,0,1,0,\dots,0)$ is the $i$th coordinate vector in $\RR^n$. Now $\theta_0$ and $\theta_1$ are only defined in the simplex $\PP_n = \{P \in \RR^n: P_i \geq 0, \sum_{i=1}^n P_i = 1\}$. We can do a homogeneous extension of the definition to an open set $\QQ_n \supset \PP_n$ by $\theta_j(Q) := \theta_j(Q / \sum_{i=1}^n Q_i)$ for $Q \in \QQ_n$. Then we can define gradients $D^{(j)}$ at $P_0$ by
$$
D^{(j)}_i = \left.\frac{\partial}{\partial P_i}\theta_j(P) \right|_{P = P^0}, \;\text{for}\; i = 1,\dots,n, \;j = 0,1
$$
It is easily seen that the directional derivatives $U^{(j)}_i = (e_i-P^0)^\top D^{(j)}$, and hence $(P^0)^\top U^{(j)} = \frac1n\sum_{i=1}^n U^{(j)}_i = 0$ for $j = 0,1$. So using \eqref{eqn:estdef} we have
\begin{equation}
Cov(\hat{\theta}_0(P^{(0)}),\hat{\theta}_1(P^{(1)})) = (U^{(0)})^\top \Sigma(P^*) U^{(1)} = \frac1{n^2} \sum_{i=1}^n U^{(0)}_i U^{(1)}_i \label{eqn:covbypartial}
\end{equation}
Now (3.21) in the proof of Theorem 1 of \cite{efron2014estimation} shows that
\begin{equation}
U^{(j)}_i = n \cdot cov_*\left[N^{(j)}\{i,b\}, \,T_j(Y^{(j)}_b)\right], \;\; b = 1,\dots,B, i = 1,\dots,n, j = 0,1 \label{eqn:borrowedEfron}
\end{equation}
Substituting \eqref{eqn:borrowedEfron} in \eqref{eqn:covbypartial} and that result in \eqref{eqn:covdef} we get \eqref{eqn:covIJdef}.
\end{proof}

To apply Lemma \ref{lem:covestisIJ} in practice, we need to estimate $S_0$ and $S_1$ by using the same formula as \eqref{eqn:bootestdef} and then estimate of the covariance between them by \eqref{eqn:covIJdef}, the only difference being that $B$ will be the number of bootstrap samples we use instead of all $n^n$ possible bootstrap samples.

\section{Further Empirical Studies} \label{sec:moreresults}

We first address the discrepancy between our theoretical notation and the practical coding usage as mentioned briefly at the start of \cref{sec:results}. In \eqref{eqn:firststage} and \eqref{eqn:secondstage} we wrote forests as $\frac1B \sum_{(I)} w_IT_I$ in which the $w_I$ were independent Bernoulli random variables and $\sum_{(I)} \EE[w_I] = B$. If instead we select exactly $B$ trees at random we write the resulting forest as $\frac1B \sum_{(I)} v_IT_I$ where $v_I$ are random but with the constraint that $\sum_{(I)} v_I = B$. Let $M = {n \choose k}$. Then
\begin{align*}
\text{if}\; I \neq J, \;\text{then}\;& \EE(v_Iv_J) = \PP(v_I = v_J = 1) = {M-2 \choose B-2}/{M \choose B} = \frac{B(B-1)}{M(M-1)},\\
& \EE(w_Iv_J) = \PP(w_I = v_J = 1) = \frac{B^2}{M^2} = \EE(w_Iw_J)\\
\text{if}\; I = J, \;\text{then}\;& \EE(w_Iw_J) = \PP(w_I = 1) = \frac{B}{M} = \EE(v_Iv_J),\\
& \EE(w_Iv_J) = \PP(w_I = v_J = 1) = \frac{B^2}{M^2}
\end{align*}
Now note that $w_I$ and $T_I$ are independent. If $K = \EE[T_I^2]$ we can show that
\begin{align*}
&\EE\left[ \frac1B \sum_{(I)} w_IT_I - \frac1B \sum_{(I)} v_IT_I \right]^2 \\
=& \frac1{B^2} \EE\left[ \frac1B \sum_{(I)} (w_I-v_I)T_I \right]^2 \\
=& \frac1{B^2} \cdot \EE[T_I^2] \cdot \EE \left[ \sum_{(I)}\sum_{(J)} (w_I-v_I)(w_J-v_J) \right] \\
=& \frac{K}{B^2} \left[ M\left( \frac{2B}{M} - \frac{2B^2}{M^2} \right) + 2{M \choose 2}\left( \frac{B(B-1)}{M(M-1)} + \frac{B^2}{M^2} - \frac{2B^2}{M^2} \right) \right] \\
=& \frac{K}{B^2} \left[ 2B - \frac{2B^2}{M} + B(B-1) - \frac{B^2(M-1)}{M} \right] = K \left[\frac1B - \frac1M \right]
\end{align*}
Since $K$ is assumed to be bounded in the assumptions of Theorem \ref{thm:normal} we see that under large values of $B$ and $M$, i.e., asymptotically the two selection schemes are equivalent.

\subsection{Exploring coverage of the BoostedForest algorithm over a distribution of covariates}

Here we briefly look into how the coverage intervals (the value of the statistic C.C. in Table \ref{table:simresults}) for random forests and the two boosted forest variants behave when considered over a distribution of the covariate space. We are considering the same model as in \cref{sec:simresults}, i.e., $Y = \sum_{i=1}^5 X_i + \epsilon$, where $X \sim U([-1,1]^{15})$ and $\epsilon \sim N(0,1)$. We randomly sampled 1000 points from $U([-1,1]^{15})$ and checked the percentage of times the signal is included in the coverage interval over 1000 iterations of the algorithms. In Figure \ref{fig:covhist} below we take a look ot the histogram of this percentage for different variants and different number of trees (the vertical line marks 95\%).

\begin{figure}[ht]
\centering
\includegraphics[width=.6\textheight]{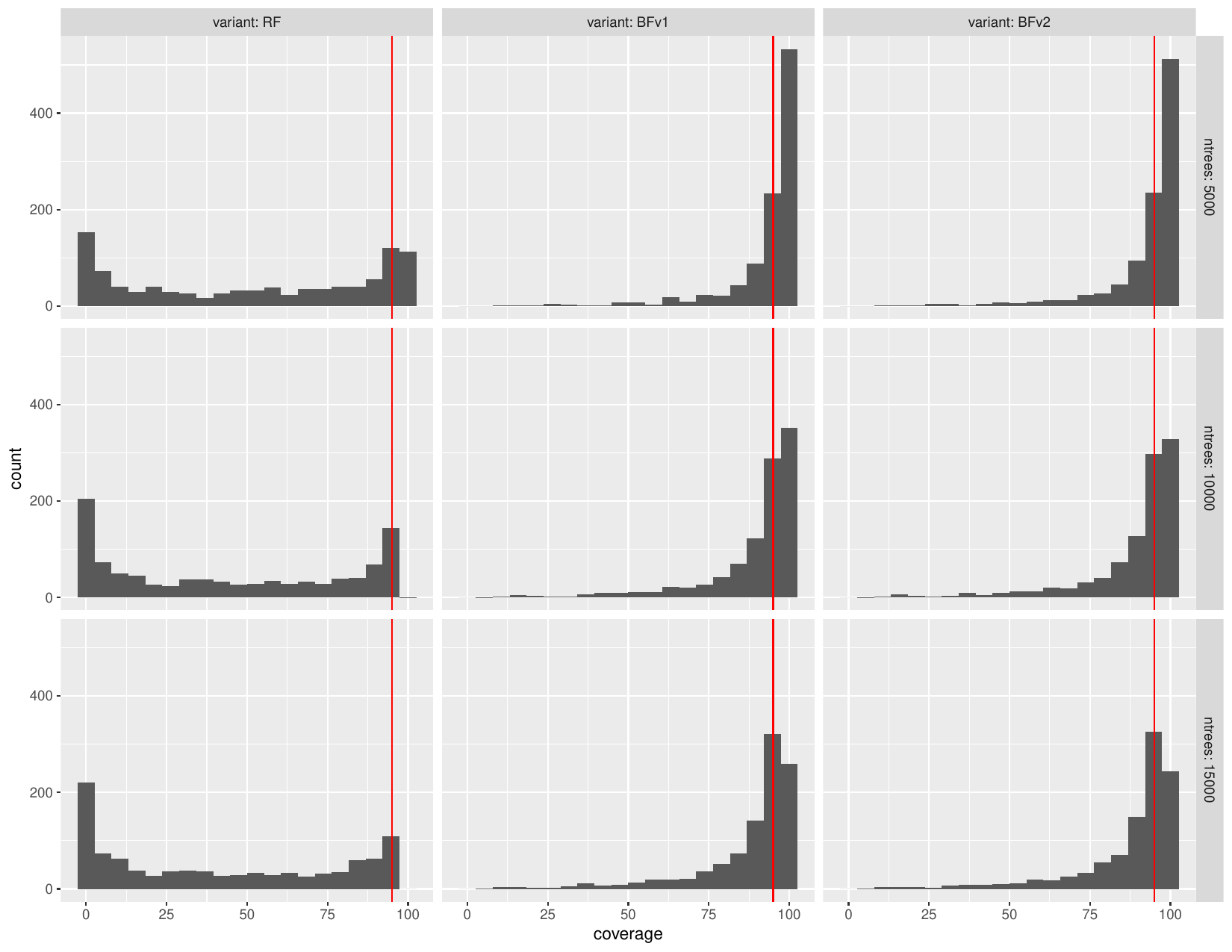}
\caption{\small Histogram of empirical coverage probabilities from 1000 randomly sampled covariate values. Tiled along columns by the randomforest (RF) and both variants of the boostedforest algorithms (BFv1 and BFv2) and along rows by differing number of trees.}
\label{fig:covhist}
\end{figure}

We see that while the base random forest algorithm is not that great at coverage, both the boosted forest algorithms are really good at it. This is to be expected as mentioned before in the paper, since boosting leads to better ``centering'' of the coverage interval thereby increasing the chances of the signal lying inside the interval. We also note that with lower number of trees the coverage intervals are slightly more conservative leading to higher that 95\% coverage but with increasing numbers of trees this problem is mitigated.

We can further explore this with Figure \ref{fig:covsignal} where we plot the coverage probabilities vs the value of the signal. 

\begin{figure}[ht]
\centering
\includegraphics[width=.6\textheight]{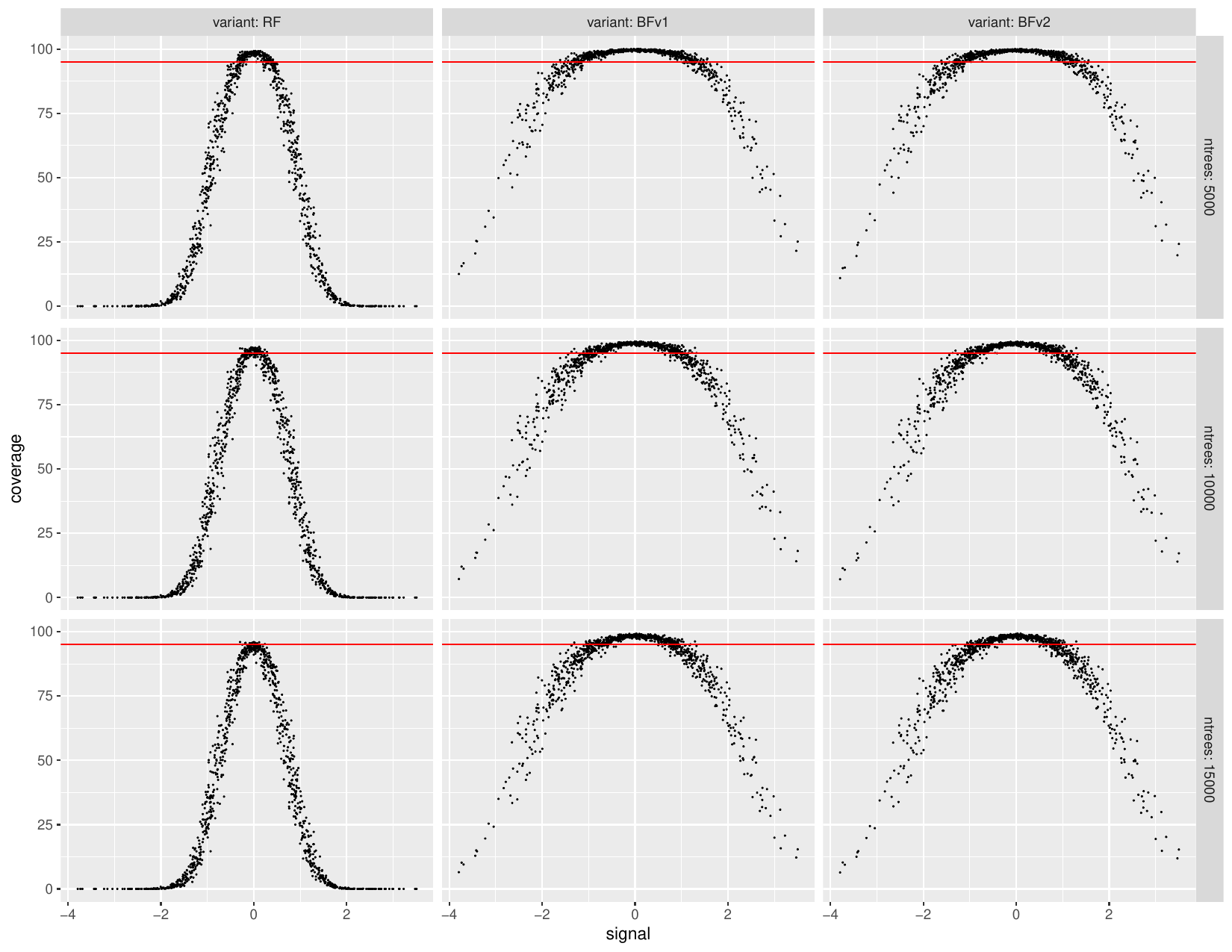}
\caption{\small Plot of empirical coverage probabilities vs the value of the signal for 1000 randomly sampled covariate values. The figure is tiled columnwise by the randomforest (RF) and boostedforest algorithms (BFv1 and BFv2) and along rows by differing number of trees.}
\label{fig:covsignal}
\end{figure}

Notice that for the base random forest the coverage is pretty good for a narrow band of signal values near 0 and dropping pretty quickly on either end of that band. In contrast both variants of boosted forest provide good coverage for a quite bigger band of signal values; this band gets a bit narrower for increasing number of trees since the coverage intervals gets more precise due to lower variance estimates.

\subsection{Performance on Simulated Datasets with High Noise} \label{sec:simresultsnoisy}

In this section we present simulation studies on the performance of the One-Step Boosted Forest algorithm as noise variance changes as noted at the end of \cref{sec:realresults}. The setup and metrics evaluated are almost exactly the same as in \cref{sec:realresults}. Our model now is
$$
Y = \sum_{i=1}^5 x_i + \epsilon, \;\epsilon \sim N(0,\sigma^2)
$$
and the values of $\sigma^2$ can be $\frac12$, $\frac13$ or $\frac14$. We have also added in results from the case $\sigma^2 = 1$ as in \cref{sec:realresults} and Table \ref{table:simresults}. The combined figures are in Table \ref{table:simresultsnoisy}. We only consider Variant II of the boosted forest since the two variants performed similarly and this had lower bias and lower variance. 

Table \ref{table:simresultsnoisy} shows behaviour similar to Table \ref{table:simresults}. We see that performance gets better as the test point moves further away from the origin, the ``center'' of the dataset. The increment of the number of trees in the forest helps the performance as does the decrease in noise - both are as expected. Similar changes can be noticed in the figures for percentage coverage of confidence intervals, i.e., they become worse as the test point moves further away from the origin, the number of trees decreases and the noise increases although in all cases they remain conservative; while the opposite is observed for the bias and the Kolmogorv-Smirnov statistics. Variance estimates are fairly constant for all test points but they of course decrease with both the number of trees and as the noise decreases.

\afterpage{\clearpage
\begin{sidewaystable}[ht]
\setlength{\tabcolsep}{4.5pt}
\centering

\begin{tabular}{|r|r||rrrrr||rrrrr||rrrrr|}
\hline
& & \multicolumn{5}{c||}{$B=5000$} & \multicolumn{5}{c||}{$B=10000$} & \multicolumn{5}{c|}{$B=15000$} \\
\cline{3-7} \cline{8-12} \cline{13-17}
  &   & $p_1$ & $p_2$ & $p_3$ & $p_4$ & $p_5$ & $p_1$ & $p_2$ & $p_3$ & $p_4$ & $p_5$ & $p_1$ & $p_2$ & $p_3$ & $p_4$ & $p_5$\\
\hhline{*{17}{=}} \rule{0pt}{3ex}
 & $\overline{\text{Bias}}$ & -0.0047 & -0.0172 & -0.0175 & -0.0421 & -0.0900 & -0.0057 & -0.0180 & -0.0180 & -0.0433 & -0.0906 & -0.0056 & -0.0178 & -0.0178 & -0.0425 & -0.0896\\ \rule{0pt}{3ex}
 & $\overline{\widehat{V}_{IJ}}$ & 0.0853 & 0.0856 & 0.0846 & 0.0834 & 0.0822 & 0.0692 & 0.0695 & 0.0686 & 0.0671 & 0.0660 & 0.0637 & 0.0641 & 0.0632 & 0.0618 & 0.0606\\ \rule{0pt}{3ex}
 & $\frac{\overline{\widehat{V}_{IJ}}}{V(\widehat{F})}$ & 2.1328 & 2.1757 & 2.1121 & 1.9443 & 1.8344 & 1.7398 & 1.7839 & 1.7061 & 1.5827 & 1.4913 & 1.5968 & 1.6482 & 1.5695 & 1.4611 & 1.3772\\ \rule{0pt}{3ex}
 & K.S. & 0.0943 & 0.1197 & 0.1029 & 0.1415 & 0.1905 & 0.0734 & 0.0993 & 0.0825 & 0.1158 & 0.1829 & 0.0614 & 0.0937 & 0.0730 & 0.1089 & 0.1830\\ \rule{0pt}{3ex}
 & C.C. & 99.5 & 99.1 & 99.9 & 99.0 & 98.1 & 98.7 & 98.4 & 98.9 & 98.1 & 97.0 & 98.1 & 98.2 & 98.6 & 97.4 & 96.2\\ \rule{0pt}{3ex}
\multirow{-6}{*}{\raggedleft\arraybackslash \rotatebox{90}{$\sigma^2=1$}} & P.I. & -70.74 & -13.71 & -0.05 & 52.94 & 74.10 & -69.44 & -12.57 & -0.07 & 53.33 & 74.29 & -70.12 & -12.96 & -0.33 & 53.48 & 74.48\\
\cline{1-17} \rule{0pt}{3ex}

 & $\overline{\text{Bias}}$ & -0.0042 & -0.0118 & -0.0080 & -0.0245 & -0.0661 & -0.0049 & -0.0122 & -0.0082 & -0.0247 & -0.0657 & -0.0045 & -0.0125 & -0.0082 & -0.0250 & -0.0659\\ \rule{0pt}{3ex}
 & $\overline{\widehat{V}_{IJ}}$ & 0.0608 & 0.0607 & 0.0600 & 0.0582 & 0.0568 & 0.0492 & 0.0492 & 0.0484 & 0.0469 & 0.0456 & 0.0454 & 0.0452 & 0.0446 & 0.0432 & 0.0417\\ \rule{0pt}{3ex}
 & $\frac{\overline{\widehat{V}_{IJ}}}{V(\widehat{F})}$ & 2.2480 & 2.3103 & 2.1626 & 2.0332 & 1.9153 & 1.8080 & 1.8855 & 1.7530 & 1.6466 & 1.5476 & 1.6729 & 1.7385 & 1.6213 & 1.5190 & 1.4247\\ \rule{0pt}{3ex}
 & K.S. & 0.0961 & 0.1252 & 0.1056 & 0.1311 & 0.1882 & 0.0726 & 0.1020 & 0.0784 & 0.1104 & 0.1792 & 0.0635 & 0.0966 & 0.0685 & 0.1060 & 0.1743\\ \rule{0pt}{3ex}
 & C.C. & 99.7 & 99.5 & 99.8 & 99.1 & 98.1 & 98.7 & 99.1 & 99.5 & 98.3 & 97.2 & 98.3 & 98.8 & 98.9 & 97.7 & 96.8\\ \rule{0pt}{3ex}
\multirow{-6}{*}{\raggedleft\arraybackslash \rotatebox{90}{$\sigma^2=1/2$}} & P.I. & -44.17 & 9.44 & 18.23 & 64.53 & 81.53 & -44.50 & 9.84 & 18.46 & 64.74 & 81.66 & -44.26 & 9.98 & 18.61 & 64.85 & 81.73\\
\cline{1-17} \rule{0pt}{3ex}

 & $\overline{\text{Bias}}$ & -0.0035 & -0.0102 & -0.0042 & -0.0173 & -0.0567 & -0.0037 & -0.0103 & -0.0038 & -0.0174 & -0.0565 & -0.0035 & -0.0101 & -0.0041 & -0.0178 & -0.0569\\ \rule{0pt}{3ex}
 & $\overline{\widehat{V}_{IJ}}$ & 0.0523 & 0.0522 & 0.0515 & 0.0500 & 0.0484 & 0.0423 & 0.0422 & 0.0416 & 0.0402 & 0.0386 & 0.0390 & 0.0388 & 0.0383 & 0.0369 & 0.0354\\ \rule{0pt}{3ex}
 & $\frac{\overline{\widehat{V}_{IJ}}}{V(\widehat{F})}$ & 2.3098 & 2.4134 & 2.2220 & 2.1071 & 1.9923 & 1.8483 & 1.9459 & 1.7795 & 1.6874 & 1.5853 & 1.7262 & 1.8128 & 1.6606 & 1.5606 & 1.4576\\ \rule{0pt}{3ex}
 & K.S. & 0.1057 & 0.1260 & 0.1032 & 0.1258 & 0.1838 & 0.0764 & 0.1027 & 0.0751 & 0.1068 & 0.1705 & 0.0699 & 0.0953 & 0.0665 & 0.0988 & 0.1692\\ \rule{0pt}{3ex}
 & C.C. & 99.6 & 99.6 & 99.9 & 99.3 & 98.9 & 98.8 & 99.3 & 99.4 & 98.6 & 97.6 & 98.6 & 99.2 & 99.0 & 98.0 & 96.9\\ \rule{0pt}{3ex}
\multirow{-6}{*}{\raggedleft\arraybackslash \rotatebox{90}{$\sigma^2=1/3$}} & P.I. & -29.94 & 20.63 & 27.01 & 69.26 & 84.48 & -31.19 & 20.51 & 26.32 & 69.14 & 84.47 & -30.53 & 21.01 & 26.98 & 69.32 & 84.49\\
\cline{1-17} \rule{0pt}{3ex}

 & $\overline{\text{Bias}}$ & -0.0034 & -0.0092 & -0.0025 & -0.0140 & -0.0518 & -0.0031 & -0.0091 & -0.0021 & -0.0146 & -0.0526 & -0.0035 & -0.0094 & -0.0022 & -0.0147 & -0.0524\\ \rule{0pt}{3ex}
 & $\overline{\widehat{V}_{IJ}}$ & 0.0480 & 0.0478 & 0.0472 & 0.0456 & 0.0438 & 0.0388 & 0.0385 & 0.0381 & 0.0366 & 0.0350 & 0.0358 & 0.0355 & 0.0351 & 0.0336 & 0.0321\\ \rule{0pt}{3ex}
 & $\frac{\overline{\widehat{V}_{IJ}}}{V(\widehat{F})}$ & 2.3211 & 2.4562 & 2.2530 & 2.1364 & 2.0187 & 1.8974 & 2.0031 & 1.8336 & 1.7267 & 1.6147 & 1.7421 & 1.8462 & 1.6850 & 1.5939 & 1.4848\\ \rule{0pt}{3ex}
 & K.S. & 0.1010 & 0.1261 & 0.1019 & 0.1213 & 0.1765 & 0.0808 & 0.1061 & 0.0797 & 0.1017 & 0.1736 & 0.0727 & 0.0933 & 0.0670 & 0.0970 & 0.1655\\ \rule{0pt}{3ex}
 & C.C. & 99.6 & 99.7 & 99.9 & 99.2 & 98.9 & 98.9 & 99.5 & 99.4 & 98.5 & 97.5 & 98.3 & 99.2 & 98.8 & 98.3 & 97.0\\ \rule{0pt}{3ex}
\multirow{-6}{*}{\raggedleft\arraybackslash \rotatebox{90}{$\sigma^2=1/4$}} & P.I. & -24.45 & 26.07 & 31.16 & 71.65 & 85.93 & -22.87 & 26.62 & 31.53 & 71.75 & 85.91 & -23.83 & 26.57 & 31.22 & 71.88 & 85.97\\
\hline
\end{tabular}
\caption{\it Comparison of Variant II of the One-Step Boosted Forest with respect to Random Forests in case of noisy data and a linear signal. The metrics evaluated are the same as in Table \ref{table:simresults} in \cref{sec:simresults}.}
\label{table:simresultsnoisy}
\end{sidewaystable}}

\subsection{Performance on Simulated Datasets with Nonlinear response} \label{sec:simresultsnorm}

We conducted further simulations with a nonlinear response function, namely, the norm. Our training features still come uniformly from $[-1,1]^{15}$ but now our model is
$$
Y = \|x\|_2 + \epsilon, \;\epsilon \sim N(0,\sigma^2)
$$
where we consider different values of $\sigma^2$ in $\{1,5,10,15\}$. The maximum standard deviation value of $\sqrt{15}$ was so chosen since the norm can have maximum value $\sqrt{15}$ in our domain.

We present below two tables with our results. Table \ref{table:simresultsnorm} has the figures for $\sigma^2 = 1$. Most of the patterns from the numbers in Table \ref{table:simresults} can also be observed here. But in this case we should note that the signal has a derivative discontinuity at 0 which is likely to create a region of high bias at 0; moving further from this point should reduce that but we also expect bias at the edges of the covariate range. From the table we see that for all test points $p_i$ the bias decreases significantly for both variants of the One-Step Boosted Forest algorithm when compared to (unboosted) random forests although there is no change in bias as the number of trees in the forest grows. Variance estimates are also seen to be increasing with boosting and decreasing with number of trees; although it doesn't change with distance of the test points from the origin. The Kolmogorov-Smirnov statistics are significantly higher than those in Table \ref{table:simresults}. Also the prediction improvements (i.e., decrease in MSE) are markedly better in this case since bias can decreases significantly by boosting (although it still remains quite high). Variant II improvement is also slightly better than Variant I in most cases. Note that in all cases even though bias is reduced substantially by boosting it is still much higher than the estimated variances. So the 95\% confidence intervals will not be accurate here and were hence not reported in the table.

\afterpage{\clearpage
\begin{sidewaystable}[ht]
\centering

\begin{tabular}{|c|r||ccccc||ccccc||ccccc|}
 \hline\rule{0pt}{3ex}
 & & \multicolumn{5}{c||}{$B = 5000$} & \multicolumn{5}{c||}{$B = 10000$} & \multicolumn{5}{c|}{$B = 15000$}\\ \hhline{*{2}{|~}|*{15}{|-}|}\rule{0pt}{4ex}
 & & $\overline{\text{Bias}}$ & $\overline{\widehat{V}_{IJ}}$ & $\frac{\overline{\widehat{V}_{IJ}}}{V(\widehat{F})}$ & K.S. & P.I.(\%) & $\overline{\text{Bias}}$ & $\overline{\widehat{V}_{IJ}}$ & $\frac{\overline{\widehat{V}_{IJ}}}{V(\widehat{F})}$ & K.S. & P.I.(\%) & $\overline{\text{Bias}}$ & $\overline{\widehat{V}_{IJ}}$ & $\frac{\overline{\widehat{V}_{IJ}}}{V(\widehat{F})}$ & K.S. & P.I.(\%) \\
 \hhline{*{17}{=}}\rule{0pt}{3ex}
 & RF & 1.8074 & 0.0165 & 1.660 & 1.000 & 0.00 & 1.8073 & 0.0119 & 1.196 & 1.000 & 0.00 & 1.8076 & 0.0103 & 1.039 & 1.000 & 0.00\\ \cline{2-2}\rule{0pt}{3ex}
 $p_1$ & BFv1 & 1.5217 & 0.0497 & 1.845 & 1.000 & 28.51 & 1.5208 & 0.0393 & 1.455 & 1.000 & 28.58 & 1.5213 & 0.0357 & 1.329 & 1.000 & 28.56\\ \cline{2-2}\rule{0pt}{3ex}
 & BFv2 & 1.5209 & 0.0472 & 1.755 & 1.000 & 28.58 & 1.5209 & 0.0381 & 1.414 & 1.000 & 28.57 & 1.5214 & 0.0349 & 1.292 & 1.000 & 28.55\\ \hline\rule{0pt}{3ex}

 & RF & 1.4837 & 0.0167 & 1.610 & 1.000 & 0.00 & 1.4836 & 0.0120 & 1.159 & 1.000 & 0.00 & 1.4839 & 0.0105 & 1.008 & 1.000 & 0.00\\ \cline{2-2}\rule{0pt}{3ex}
 $p_2$ & BFv1 & 1.2051 & 0.0503 & 1.782 & 0.995 & 33.06 & 1.2042 & 0.0397 & 1.411 & 0.997 & 33.15 & 1.2047 & 0.0362 & 1.287 & 0.997 & 33.13\\ \cline{2-2}\rule{0pt}{3ex}
 & BFv2 & 1.2044 & 0.0478 & 1.699 & 0.996 & 33.15 & 1.2044 & 0.0385 & 1.368 & 0.997 & 33.13 & 1.2046 & 0.0353 & 1.253 & 0.997 & 33.13\\ \hline\rule{0pt}{3ex}

 & RF & 1.4852 & 0.0166 & 1.625 & 1.000 & 0.00 & 1.4850 & 0.0120 & 1.181 & 1.000 & 0.00 & 1.4852 & 0.0104 & 1.029 & 1.000 & 0.00\\ \cline{2-2}\rule{0pt}{3ex}
 $p_3$ & BFv1 & 1.2068 & 0.0502 & 1.806 & 0.996 & 33.03 & 1.2065 & 0.0395 & 1.434 & 0.997 & 33.06 & 1.2071 & 0.0360 & 1.307 & 0.998 & 33.00\\ \cline{2-2}\rule{0pt}{3ex}
 & BFv2 & 1.2064 & 0.0476 & 1.710 & 0.996 & 33.07 & 1.2067 & 0.0383 & 1.385 & 0.998 & 33.03 & 1.2069 & 0.0352 & 1.276 & 0.998 & 33.02\\ \hline\rule{0pt}{3ex}

 & RF & 1.1779 & 0.0171 & 1.549 & 1.000 & 0.00 & 1.1778 & 0.0123 & 1.117 & 1.000 & 0.00 & 1.1779 & 0.0107 & 0.973 & 1.000 & 0.00\\ \cline{2-2}\rule{0pt}{3ex}
 $p_4$ & BFv1 & 0.9193 & 0.0515 & 1.735 & 0.979 & 37.44 & 0.9189 & 0.0406 & 1.372 & 0.985 & 37.49 & 0.9193 & 0.0369 & 1.252 & 0.988 & 37.46\\ \cline{2-2}\rule{0pt}{3ex}
 & BFv2 & 0.9186 & 0.0488 & 1.640 & 0.980 & 37.53 & 0.9188 & 0.0392 & 1.326 & 0.986 & 37.51 & 0.9188 & 0.0360 & 1.218 & 0.989 & 37.52\\ \hline\rule{0pt}{3ex}

 & RF & 0.8872 & 0.0179 & 1.453 & 1.000 & 0.00 & 0.8872 & 0.0128 & 1.049 & 1.000 & 0.00 & 0.8873 & 0.0112 & 0.913 & 1.000 & 0.00\\ \cline{2-2}\rule{0pt}{3ex}
 $p_5$ & BFv1 & 0.6594 & 0.0536 & 1.630 & 0.881 & 41.51 & 0.6596 & 0.0423 & 1.295 & 0.904 & 41.49 & 0.6597 & 0.0385 & 1.179 & 0.910 & 41.48\\ \cline{2-2}\rule{0pt}{3ex}
 & BFv2 & 0.6593 & 0.0510 & 1.552 & 0.887 & 41.51 & 0.6594 & 0.0410 & 1.250 & 0.906 & 41.50 & 0.6595 & 0.0376 & 1.148 & 0.912 & 41.50\\
 \hline
\end{tabular}
\caption{\it Comparison of the two variants of the One-Step Boosted Forest (shorthands {\normalfont BFv1} and {\normalfont BFv2} for the two variants respectively) with respect to Random Forests (shorthands {\normalfont RF}) in case of a nonlinear signal. The metrics evaluated are the same as in Table \ref{table:simresults} in \cref{sec:simresults} with the exception of percentage coverage of confidence intervals.}
\label{table:simresultsnorm}
\end{sidewaystable}}

The combined figures for all the noise levels $\sigma^2 = 1, 5, 10, 15$ are in Table \ref{table:simresultsnoisynorm} although they are only for Variant II of boosted forests. The performance metrics show similar behaviour here as compared to Table \ref{table:simresultsnoisy}, i.e., it becomes worse as the noise increases but also better as the number of trees increases. So in conclusion the boosted forest algorithm performs best when training dataset is not too noisy, the number of trees in the forests for each stage are high and the test point is far from the ``center'' so it has a moderate amount of bias if predicted by the base random forest. On the other hand, the boosted forest performs worst in a setting where we're using a sparse forest and a very noisy training dataset to predict at a test point which has a very high bias to begin with (boosting might decrease the bias slightly but the increase in variance estimate offsets it).

\afterpage{\clearpage
\begin{sidewaystable}[ht]
\centering
\begin{tabular}{|c|c||ccccc||ccccc||ccccc|}
 \hline\rule{0pt}{3ex}
 $Y|X = x \sim$ & & \multicolumn{5}{c||}{$B = 5000$} & \multicolumn{5}{c||}{$B = 10000$} & \multicolumn{5}{c|}{$B = 15000$}\\ \hhline{*{2}{|~}|*{15}{|-}|}\rule{0pt}{4ex}
 $\NNN(\|x\|_2, \sigma^2)$ & & $\overline{\text{Bias}}$ & $\overline{\widehat{V}_{IJ}}$ & $\frac{\overline{\widehat{V}_{IJ}}}{V(\widehat{F})}$ & K.S. & P.I.(\%) & $\overline{\text{Bias}}$ & $\overline{\widehat{V}_{IJ}}$ & $\frac{\overline{\widehat{V}_{IJ}}}{V(\widehat{F})}$ & K.S. & P.I.(\%) & $\overline{\text{Bias}}$ & $\overline{\widehat{V}_{IJ}}$ & $\frac{\overline{\widehat{V}_{IJ}}}{V(\widehat{F})}$ & K.S. & P.I.(\%) \\
 \hhline{*{17}{=}}\rule{0pt}{3ex}
  & $p_1$ & 1.5209 & 0.0472 & 1.755 & 1.000 & 28.58 & 1.5209 & 0.0381 & 1.414 & 1.000 & 28.57 & 1.5214 & 0.0349 & 1.292 & 1.000 & 28.55\\\rule{0pt}{3ex}
  & $p_2$ & 1.2044 & 0.0478 & 1.699 & 0.996 & 33.15 & 1.2044 & 0.0385 & 1.368 & 0.997 & 33.13 & 1.2046 & 0.0353 & 1.253 & 0.997 & 33.13\\ \rule{0pt}{3ex}
  $\sigma^2 = 1$ & $p_3$ & 1.2064 & 0.0476 & 1.710 & 0.996 & 33.07 & 1.2067 & 0.0383 & 1.385 & 0.998 & 33.03 & 1.2069 & 0.0352 & 1.276 & 0.998 & 33.02\\ \rule{0pt}{3ex}
  & $p_4$ & 0.9186 & 0.0488 & 1.640 & 0.980 & 37.53 & 0.9188 & 0.0392 & 1.326 & 0.986 & 37.51 & 0.9188 & 0.0360 & 1.218 & 0.989 & 37.52\\ \rule{0pt}{3ex}
  & $p_5$ & 0.6593 & 0.0510 & 1.552 & 0.887 & 41.51 & 0.6594 & 0.0410 & 1.250 & 0.906 & 41.50 & 0.6595 & 0.0376 & 1.148 & 0.912 & 41.50\\ \hline\rule{0pt}{3ex}

  & $p_1$ & 1.5088 & 0.2328 & 1.830 & 0.926 & 26.66 & 1.5093 & 0.1865 & 1.452 & 0.942 & 26.55 & 1.5102 & 0.1711 & 1.338 & 0.948 & 26.48\\ \rule{0pt}{3ex}
  & $p_2$ & 1.1946 & 0.2353 & 1.770 & 0.840 & 29.88 & 1.1945 & 0.1883 & 1.409 & 0.859 & 29.82 & 1.1953 & 0.1730 & 1.297 & 0.870 & 29.73\\ \rule{0pt}{3ex}
  $\sigma^2 = 5$ & $p_3$ & 1.1975 & 0.2345 & 1.779 & 0.841 & 29.75 & 1.1976 & 0.1875 & 1.425 & 0.864 & 29.72 & 1.1983 & 0.1721 & 1.316 & 0.874 & 29.69\\ \rule{0pt}{3ex}
  & $p_4$ & 0.9114 & 0.2393 & 1.705 & 0.712 & 31.72 & 0.9114 & 0.1920 & 1.371 & 0.740 & 31.77 & 0.9107 & 0.1762 & 1.268 & 0.749 & 31.88\\ \rule{0pt}{3ex}
  & $p_5$ & 0.6489 & 0.2484 & 1.605 & 0.543 & 30.69 & 0.6481 & 0.1996 & 1.303 & 0.567 & 30.92 & 0.6488 & 0.1832 & 1.196 & 0.581 & 30.81\\ \hline\rule{0pt}{3ex}

  & $p_1$ & 1.5001 & 0.4647 & 1.819 & 0.792 & 24.16 & 1.5017 & 0.3724 & 1.459 & 0.821 & 24.08 & 1.5030 & 0.3415 & 1.345 & 0.831 & 23.95\\ \rule{0pt}{3ex}
  & $p_2$ & 1.1868 & 0.4692 & 1.764 & 0.675 & 25.93 & 1.1886 & 0.3759 & 1.413 & 0.702 & 25.76 & 1.1901 & 0.3449 & 1.302 & 0.717 & 25.61\\ \rule{0pt}{3ex}
  $\sigma^2 = 10$ & $p_3$ & 1.1919 & 0.4681 & 1.767 & 0.687 & 25.72 & 1.1937 & 0.3742 & 1.433 & 0.708 & 25.65 & 1.1950 & 0.3436 & 1.319 & 0.722 & 25.51\\ \rule{0pt}{3ex}
  & $p_4$ & 0.9065 & 0.4772 & 1.723 & 0.551 & 25.21 & 0.9053 & 0.3823 & 1.377 & 0.572 & 25.18 & 0.9079 & 0.3512 & 1.273 & 0.580 & 24.96\\ \rule{0pt}{3ex}
  & $p_5$ & 0.6434 & 0.4966 & 1.621 & 0.415 & 18.24 & 0.6412 & 0.3985 & 1.314 & 0.435 & 18.69 & 0.6434 & 0.3659 & 1.201 & 0.442 & 18.33\\ \hline\rule{0pt}{3ex}

  & $p_1$ & 1.5005 & 0.6956 & 1.809 & 0.702 & 21.09 & 1.4988 & 0.5585 & 1.468 & 0.732 & 21.43 & 1.5003 & 0.5111 & 1.350 & 0.744 & 21.34\\ \rule{0pt}{3ex}
  & $p_2$ & 1.1874 & 0.7020 & 1.750 & 0.593 & 21.30 & 1.1863 & 0.5643 & 1.420 & 0.614 & 21.65 & 1.1870 & 0.5166 & 1.306 & 0.628 & 21.60\\ \rule{0pt}{3ex}
  $\sigma^2 = 15$ & $p_3$ & 1.1927 & 0.6988 & 1.778 & 0.591 & 21.39 & 1.1935 & 0.5613 & 1.440 & 0.615 & 21.58 & 1.1938 & 0.5152 & 1.327 & 0.626 & 21.61\\ \rule{0pt}{3ex}
  & $p_4$ & 0.9081 & 0.7163 & 1.728 & 0.468 & 18.13 & 0.9056 & 0.5744 & 1.392 & 0.481 & 18.55 & 0.9074 & 0.5270 & 1.279 & 0.491 & 18.40\\ \rule{0pt}{3ex}
  & $p_5$ & 0.6414 & 0.7448 & 1.630 & 0.352 & 6.86 & 0.6403 & 0.5995 & 1.322 & 0.370 & 7.29 & 0.6408 & 0.5483 & 1.211 & 0.371 & 7.35\\ 
 \hline
\end{tabular}
\caption{\it Comparison of Variant II of the One-Step Boosted Forest with respect to Random Forests in case of a dataset with nonlinear signal and varying levels of noise. The metrics evaluated are the same as in Table \ref{table:simresults} in \cref{sec:simresults}.}
\label{table:simresultsnoisynorm}
\end{sidewaystable}}

\subsection{Comparing Out-of-bag vs Inbag Residuals vs Bootstrapped Forests} \label{sec:oobvsother}

In light of the results in Theorem 1 of \cite{efron2014estimation} and Lemma \ref{lem:covestisIJ} above we can see that a different construction of the One-Step Boosted Forest would result in a variance estimate very similar to the ones in \cref{sec:varxest}. In the construction of the One-Step Boosted Forest (\cref{sec:basicalgo}) we used subsampling without replacement to construct the individual trees in the forest. Instead of that we could also use full bootstrapped resampling to do the same. We shall focus only on the case where we take different bootstrap resamples in the two stages, i.e., the case analogous to Variant II of the One-Step Boosted Forest. In that case the same calculations as in \eqref{eqn:bfindvar} and \eqref{eqn:bfindvarest} will hold. The construction of the One-Step Boosted Forests by bootstrap resampling can be detailed as follows.

Assume that we have $B$ trees in the forests in each stage. Now the total number of possible bootstrap resamples is $n^n$. So we can define the first and second stage forests by
\begin{align*}
\hat{F}^{(j)}(x) &= \frac1B \sum_{b=1}^B T(x; Z_{I_b^{(j)}}^{(j)}) = \frac1{n^n} \sum_{I \in \BBB_n} w_I^{(j)} T(x; Z_{I}^{(j)}),\; j = 0,1,
\end{align*}
where the values of $j$ correspond to each stage, each $w_I^{(j)}$ is a binary random variable taking the value $n^n/B$ with probability $B/n^n$ and the value 0 with probability $1 - B/n^n$ and $\BBB_n$ is the set of all indices under full bootstrap resampling. Then $c = Var(w_I^{(j)}) = n^n/B$.

So if we follow exactly the same arguments as in \eqref{eqn:bfindvar} and \eqref{eqn:bfindvarest} by replacing ${n \choose k}$ with $n^n$, and also applying Theorem 1 of \cite{efron2014estimation} and Lemma \ref{lem:covestisIJ}  we get that the variance estimator for Variant II the One-Step Boosted Forest constructed with bootstrap resamples is exactly the same formula as \eqref{eqn:bfindvarest}. The only difference being that $N_{i,b}^{(j)}$ are no longer binary variables but can take any non-negative integer value corresponding to the number of times the $i$\textsuperscript{th} datapoint is included in the $b$\textsuperscript{th} bootstrap resample during the construction of the $j$\textsuperscript{th} stage.

Also note the two following ways in which we can construct the response for fitting the boosted stage of the forest.
\begin{enumerate}[(a)]
\item taking out-of-bag residuals as response for the second stage (this is the default way we've used is all previous tables in this paper), or
\item taking inbag residuals as response for the second stage
\end{enumerate}
We had discussed this briefly at the beginning of \cref{sec:results}. We combine these two ways of extracting residuals with two ways of constructing the trees
\begin{enumerate}[(a)]
\item using independent subsamples in both stages of the forests (Variant II of the boosted forest algorithm)
\item using independent full bootstrapped resamples in both stages of the forests as we discussed above.
\end{enumerate}

In Table \ref{table:simresultsboot} we present a comparison using simulated datasets of these four new methods against a single random forest. It is also believed that a single forest with full bootstrap resamples for each tree performs better than one with subsamples and so we have included some data regarding that comparison as well marked with the column ``RFboot". The setup is exactly the same as the one that leads to the numbers in Table \ref{table:simresultsnorm} but we consider only the case where we have $B = 15000$ trees in each forest. 

\begin{table}[ht]
\centering
\begin{tabular}{|c|c||cccccc|}
 \hline\rule{0pt}{4ex}
 & & RF & RFboot & BFv2  & BFv2    & BFboot & BFboot  \\
 & &    &        & (oob) & (inbag) & (oob)  & (inbag) \\
 \hhline{*{8}{=}}\rule{0pt}{3ex}

 \multirow{5}{*}{$p_1$} & $\overline{\text{Bias}}$ & 1.8076 & 1.6575 & 1.5214 & 1.5739 & 1.3224 & 1.5247\\ \rule{0pt}{3ex}
 & $\overline{\widehat{V}_{IJ}}$ & 00.0103 & 0.0353 & 0.0349 & 0.0288 & 0.0948 & 0.0508\\ \rule{0pt}{3ex}
 & $\frac{\overline{\widehat{V}_{IJ}}}{V(\widehat{F})}$ & 1.0389 & 1.7607 & 1.2923 & 1.2444 & 1.7152 & 1.6116\\ \rule{0pt}{3ex}
 & K.S. & 1.0000 & 1.0000 & 1.0000 & 1.0000 & 0.9800 & 0.9978\\ \rule{0pt}{3ex}
 & P.I. & 0 & 15.57 & 28.55 & 23.71 & 44.96 & 28.11\\ \hline\rule{0pt}{3ex}

 \multirow{5}{*}{$p_2$} & $\overline{\text{Bias}}$ & 1.4839 & 1.3373 & 1.2046 & 1.2559 & 1.0106 & 1.2077\\ \rule{0pt}{3ex}
 & $\overline{\widehat{V}_{IJ}}$ & 0.0105 & 0.0359 & 0.0353 & 0.0292 & 0.0959 & 0.0515\\ \rule{0pt}{3ex}
 & $\frac{\overline{\widehat{V}_{IJ}}}{V(\widehat{F})}$ & 1.0084 & 1.7132 & 1.2526 & 1.2052 & 1.6575 & 1.5631\\ \rule{0pt}{3ex}
 & K.S. & 1.0000 & 0.9980 & 0.9971 & 0.9992 & 0.9254 & 0.9932\\ \rule{0pt}{3ex}
 & P.I. & 0 & 18.22 & 33.13 & 27.61 & 51.22 & 32.58\\ \hline\rule{0pt}{3ex}

 \multirow{5}{*}{$p_3$} & $\overline{\text{Bias}}$ & 1.4852 & 1.3402 & 1.2069 & 1.2580 & 1.0160 & 1.2118\\ \rule{0pt}{3ex}
 & $\overline{\widehat{V}_{IJ}}$ & 0.0104 & 0.0357 & 0.0352 & 0.0290 & 0.0956 & 0.0512\\ \rule{0pt}{3ex}
 & $\frac{\overline{\widehat{V}_{IJ}}}{V(\widehat{F})}$ & 1.0293 & 1.7504 & 1.2762 & 1.2327 & 1.6833 & 1.5920\\ \rule{0pt}{3ex}
 & K.S. & 1.0000 & 0.9990 & 0.9977 & 0.9995 & 0.9263 & 0.9934\\ \rule{0pt}{3ex}
 & P.I. & 0 & 18.03 & 33.02 & 27.52 & 50.86 & 32.28\\ \hline\rule{0pt}{3ex}

 \multirow{5}{*}{$p_4$} & $\overline{\text{Bias}}$ & 1.1779 & 1.0448 & 0.9188 & 0.9663 & 0.7449 & 0.9258\\ \rule{0pt}{3ex}
 & $\overline{\widehat{V}_{IJ}}$ & 0.0107 & 0.0369 & 0.0360 & 0.0297 & 0.0982 & 0.0528\\ \rule{0pt}{3ex}
 & $\frac{\overline{\widehat{V}_{IJ}}}{V(\widehat{F})}$ & 0.9729 & 1.7008 & 1.2180 & 1.1736 & 1.6552 & 1.5569\\ \rule{0pt}{3ex}
 & K.S. & 1.0000 & 0.9970 & 0.9891 & 0.9957 & 0.8130 & 0.9718\\ \rule{0pt}{3ex}
 & P.I. & 0 & 20.39 & 37.52 & 31.43 & 56.08 & 36.29\\ \hline\rule{0pt}{3ex}

 \multirow{5}{*}{$p_5$} & $\overline{\text{Bias}}$ & 0.8873 & 0.7692 & 0.6595 & 0.7008 & 0.5052 & 0.6641\\ \rule{0pt}{3ex}
 & $\overline{\widehat{V}_{IJ}}$ & 0.0112 & 0.0389 & 0.0376 & 0.0311 & 0.1028 & 0.0556\\ \rule{0pt}{3ex}
 & $\frac{\overline{\widehat{V}_{IJ}}}{V(\widehat{F})}$ & 0.9133 & 1.5314 & 1.1483 & 1.1055 & 1.4944 & 1.4049\\ \rule{0pt}{3ex}
 & K.S. & 1.0000 & 0.9627 & 0.9120 & 0.9515 & 0.6247 & 0.8642\\ \rule{0pt}{3ex}
 & P.I. & 0 & 22.82 & 41.50 & 35.06 & 59.48 & 39.89\\ \hline
\end{tabular}
\caption{\it Comparison of the different ways of constructing Variant II of the One-Step Boosted Forest with respect to Random Forests. The metrics evaluated are the same as in Table \ref{table:simresults} in \cref{sec:simresults}.}
\label{table:simresultsboot}
\end{table}

We see that all the cases gives us significant improvement over the base algorithm of random forest. Looking at the performance improvement metric (decrease in MSE) the single forest with full bootstrap resamples do not perform as well as any of the boosted forest variants. Out of the other variants the boosted forest constructed with out-of-bag residuals and full bootstrapped resamples perform the best followed by the one with out-of-bag residuals but subsamples. The former also reduces the bias the most but its variance is very high thereby reducing its overall performance. It is also computationally more expensive than using subsamples. The boosted forest with inbag residuals has the lowest variance as hypothesised at the beginning of \cref{sec:results} but it fails to reduce the bias as much as the other methods and hence it performs the worst among the 4 variants.

We have not been able to extend the logic in \cref{sec:asympnormal} and \cref{sec:proofnormal} to show that the estimate constructed with bootstrap resamples follow normality. Actually the unusually high values for the Kolmogorov-Smirnov statistics (K.S.) in Table \ref{table:simresultsboot} shows that asymptotic normality likely doesn't hold true.

In Table \ref{table:realresultsboot} we can see the results of this comparison using the same 11 real datasets from the UCI repository and the same setup and parameters as used in \cref{sec:realresults}. For this table we have dropped the ``RFboot" and "BFboot (inbag)" comparisons.

\begin{table}[ht]
\centering
\begin{tabular}{|l||ccc||cccc|}
  \hline\rule{0pt}{3ex}
  \multirow{3}{*}{Dataset} & BFoob & BFinbag & BFboot & RF & BFoob & BFinbag & BFboot \\ \cline{2-8}\rule{0pt}{3ex}
  & \multicolumn{3}{c||}{\multirow{2}{*}{Improvement}} & \multicolumn{4}{c|}{PI length} \\ \rule{0pt}{3ex}
  & \multicolumn{3}{c||}{} & \multicolumn{4}{c|}{PI Coverage} \\
  \hline\hline\rule{0pt}{3ex}
  \multirow{2}{*}{Yacht-hydrodynamics} & \multirow{2}{*}{82.04} & \multirow{2}{*}{76.71} & \multirow{2}{*}{93.60} & 14.25 & 15.22 & 15.10 & 18.53 \\ \rule{0pt}{3ex}
  & & & & 91.33 & 99.00 & 99.33 & 100.00 \\
  \hline\rule{0pt}{3ex}
  \multirow{2}{*}{BikeSharing-hour} & \multirow{2}{*}{73.99} & \multirow{2}{*}{70.94} & \multirow{2}{*}{89.10} & 1.48 & 1.50 & 1.50 & 2.99 \\ \rule{0pt}{3ex}
  & & & & 100.00 & 100.00 & 100.00 & 100.00 \\
  \hline\rule{0pt}{3ex}
  \multirow{2}{*}{Concrete} & \multirow{2}{*}{52.20} & \multirow{2}{*}{47.29} & \multirow{2}{*}{69.54} & 25.29 & 27.33 & 26.80 & 39.37 \\ \rule{0pt}{3ex}
  & & & & 96.02 & 98.93 & 98.64 & 99.81 \\
  \hline\rule{0pt}{3ex}
  \multirow{2}{*}{Airfoil} & \multirow{2}{*}{43.65} & \multirow{2}{*}{41.40} & \multirow{2}{*}{60.96} & 14.21 & 15.25 & 15.11 & 22.03 \\ \rule{0pt}{3ex}
  & & & & 94.33 & 99.07 & 99.20 & 99.93 \\
  \hline\rule{0pt}{3ex}
  \multirow{2}{*}{Boston-housing} & \multirow{2}{*}{26.22} & \multirow{2}{*}{21.86} & \multirow{2}{*}{31.74} & 0.61 & 0.64 & 0.63 & 0.82 \\ \rule{0pt}{3ex}
  & & & & 95.80 & 97.00 & 96.20 & 99.80 \\
  \hline\rule{0pt}{3ex}
  \multirow{2}{*}{Auto-mpg} & \multirow{2}{*}{20.79} & \multirow{2}{*}{19.85} & \multirow{2}{*}{30.28} & 11.40 & 11.86 & 11.78 & 14.56 \\ \rule{0pt}{3ex}
  & & & & 93.85 & 95.90 & 95.90 & 98.72 \\
  \hline\rule{0pt}{3ex}
  \multirow{2}{*}{Wine-quality-white} & \multirow{2}{*}{11.42} & \multirow{2}{*}{9.95} & \multirow{2}{*}{22.91} & 3.34 & 3.96 & 3.77 & 7.31 \\ \rule{0pt}{3ex}
  & & & & 98.45 & 99.45 & 99.28 & 100.00 \\
  \hline\rule{0pt}{3ex}
  \multirow{2}{*}{Parkinsons} & \multirow{2}{*}{8.09} & \multirow{2}{*}{7.14} & \multirow{2}{*}{13.76} & 35.63 & 42.35 & 40.45 & 87.76 \\ \rule{0pt}{3ex}
  & & & & 99.71 & 99.95 & 99.90 & 100.00 \\
  \hline\rule{0pt}{3ex}
  \multirow{2}{*}{Wine-quality-red} & \multirow{2}{*}{7.45} & \multirow{2}{*}{6.70} & \multirow{2}{*}{15.98} & 2.59 & 2.85 & 2.78 & 4.46 \\ \rule{0pt}{3ex}
  & & & & 95.91 & 97.48 & 97.23 & 99.94 \\
  \hline\rule{0pt}{3ex}
  \multirow{2}{*}{SkillCraft} & \multirow{2}{*}{4.30} & \multirow{2}{*}{4.01} & \multirow{2}{*}{2.30} & 4.37 & 5.00 & 4.81 & 9.53 \\ \rule{0pt}{3ex}
  & & & & 98.74 & 99.52 & 99.40 & 99.97 \\
  \hline\rule{0pt}{3ex}
  \multirow{2}{*}{Communities} & \multirow{2}{*}{3.05} & \multirow{2}{*}{2.95} & \multirow{2}{*}{2.12} & 0.62 & 0.68 & 0.66 & 1.12 \\ \rule{0pt}{3ex}
  & & & & 96.88 & 98.09 & 97.99 & 99.55 \\
  \hline
\end{tabular}
\caption{\it Comparison of the three ways of constructing Variant II of the One-Step Boosted Forest. We use shorthands for PI = Prediction Interval \eqref{eqn:predint}, RF = Random Forests, BFoob, BFinbag and BFboot for BFv2 (oob), BFv2 (inbag) and BFboot (oob) respectively from Table \ref{table:simresultsboot}. The Improvement and PI Coverage figures are in percentages.}
\label{table:realresultsboot}
\end{table}

We see that the results are similar to that from the simulation dataset. The boosted forest constructed with bootstrapped resamples perform the best in all the cases. But it has higher variance estimate which contributes to the higher prediction interval length and coverage. Thus the prediction interval given by that method might be too conservative. This method is also computationally extremely expensive. The boosted forest with inbag residuals has a low variance as hypothesised near the beginning of \cref{sec:results} but it also does not reduce bias more than the other 2 methods leading to its bad performance.

In our opinion the boosted forest with out-of-bag residuals strikes a good balance in terms of decent performance. It also has a variance estimate that isn't as low as the boosted forest with inbag residuals but also isn't as high as the boosted forest constructed with bootstrapped resamples, leading to a wide but not too conservative prediction interval. Also it is computationally competitive with the boosted forest with inbag residuals but much cheaper compared to the boosted forest constructed with bootstrapped resamples.


\end{document}